\documentclass{article}

\PassOptionsToPackage{numbers, compress}{natbib}
\usepackage[preprint]{neurips_2026}

\makeatletter
\renewcommand{\@toptitlebar}{%
  \vspace{-20pt}
  \par\noindent
  \begin{minipage}{\textwidth}
    \includegraphics[height=1.0cm]{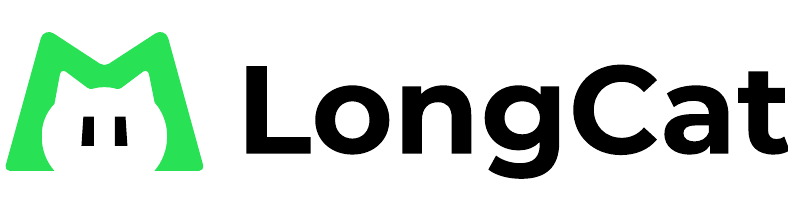}%
    \hfill
    \includegraphics[height=1.5cm]{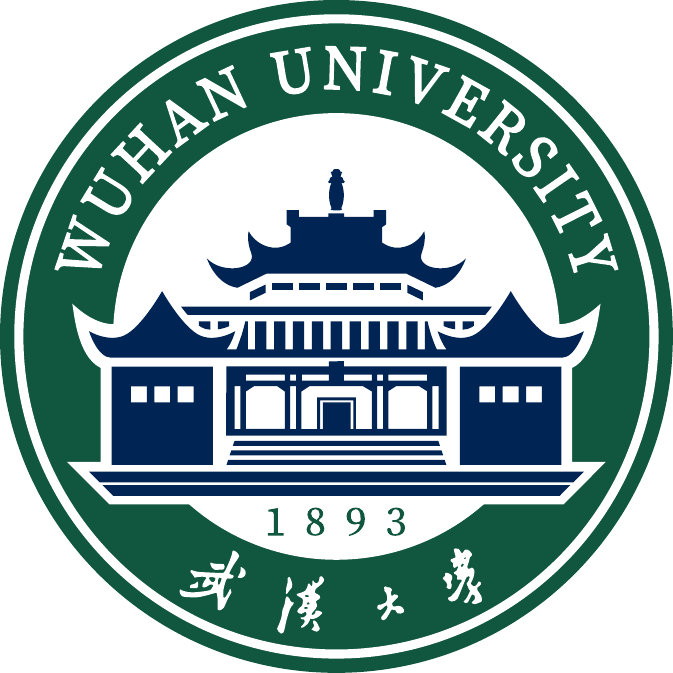}%
  \end{minipage}
  \par\vspace{5pt}
  \hrule height 1pt
  \vskip .25in
}
\makeatother

\usepackage[colorlinks=true, linkcolor=blue, citecolor=blue]{hyperref}
\usepackage[utf8]{inputenc} 
\usepackage[T1]{fontenc}    
\usepackage{hyperref}       
\usepackage{url}            
\usepackage{booktabs}       
\usepackage{amsfonts}       
\usepackage{nicefrac}       
\usepackage{microtype}      
\usepackage{xcolor}         
\usepackage {soul}
\usepackage {color, xcolor}
\definecolor{forestgreen}{RGB}{79,173,91}
\definecolor{orange}{RGB}{238,205,180}
\definecolor{purple}{RGB}{208,196,221}
\usepackage{inconsolata}
\usepackage{multirow}
\usepackage{color}
\usepackage{xcolor}
\usepackage{booktabs}
\usepackage{graphicx}
\usepackage{float}
\usepackage{colortbl}
\usepackage{url}
\usepackage{arydshln}
\usepackage{pifont}
\usepackage{enumitem}
\usepackage{wrapfig}
\usepackage{amssymb}
\usepackage{mathtools}
\usepackage{amsthm}
\usepackage{microtype}
\usepackage{subfigure}
\usepackage{booktabs}
\usepackage{tcolorbox}
\usepackage{tabularx}
\usepackage{array}
\usepackage{amsmath} 
\usepackage{algorithm} 
\usepackage{algpseudocode} 

\newtheorem{observation}{Observation}
\newtheorem{theorem}{Theorem}
\newtheorem{lemma}{Lemma}
\newtheorem{proposition}{Proposition}

\newtheorem{assumption}{Assumption}

\title{When to Stop Reusing: Dynamic Gradient Gating for Sample-Efficient RLVR}

\author{
  \textbf{Yuchun Miao$^{1,2}$}\thanks{Work done during an internship at Meituan Longcat Team.},\:\:
  \textbf{Sen Zhang$^{3}$,}\:\:
  \textbf{Yuqi Zhang$^{1}$,}\:\:
  \textbf{Yaorui Shi$^{4}$,}\:\:
  \\
  \textbf{Qi Gu$^{2\mspace{2mu}\dagger}$,}\:\:
  \textbf{Xunliang Cai$^{2}$,}\:\:
  \textbf{Lefei Zhang$^{1}$}\thanks{Correspondence to Qi Gu <guqi03@meituan.com> and Lefei Zhang <zhanglefei@whu.edu.cn>} \\
  \vspace{-2mm} \\
  \fontsize{8.6pt}{11pt}\selectfont $^{1}$National Engineering Research Center for Multimedia Software, School of Computer Science, Wuhan University, \\
  \fontsize{8.6pt}{11pt}\selectfont $^{2}$Meituan Longcat Team,
  \fontsize{8.6pt}{11pt}\selectfont $^{3}$The University of Sydney,
  \fontsize{8.6pt}{11pt}\selectfont $^{4}$University of Science and Technology of China
}

\begin{document}

\maketitle

\begin{abstract}

Reinforcement Learning with Verifiable Rewards (RLVR) has become the dominant paradigm for advanced reasoning in Large Language Models (LLMs), but rollout samples are expensive to obtain, making sample efficiency a critical bottleneck. A natural remedy is to reuse each rollout batch for multiple gradient updates, a standard practice in classical RL. Yet in RLVR, this amplifies policy shift, leading to severe performance degradation. Detecting the onset of degradation early enough to stop reuse remains an open and challenging problem. We close this gap by identifying the \textit{Disproportionate Weight Divergence (DWD)} phenomenon: performance degradation is synchronized with a sharp surge in the \texttt{lm\_head} weight change, while intermediate layers remain stable. Empirically, we verify that DWD emerges consistently across diverse LLMs and tasks. Theoretically, we prove that (i) harmful gradients concentrate at the \texttt{lm\_head} while intermediate layers are structurally attenuated, and (ii) the \texttt{lm\_head} gradient norm lower-bounds the policy divergence. These results establish the \texttt{lm\_head} gradient norm as a principled, real-time signal of catastrophic policy shift. Guided by this insight, we propose \textit{Dynamic Gradient Gating (DGG)}, a lightweight intervention that monitors the \texttt{lm\_head} gradient norm in real time and intercepts harmful gradients before they corrupt the optimizer. DGG consistently matches or exceeds the standard single-use baseline, achieving up to $2.93\times$ sample efficiency and $2.14\times$ wall-clock speedup across math, ALFWorld, WebShop, and search-augmented QA tasks.

\end{abstract}

\section{Introduction}
\label{sec:introduction}
Reinforcement Learning with Verifiable Rewards (RLVR) has become the key driver for unlocking advanced reasoning in Large Language Models (LLMs), powering recent breakthroughs in long-horizon mathematics, code, and agentic tasks~\cite{shao2024deepseekmath, guo2025deepseek, yu2025dapo}. However, this powerful paradigm introduces a severe computational bottleneck~\cite{wu2025llamarl, team2025introducing}. In modern RLVR pipelines, rollout generation dominates the training budget, often consuming over 80\% of post-training GPU hours~\cite{arnal2026efficient}. As models scale and tasks demand longer horizons, improving the sample efficiency of rollout utilization has emerged as a central prerequisite for sustainable training.

A natural approach to improving sample efficiency is \textit{sample reuse}, which performs multiple gradient updates on each fresh rollout batch. This technique is well-established in policy-gradient RL~\cite{schulman2017proximal, schulman2015trust} and has driven substantial efficiency gains in classical domains.  Unfortunately, as extensively observed in recent RLVR systems, naive sample reuse exacerbates the policy shift between the updating policy and the behavior policy, triggering catastrophic \textit{training 
collapse}~\cite{zheng2025group, yang2025sspo, zheng2026prosperity,liang2026squeeze,baroian2026prompt}—manifesting as severe \textit{performance degradation}. Consequently, production pipelines have overwhelmingly reverted to a conservative \textit{single-use rollout regime}, where each rollout is strictly discarded after one update, severely underutilizing each sample despite its steep rollout cost. Motivated by this challenge, we aim to investigate the following research question: 

\begin{tcolorbox}[
  colback=gray!3,
  colframe=black,
  arc=4pt, left=5pt, right=5pt, top=2pt, bottom=2pt,
  boxrule=0.6pt,fontupper=\itshape
]
\textit{How can we reliably detect the underlying signals of policy shift in real time—before it triggers training collapse—to unlock the full efficiency gains of sample reuse?}
\end{tcolorbox}



To answer this question, we begin with an empirical investigation into the microscopic dynamics of training collapse induced by sample reuse. Our profiling reveals a structural asymmetry that we term the \textit{Disproportionate Weight Divergence (DWD)} phenomenon: \textbf{performance degradation is synchronized with a sharp surge in the \texttt{lm\_head} weight change, while all intermediate layers remain stable}. This observation isolates the \texttt{lm\_head} as the structural locus of training collapse, providing a precise target for detecting early-warning signals. This raises a natural theoretical question: \textit{why is this weight divergence localized to the \texttt{lm\_head}, and why does its surge faithfully track the catastrophic policy shift that drives training collapse?}

Two structural properties of the per-layer RL gradient jointly explain the DWD phenomenon. First, we show that \textbf{harmful gradients under sample reuse concentrate at the \texttt{lm\_head}}, while intermediate layers are shielded by downstream Jacobian projections (Theorem~\ref{thm:asymmetry}); the resulting gradient ratio is bounded by an architectural constant empirically below $10^{-1}$ at the median across diverse widely-used LLMs (Table~\ref{tab:empirical_constants}). Second, we prove that \textbf{the \texttt{lm\_head} gradient norm lower-bounds the empirical Pearson $\chi^2$ divergence} between the updating and behavior policies (Theorem~\ref{thm:chi2_bound}), establishing its surge as a provable indicator of catastrophic policy shift. Together, these results provide a complete theoretical foundation of the DWD phenomenon.

Beyond explaining why training collapse occurs, our joint empirical and 
theoretical analyses establish the \texttt{lm\_head} gradient norm as a 
mechanistically faithful, real-time indicator of catastrophic policy shift. 
Guided by this insight, we propose \textit{Dynamic Gradient Gating (DGG)}, 
a lightweight intervention that monitors this localized signal via a 
dynamic Z-score test. Once an anomalous surge is detected, DGG 
\textbf{discards the harmful gradient} before it enters the Adam optimizer 
and \textbf{terminates the reuse loop}, preventing both the model weights 
and optimizer state from being corrupted.

We demonstrate the widespread presence of the DWD phenomenon across \textbf{six LLMs} from Qwen~\cite{yang2025qwen3} and Llama~\cite{touvron2023llama} families (1.5B--8B parameters) and \textbf{four diverse task}, covering mathematical reasoning and agentic tasks (WebShop, ALFWorld, and search-augmented QA). Across \textbf{ten benchmarks}, DGG matches or exceeds the strict single-use baseline, achieving up to \textbf{2.93$\times$} sample efficiency and \textbf{2.14$\times$} wall-clock speedup. These results confirm that the efficiency potential of sample reuse can be fully unlocked with a principled, lightweight intervention. Our main contributions are as follows:

$\bullet$ \textbf{Empirical Discovery.} We identify the DWD phenomenon, where sample reuse-induced performance degradation is structurally synchronized with a localized weight divergence at the \texttt{lm\_head}.

$\bullet$ \textbf{Theoretical Analysis.} We theoretically account for DWD by proving that harmful gradients under sample reuse concentrate at the \texttt{lm\_head}, and that its gradient norm lower-bounds policy divergence.

$\bullet$ \textbf{Algorithm.} We propose DGG, a lightweight mechanism that monitors the \texttt{lm\_head} gradient norm via a Z-score test and intercepts harmful gradients before they corrupt the optimizer.

$\bullet$ \textbf{Experiments.} We validate the broad presence of DWD across diverse LLMs and tasks, and demonstrate that DGG achieves up to 2.93$\times$ sample efficiency without sacrificing final performance.

\section{Related Work}
\textbf{Reinforcement Learning with Verifiable Rewards (RLVR).} RLVR is the dominant paradigm for unlocking LLM reasoning~\cite{guo2025deepseek,yu2025dapo,shao2024deepseekmath}, yet its rollout stage severely dominates training time~\cite{zheng2026act,zheng2026prosperity,zhang2026prorl}. Concurrently, scaling rollout exploration remains highly effective for improving performance across diverse tasks~\cite{li2025treepo,xing2026lookahead,dong2026agentic,cao2025skyrl}. Given that these rollouts are both \textit{expensive and highly valuable}, improving the sample efficiency of rollout utilization—specifically, fully extracting the latent learning value from each generated sample—has emerged as an urgent bottleneck for RLVR scalability.

\textbf{Sample-Efficient RLVR.} Existing efforts to improve sample efficiency target the \textit{rollout stage} by reshaping rollout distributions via experience replay~\cite{arnal2026efficient,sun2026improving,wang2025eframe,zhan2026exgrpo} or adaptive allocation based on prompt difficulty~\cite{liao2025enhancing,wang2025eframe,li2025knapsack} and variance signals~\cite{zheng2026act,zhang2026train,nguyen2026adaptive}. Crucially, \textit{both paradigms still operate strictly within the single-use rollout regime}, performing only one gradient update per batch.

\textbf{Our Focus.} In contrast to the aforementioned works targeting the \textit{rollout stage}, we pioneer an orthogonal perspective at the \textit{update stage}: given each batch of rollouts, how can we safely perform multiple gradient updates on it to unlock its full learning potential? The central challenge lies in \textit{detecting catastrophic policy shift before it escalates into training collapse}, which remains underexplored in existing sample-efficient methods.

\section{Preliminaries: GRPO and Sample Reuse}
\label{sec:preliminaries}
Group Relative Policy Optimization (GRPO)~\cite{shao2024deepseekmath} optimizes the policy $\pi_\theta$ by maximizing a clipped surrogate objective over $T$ active tokens:
\begin{equation}
\mathcal{L}_{GRPO}(\theta) 
= \frac{1}{T} \sum_{i=1}^{T} 
\min \! \left( 
r_i(\theta) \hat{A}_i,\; 
\text{clip}\bigl(r_i(\theta),\, 1-\epsilon_c,\, 1+\epsilon_c\bigr) 
\hat{A}_i 
\right),
\label{eqn:raw_obj}
\end{equation}
where $\hat{A}_i$ is the group-normalized advantage and $r_i(\theta) \triangleq \frac{\pi_\theta(a_i \mid \cdot)}{\pi_{old}(a_i \mid \cdot)}$ is the importance ratio against the behavior policy $\pi_{old}$ for the sampled token $a_i$. Crucially, for tokens in the unclipped region, the raw gradient propagating into the LLM is driven by:
\begin{equation}
\nabla_\theta \mathcal{L}_{GRPO}(\theta) 
= \frac{1}{T} \sum_{i=1}^{T} 
r_i(\theta) \, \hat{A}_i \, 
\nabla_\theta \log \pi_\theta(a_i \mid \cdot).
\label{eqn:raw_grad}
\end{equation}
This formulation highlights that the raw gradient signal is multiplicatively scaled by $r_i(\theta)$.

\textbf{The Vulnerability of Sample Reuse in LLMs.}
The gradient formulation in Eq.~\eqref{eqn:raw_grad} reveals why sample reuse, a cornerstone of classical RL~\cite{schulman2017proximal}, breaks down for LLMs. The vast action space and long reasoning horizons force the behavior policy $\pi_{old}$ to assign \textit{vanishingly small probabilities} to numerous tail tokens. Since $r_i(\theta)$ uses these tiny probabilities as denominators, successive reuse steps create a feedback loop in which modest policy shifts compound rapidly, inflating $r_i(\theta)$ by orders of magnitude~\cite{chen2025minimax, zheng2025group, wang2025numerical}. Because $r_i(\theta)$ enters Eq.~\eqref{eqn:raw_grad} as a multiplicative scalar, this inflation disproportionately magnifies the gradients of these tokens, \textbf{letting sparse tail tokens dominate the update and drive the model to collapse}~\cite{yang2026do, liu2026stapo}. The standard clipping mechanism in Eq.~\eqref{eqn:raw_obj} does not prevent this. Its protection is asymmetric: tokens with negative advantages and highly inflated ratios ($r_i(\theta) > 1+\epsilon_c$) remain unclipped~\cite{ye2020mastering,wang2025aspo,yu2025dapo}, so anomalously large gradients propagate freely into the model.

While this vulnerability is widely observed~\cite{zheng2026prosperity, liang2026squeeze, baroian2026prompt}, its underlying mechanism remains unclear. We investigate it empirically and theoretically in the next section.

\begin{figure*}[]
    \centering\scriptsize\renewcommand\arraystretch{0.5}
    \setlength{\tabcolsep}{1pt}
    \begin{tabular}{@{}c@{\hspace{2.5pt}}ccc}
    &\textbf{Task Performance} & \textbf{Weight Change (w/o Reuse)} & \textbf{Weight Change (w/ Naive Reuse)} \\
    \noalign{\smallskip}
    \raisebox{1.0cm}{\rotatebox{90}{\textbf{Math}}} &
    \includegraphics[width=0.32\linewidth]{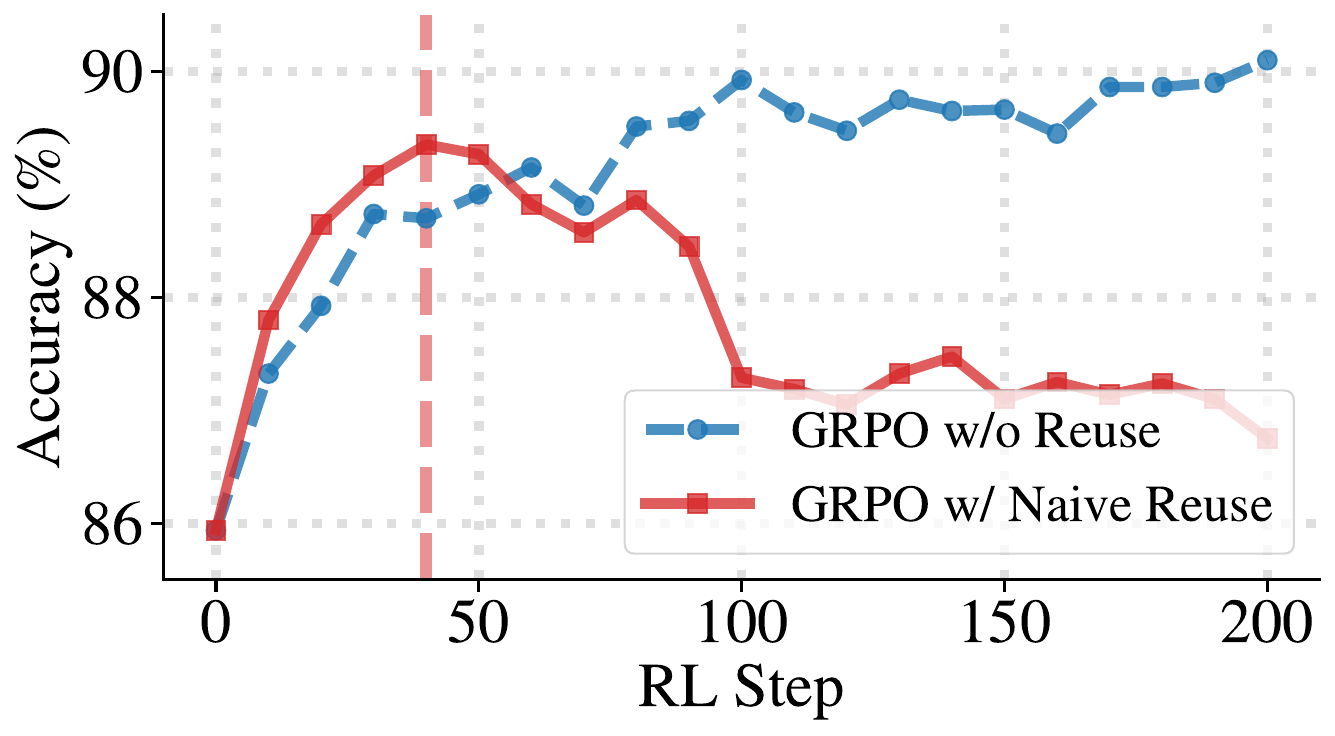}&
    \includegraphics[width=0.32\linewidth]{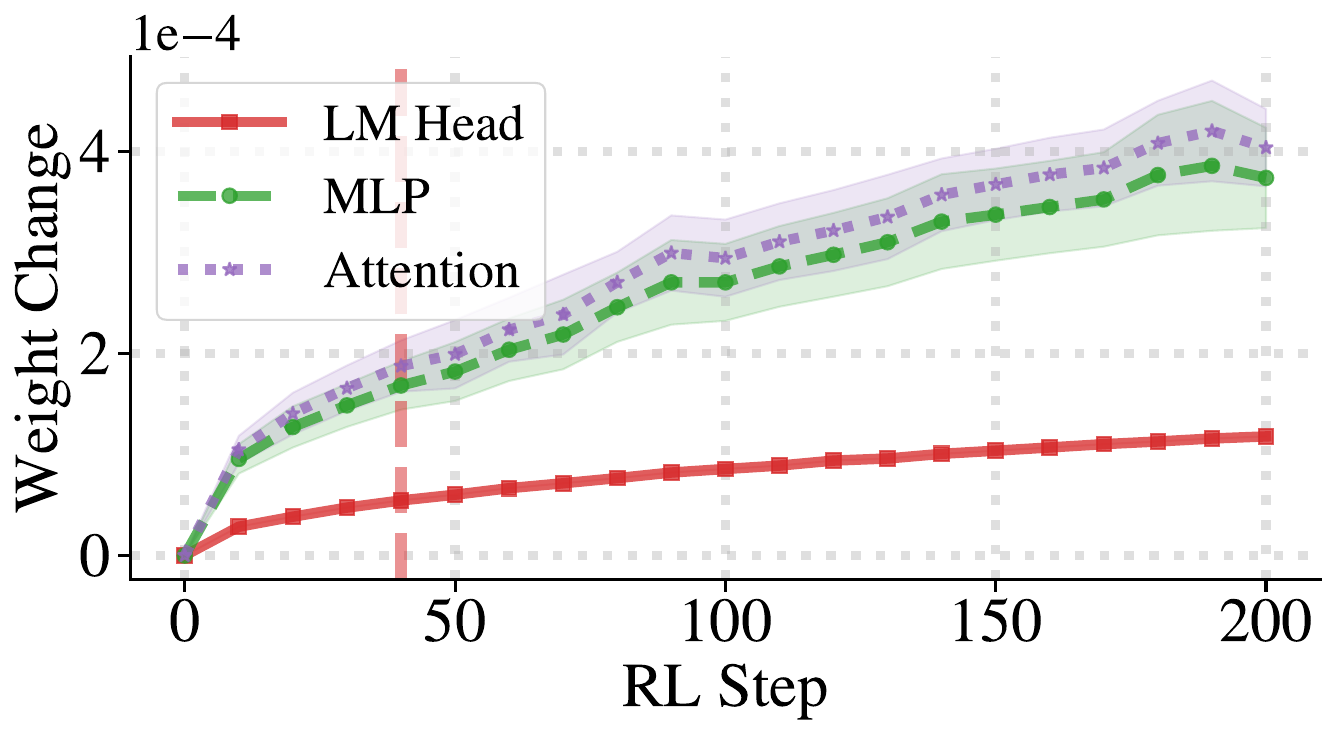}&
    \includegraphics[width=0.32\linewidth]{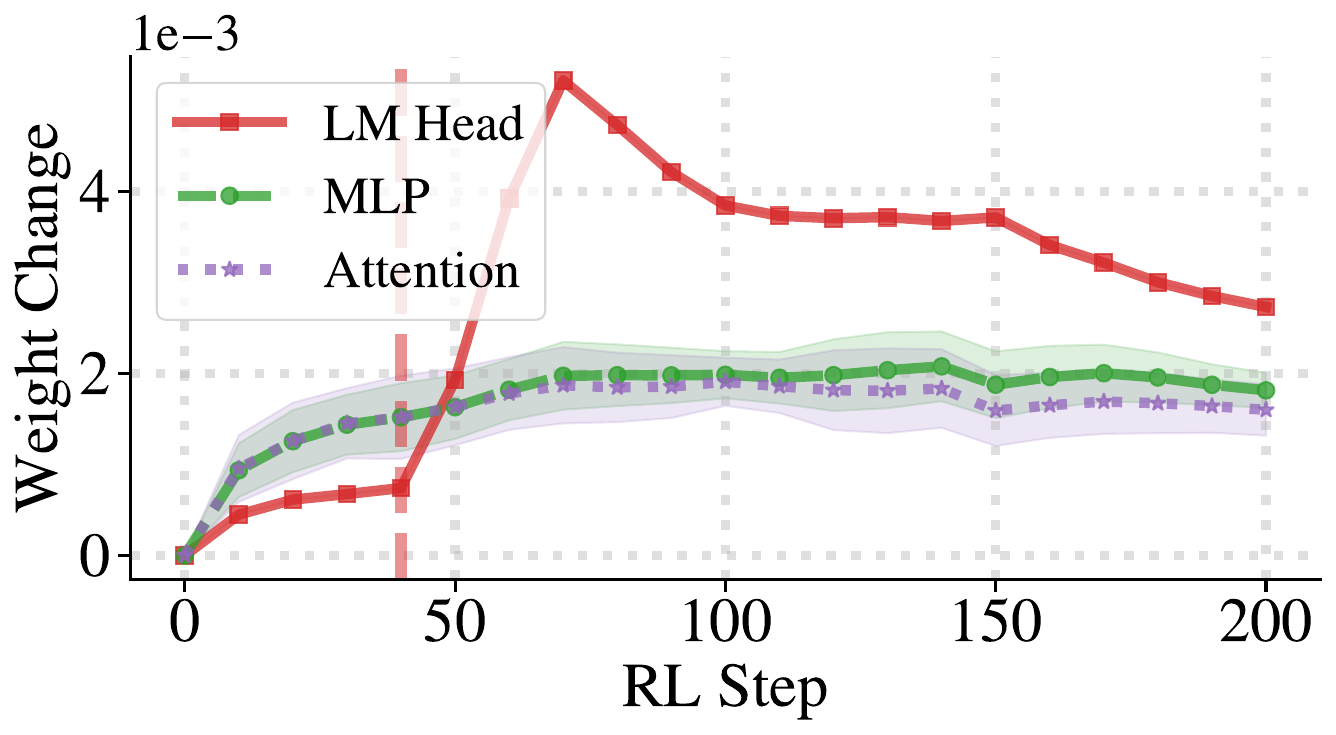}\\[-3pt]
    \raisebox{1.0cm}{\rotatebox{90}{\textbf{ALFWorld}}}&
    \includegraphics[width=0.32\linewidth]{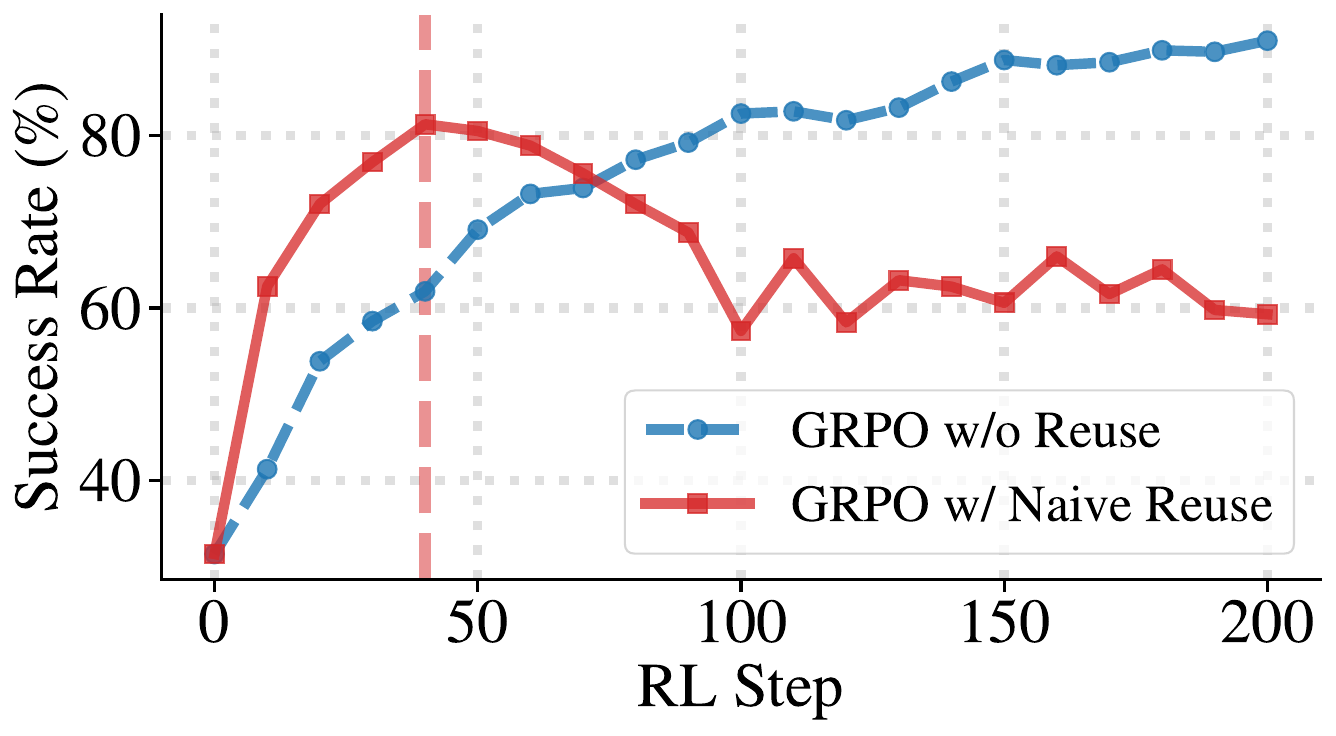}&
    \includegraphics[width=0.32\linewidth]{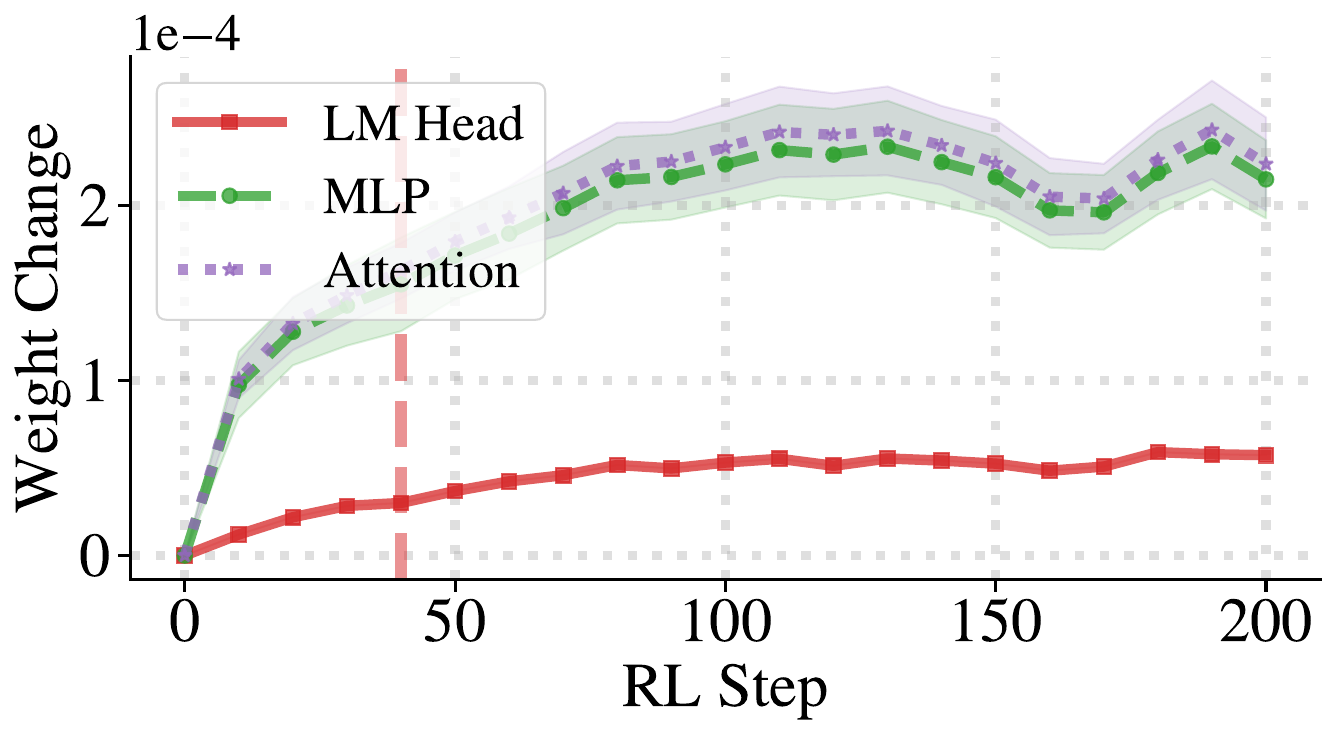}&
    \includegraphics[width=0.32\linewidth]{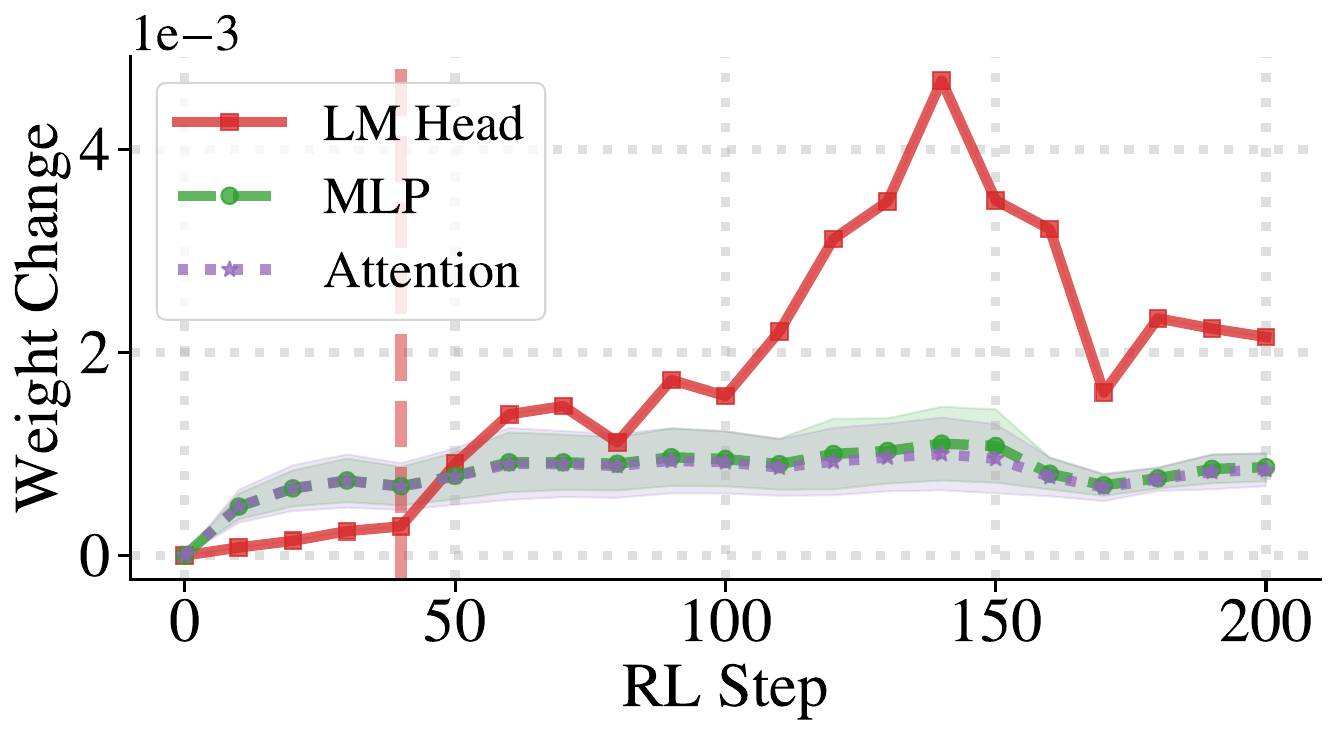}\\[-3pt]
    \raisebox{1.0cm}{\rotatebox{90}{\textbf{WebShop}}}&
    \includegraphics[width=0.32\linewidth]{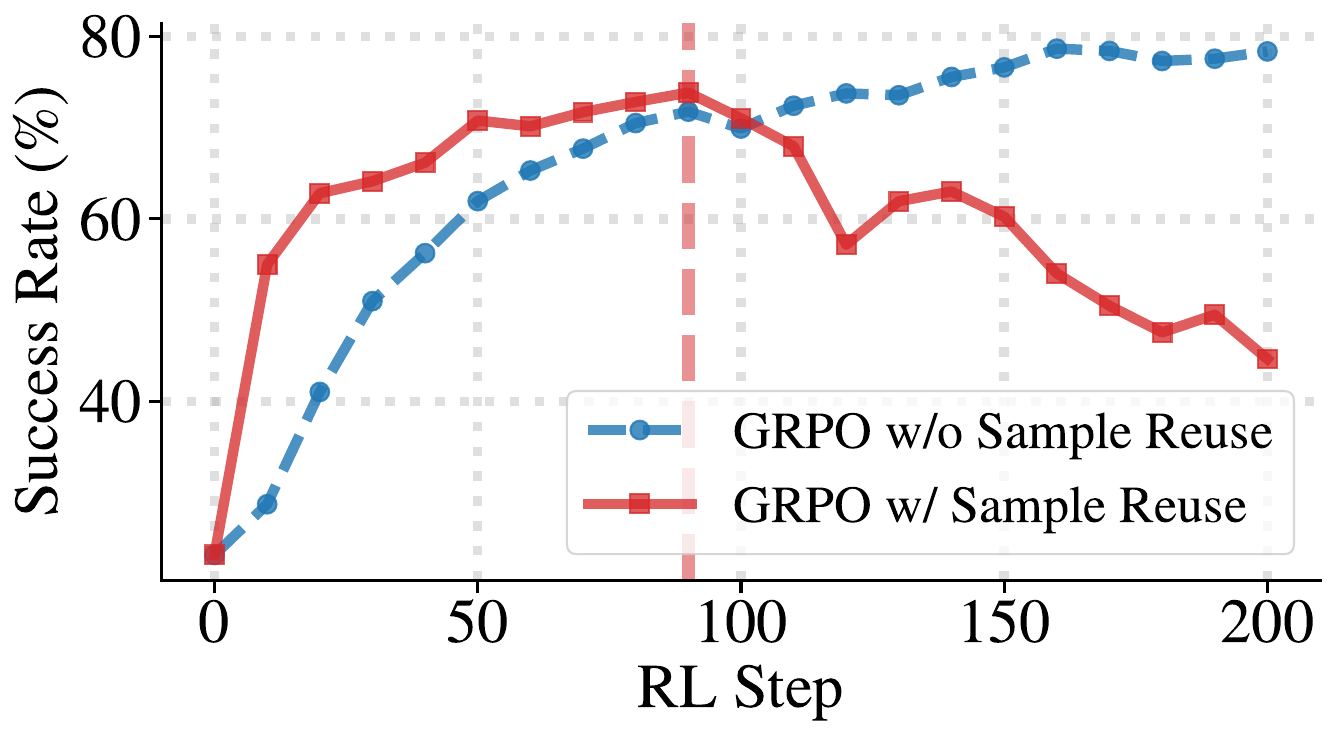}&
    \includegraphics[width=0.32\linewidth]{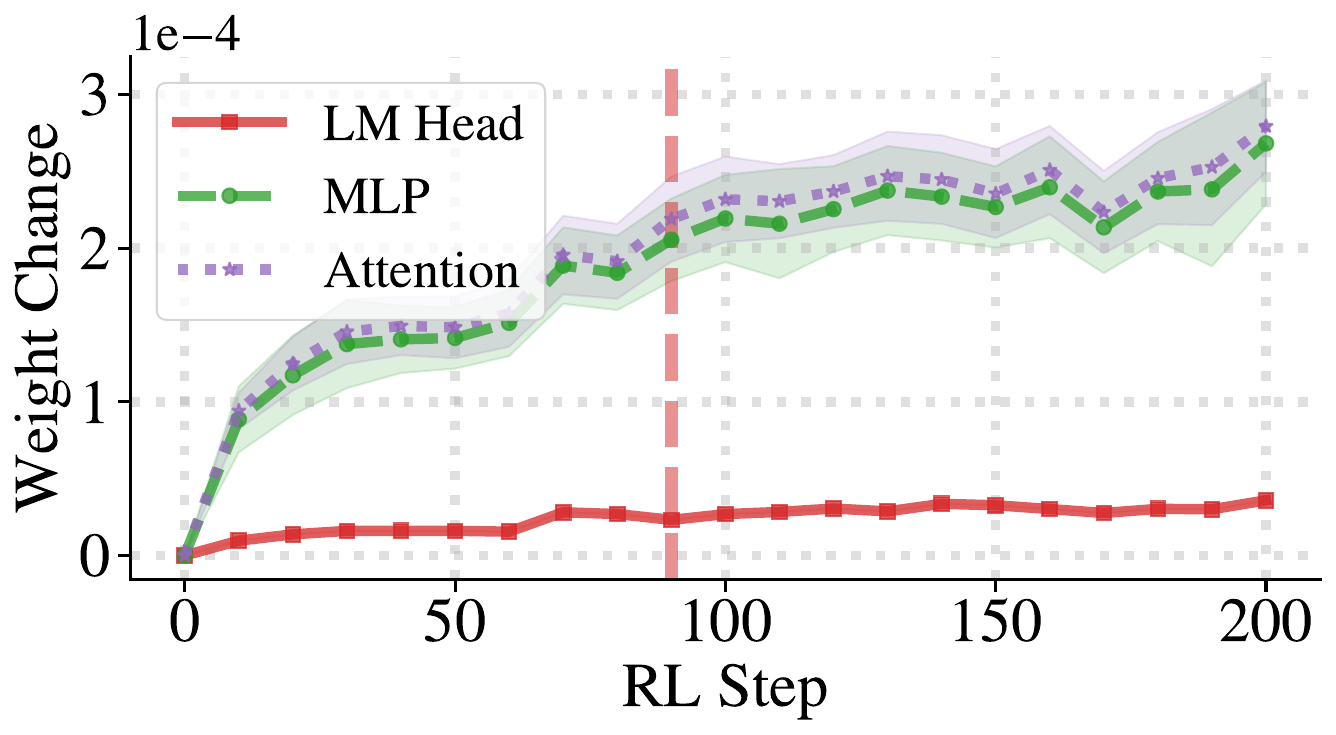}&
    \includegraphics[width=0.32\linewidth]{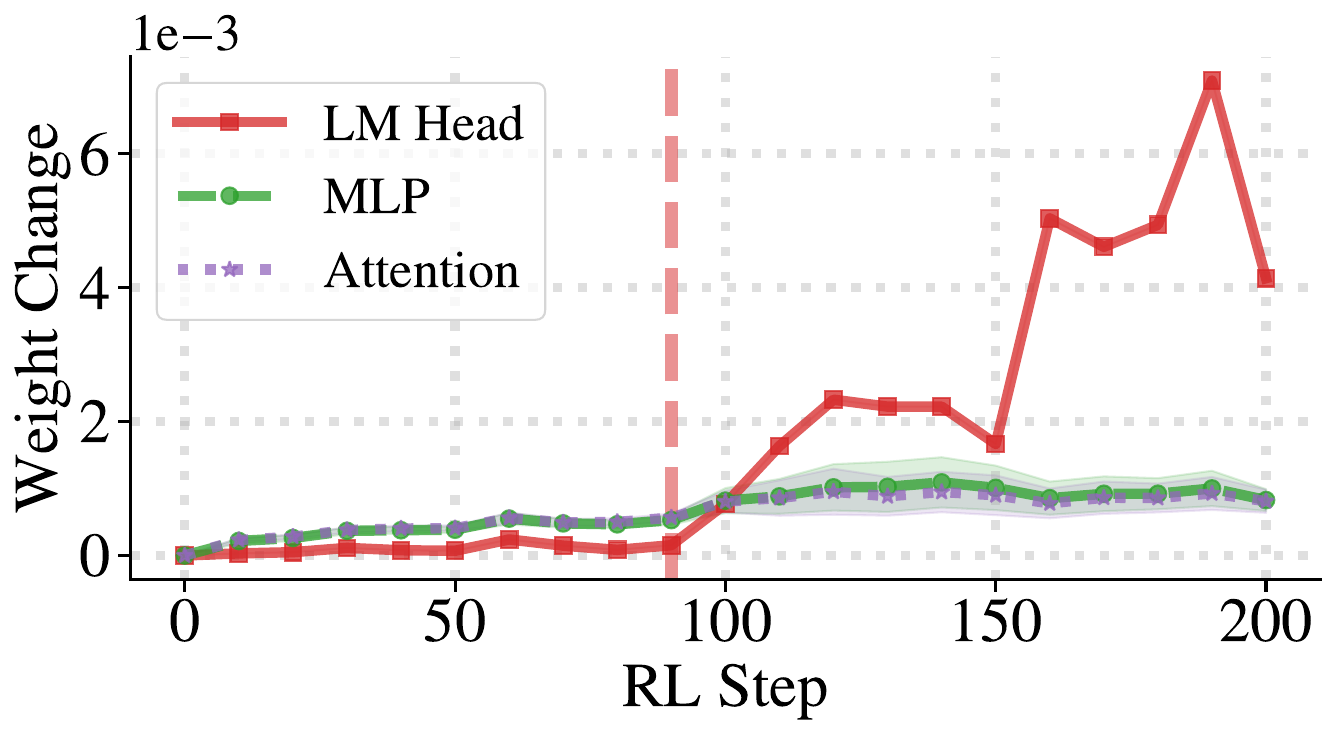}\\[-3pt]
    \raisebox{1.0cm}{\rotatebox{90}{\textbf{Search QA}}}&
    \includegraphics[width=0.32\linewidth]{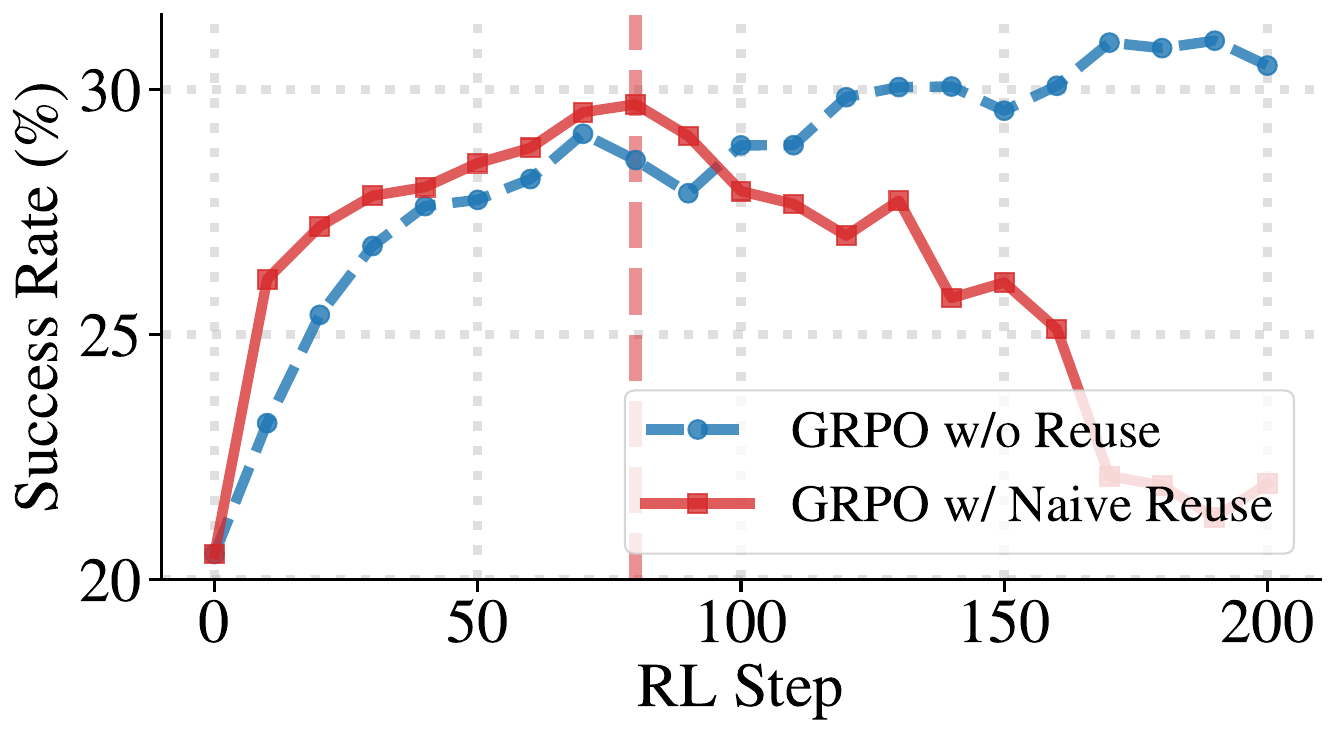}&
    \includegraphics[width=0.32\linewidth]{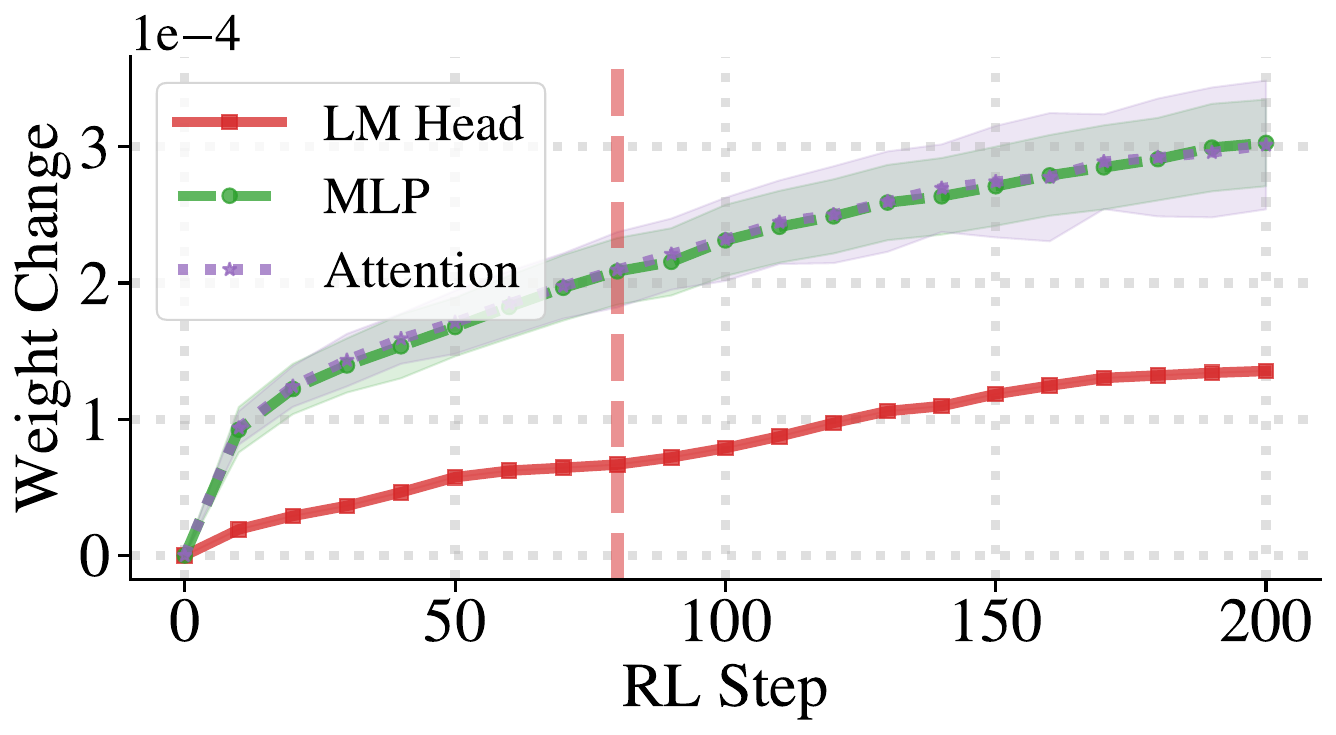}&
    \includegraphics[width=0.32\linewidth]{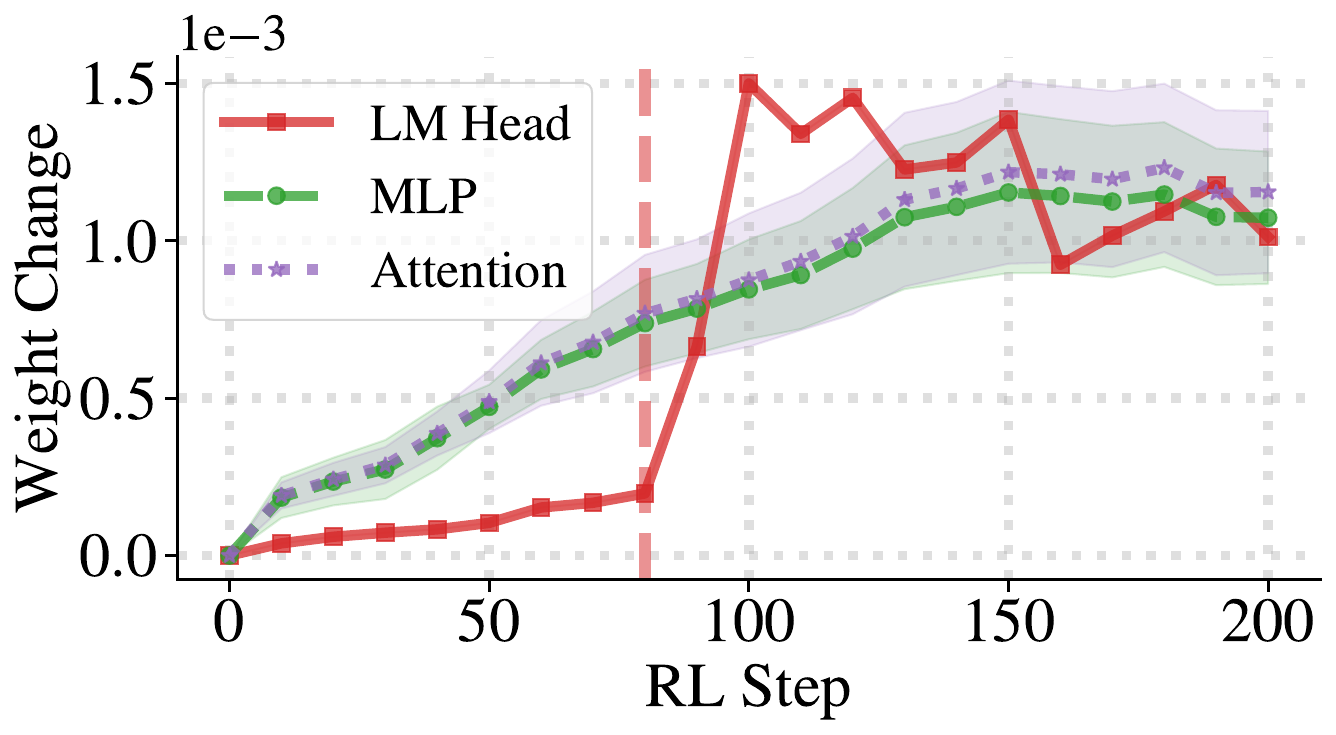}\\[-7pt]
    \end{tabular}
    \caption{\textbf{Illustration of the Disproportionate Weight Divergence (DWD) phenomenon} on Qwen3-4B-Instruct across various tasks. Relative weight change is defined as $\|W_t - W_{t-\Delta}\|_F / \|W_{\text{ref}}\|_F$ ($W_{\text{ref}}$: initial pretrained weight; $\Delta$: profiling interval, set to 10 RL steps); red dashed lines mark the onset of performance degradation for GRPO w/ Naive Reuse. Observation: \textit{Sample reuse accelerates early convergence, followed by a severe performance degradation synchronized with an abrupt, isolated surge in the LM Head, while intermediate layers (Attention and MLP) remain stable.} Consistent results across the other five LLMs are provided in Appendix~\ref{app:more_empirical_dwd}.}
    \label{fig:empirical_dwd_qwen3-4b-instruct}
\end{figure*}

\section{Disproportionate Weight Divergence (DWD)}
\label{sec:dwd}
Having established in Section~\ref{sec:preliminaries} why sample reuse destabilizes RLVR training, we now seek an indicator signal inside the network that reliably reflects this instability in real time. Section~\ref{subsec:dwd_empirical} identifies such a signal empirically through microscopic profiling—the Disproportionate Weight Divergence (DWD) phenomenon—and Section~\ref{subsec:dwd_theory} theoretically explains why DWD arises.

\subsection{Empirical Observation of the DWD Phenomenon}
\label{subsec:dwd_empirical}

\textbf{Profiling Setup.} 
To diagnose how sample-reuse-induced collapse manifests inside the network, we profile training dynamics across six LLMs from the Llama and Qwen families (1.5B--8B parameters) and four diverse tasks: mathematical reasoning, WebShop, ALFWorld, and search-augmented QA. We compare GRPO w/o Reuse (strictly single-use) against GRPO w/ Naive Reuse (a fixed number of gradient updates per rollout batch), tracking two signals: (i) {task performance}, and (ii) {relative weight change} of each layer, grouped by component type ($\texttt{lm\_head}$, $\texttt{attn}$, $\texttt{mlp}$).

\textbf{Core Finding.} 
Figure~\ref{fig:empirical_dwd_qwen3-4b-instruct} shows representative results. Without sample reuse, performance improves steadily and weight changes remain smooth across all layers. With sample reuse, convergence is substantially faster at first but soon gives way to a severe performance degradation, accompanied by a structurally asymmetric pattern in the layer-wise weight dynamics, which we formalize as follows.
\begin{tcolorbox}[colback=gray!3, colframe=black, arc=4pt, left=5pt, right=5pt, top=2pt, bottom=2pt, boxrule=0.6pt]
\begin{observation}[Disproportionate Weight Divergence]
\label{obs:dwd}
During sample-reuse RLVR training, performance degradation is synchronized with a sharp surge in the \texttt{lm\_head}'s weight change, while intermediate layers (attention and MLP) remain stable.
\end{observation}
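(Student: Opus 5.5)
Observation~\ref{obs:dwd} is an empirical claim, so I do not intend to prove it deductively. Instead I would establish it by a controlled measurement protocol, and then give it a mechanistic justification through the two structural results promised in the introduction: the gradient-asymmetry result (Theorem~\ref{thm:asymmetry}) and the $\chi^2$ lower bound (Theorem~\ref{thm:chi2_bound}).

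\textbf{Step 1: measurement.} I would fix the six LLMs and four tasks of the profiling setup. For each, I would run GRPO w/o Reuse and GRPO w/ Naive Reuse from identical seeds and data order. At every profiling interval $\Delta$, I would log the relative weight change $\|W_t - W_{t-\Delta}\|_F/\|W_{\text{ref}}\|_F$ for each layer, aggregated by component type, together with task performance.

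\textbf{Step 2: locating the onset.} I would define the degradation onset $t^\star$ operationally as the first step after the performance peak at which the smoothed metric drops by more than a fixed tolerance. I would then define a surge time $t_{\text{lm}}$ as the first step at which the \texttt{lm\_head} weight change exceeds its running mean by $k$ standard deviations. Synchronization means $|t^\star - t_{\text{lm}}|$ is at most a few profiling intervals in every run. Stability of the intermediate layers means the analogous Z-score for \texttt{attn} and \texttt{mlp} stays below $k$ over the same window. Checking that both conditions hold across all model--task pairs, and that neither surge appears in the single-use control, constitutes the verification.

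\textbf{Step 3: mechanistic support.} I would write the per-layer gradient via the chain rule. For \texttt{lm\_head} the gradient is
\begin{equation*}
\frac{1}{T}\sum_i r_i \hat A_i\,(e_{a_i}-\pi_\theta)h_i^\top,
\end{equation*}
where $h_i$ is the final hidden state. For an intermediate layer, the same residual $(e_{a_i}-\pi_\theta)$ is first projected through $W_{\text{lm}}^\top$ and the downstream Jacobians. Bounding the ratio of the two gradient norms by the product of these operator norms over $\|h_i\|$ gives the architectural attenuation constant; Table~\ref{tab:empirical_constants} indicates this constant is small. Under reuse, inflated $r_i$ on tail tokens enter both gradients identically as scalars, so the surge appears in absolute terms mainly at \texttt{lm\_head}. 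Adam then converts this into a disproportionate weight change. For the performance link, the \texttt{lm\_head} gradient norm lower-bounds the empirical $\chi^2$ divergence between the updating and behavior policies, so a surge coincides with large policy shift and hence with degradation.

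The main obstacle is Step~2's synchronization claim. Adam's normalization partially equalizes per-parameter step sizes, so a gradient surge need not translate linearly into a weight-change surge. I would handle this first by also logging the raw gradient norms and confirming that they lead the weight-change surge. I would then argue that sign-consistent large gradients on the \texttt{lm\_head} rows of tail tokens accumulate in the first moment, whereas the attenuated intermediate gradients remain noise-dominated.
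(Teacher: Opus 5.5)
Your Steps 1--2 are the paper's route. Section~\ref{subsec:dwd_empirical} profiles six LLMs across four tasks, tracks the per-component relative weight change $\|W_t - W_{t-\Delta}\|_F/\|W_{\text{ref}}\|_F$ with $\Delta = 10$ RL steps, and contrasts naive reuse with single-use training. The paper judges synchronization and stability visually against a degradation-onset marker (Figures~\ref{fig:empirical_dwd_qwen3-4b-instruct}, \ref{fig:empirical_dwd_qwen2.5-7b}, \ref{fig:empirical_dwd_more_llm}); your Z-score criteria make that check explicit and reproducible. One caveat on your control: at equal RL steps, naive reuse has taken $K$ times as many optimizer steps, so a same-window single-use run cannot separate reuse from update count. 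The paper instead runs single-use over the full trajectory with a matched update budget (Appendix~\ref{app:full_weight_change}).

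The genuine gap is in Step 3. Bounding $\|J_i^\top E_i\|_2$ by a product of operator norms is valid, and would even give a ratio bound free of any $\pi_\theta$-dependence. But Table~\ref{tab:empirical_constants} does not show that this constant is small. The $C$ reported there is the row-energy constant of Assumption~\ref{ass:jacobian}: the $\pi_\theta$-average of $\|(J_i)_{j,:}\|_2^2$ and the energy of the sampled row. Your product of operator norms is at least $\|J_i\|_2 \geq \max_j \|(J_i)_{j,:}\|_2$, a maximum over the whole vocabulary. Moreover, $\|J_i\|_2^2 = \max_{\|u\|_2=1}\sum_j ((J_i)_{j,:}u)^2$ sums over all $d_{\text{vocab}}$ rows, so any direction shared across the unembedding rows accumulates; it can be much larger than $C$.

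The paper gets a small constant precisely by avoiding the operator norm. It uses the softmax structure $J_i^\top E_i = r_i \hat{A}_i \bigl((J_i)_{a_i,:} - \mathbb{E}_{j\sim\pi_\theta}[(J_i)_{j,:}]\bigr)^\top$, so only the sampled row and the $\pi_\theta$-averaged row energy enter, giving $\|J_i^\top E_i\|_2^2 \leq 4C\, r_i^2 \hat{A}_i^2$. It then lower-bounds $\|E_i\|_2^2$ by its $a_i$-coordinate alone, $r_i^2 \hat{A}_i^2 (1-\pi_\theta(a_i))^2$, and combines both with Lemma~\ref{lem:activation_bound}. The price is the factor $(1-\pi_\theta(a_i))^{-2}$ in Theorem~\ref{thm:asymmetry}, which is close to $1$ when $\pi_\theta(a_i)$ is small, the regime the paper ties to collapse-driving tail tokens. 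For Step 3 to rest on the cited measurements, you need this structured bound, or else a separate measurement of $\|J_i\|_2$.

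Your Adam concern is legitimate, and the paper does not resolve it: its explanation stays at the level of raw gradients. Figure~\ref{fig:compare_monitor} already supplies the raw \texttt{lm\_head} gradient-norm spike you propose to log. Note that no cross-layer magnitude bound, yours or $\mathcal{C}_{\text{struct}}$, by itself implies smaller Adam steps in intermediate layers. Rescaling a layer's gradients by a constant leaves $\hat m/\sqrt{\hat v}$ essentially unchanged; what matters is whether each layer's gradient surges relative to its own second-moment history. The \texttt{lm\_head}-specific form of your sign-consistency argument is the promising one. The row of a tail token $j$ ordinarily receives gradients proportional to $\pi_\theta(j)$, so its second-moment estimate is tiny, and an inflated-ratio gradient on that row yields a large normalized step, a per-row effect with no counterpart in dense intermediate layers. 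As written, though, this step is heuristic, and the paper offers no argument for it either.
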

\end{tcolorbox}


This pattern holds across all profiled settings (see Appendix~\ref{app:more_empirical_dwd}). Appendix~\ref{app:full_weight_change} further shows that without sample reuse, the \texttt{lm\_head} surge does not emerge even over the full training trajectory, confirming that sample reuse itself is the direct cause, rather than the increased number of updates.

\subsection{Theoretical Analysis of the DWD Phenomenon}
\label{subsec:dwd_theory}
The empirical findings in Section~\ref{subsec:dwd_empirical} raise two questions: \textit{why is the gradient anomaly structurally localized to the \texttt{lm\_head}, and why does its surge faithfully track the catastrophic policy shift that drives training collapse?} We answer these two questions via a \textit{Structural Gradient Asymmetry} (Theorem~\ref{thm:asymmetry}, Section~\ref{subsubsec:lm_head_sensitivity}) and a \textit{Divergence Lower Bound} (Theorem~\ref{thm:chi2_bound}, Section~\ref{subsubsec:chi2_divergence}), respectively.

\textbf{Setup and notation.}
For any active token position $i$, let $h_{L,i} \in \mathbb{R}^{d_{\text{model}}}$ denote the post-norm hidden state entering the \texttt{lm\_head}, and $x_i^{\text{int}} \in \mathbb{R}^{d_{\text{model}}}$ the input to an arbitrary intermediate linear layer. Let $a_i$ be the sampled token, $\pi_\theta(\cdot \mid h_{L,i})$ and $\pi_{\text{old}}$ the current and behavior policies, $r_i \triangleq \pi_\theta(a_i)/\pi_{\text{old}}(a_i)$ the importance sampling ratio, and $\hat{A}_i$ the group-normalized advantage. Let $J_i $ denote the composite Jacobian mapping the intermediate linear layer's output to the final logits. We measure gradient magnitude using the Frobenius norm $\|M\|_F$, and policy divergence using the Pearson $\chi^2$.

\subsubsection{Structural Gradient Asymmetry between \texttt{lm\_head} and Intermediate Layers}
\label{subsubsec:lm_head_sensitivity}
Our analysis proceeds in three steps: (1) decomposing the per-layer gradients to isolate the error signal, (2) establishing regularity conditions on the forward and backward signals, and (3) deriving a quantitative bound on the inter-layer gradient attenuation. We then verify this bound empirically across multiple LLM architectures. The following proposition formalizes the gradient decomposition.

\begin{proposition}[Gradient Decomposition]
\label{prop:grad_decomp}
Let $\mathcal{L}_i$ denote the objective for token $a_i$, and define the error signal as $E_i \triangleq r_i \hat{A}_i ( e_{a_i} - \pi_\theta(\ \cdot \ | h_{L,i}) ) \in \mathbb{R}^{d_{\text{vocab}}}$. The gradients of $\mathcal{L}_i$ with respect to the \texttt{lm\_head} weight $W_{\text{lm}}$ and any intermediate linear layer weight $W_{\text{int}}$ admit the following rank-1 forms:
\begin{equation}
G_i^{\text{lm}} \triangleq \nabla_{W_{\text{lm}}} \mathcal{L}_i = E_i \, h_{L,i}^\top, \qquad
G_i^{\text{int}} \triangleq \nabla_{W_{\text{int}}} \mathcal{L}_i = \bigl( J_i^\top E_i \bigr) \bigl( x_i^{\text{int}} \bigr)^\top,
\end{equation}
where $J_i = \partial z_i / \partial y_i $ is the Jacobian of logits $z_i$ with respect to the intermediate output $y_i = W_{\text{int}} x_i^{\text{int}}$.
\end{proposition}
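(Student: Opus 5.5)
The plan is to reduce both identities to the chain rule, using one computation: the gradient of the per-token objective with respect to the logits $z_i = W_{\text{lm}} h_{L,i}$. In the unclipped region the per-token objective is $\mathcal{L}_i = r_i(\theta)\hat{A}_i$. Since $\pi_{\text{old}}$ and $\hat{A}_i$ are constant in $\theta$, we have
\[
\nabla_\theta \mathcal{L}_i = \hat{A}_i \nabla_\theta r_i = r_i \hat{A}_i \nabla_\theta \log \pi_\theta(a_i \mid h_{L,i}),
\]
which is the per-token summand of Eq.~\eqref{eqn:raw_grad}. Writing $\pi_\theta(\cdot\mid h_{L,i}) = \mathrm{softmax}(z_i)$, the standard log-softmax identity gives
\[
\nabla_{z_i} \log \pi_\theta(a_i \mid h_{L,i}) = e_{a_i} - \pi_\theta(\cdot \mid h_{L,i}).
\]
Hence $\nabla_{z_i}\mathcal{L}_i = r_i\hat{A}_i(e_{a_i}-\pi_\theta(\cdot\mid h_{L,i})) = E_i$. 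Because $\mathcal{L}_i$ depends on $W_{\text{lm}}$ and $W_{\text{int}}$ only through $z_i$, the error signal $E_i$ is the common upstream gradient for every layer. For clipped tokens the gradient is identically zero, so the decomposition holds trivially with $E_i := 0$; I will state this as a remark.

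\textbf{The \texttt{lm\_head} gradient.} The logits are linear in $W_{\text{lm}}$: $z_i = W_{\text{lm}} h_{L,i}$. The hidden state $h_{L,i}$ does not depend on $W_{\text{lm}}$, assuming untied embeddings, or treating the gradient as the partial derivative through the head path only. Then $\partial z_{i,k}/\partial (W_{\text{lm}})_{kj} = h_{L,i,j}$, so
\[
\nabla_{W_{\text{lm}}}\mathcal{L}_i = \sum_k (E_i)_k\, e_k h_{L,i}^\top = E_i h_{L,i}^\top,
\]
an outer product and therefore rank at most one.

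\textbf{The intermediate-layer gradient.} Write $y_i = W_{\text{int}} x_i^{\text{int}}$. The input $x_i^{\text{int}}$ depends only on parameters upstream of $W_{\text{int}}$, so it is constant with respect to $W_{\text{int}}$. The logits depend on $W_{\text{int}}$ only through $y_i$, via $z_i = F(y_i)$ where $F$ collects the downstream residual blocks, the final norm and the \texttt{lm\_head}, and $J_i = \partial z_i/\partial y_i$. Backpropagating gives
\[
\nabla_{y_i}\mathcal{L}_i = J_i^\top E_i .
\]
The same linear-layer identity as above then yields
\[
\nabla_{W_{\text{int}}}\mathcal{L}_i = (J_i^\top E_i)(x_i^{\text{int}})^\top,
\]
again rank at most one.

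\textbf{Main obstacle.} The only delicate step is justifying that each weight matrix enters the computation of token $i$ through a single linear application. Two cases need care. First, under weight tying, $W_{\text{lm}}$ also appears at the input embedding; here I will restrict to the head path, or note the untied setting used in the paper. Second, in attention the layer at position $i$ also influences other positions' logits; the statement's per-token $J_i$ implicitly treats this as the position-$i$ path. I will address this by defining $J_i$ as the Jacobian of $z_i$ with respect to $y_i$ with the other positions' $y_j$ held fixed, and noting that cross-position contributions sum into further rank-1 terms outside $G_i^{\text{int}}$ as defined.
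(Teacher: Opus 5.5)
Your proposal is correct and follows essentially the paper's proof: both compute $\nabla_{z_i}\mathcal{L}_i = E_i$ and then read off the outer products entrywise from the linear maps $z_i = W_{\text{lm}} h_{L,i}$ and $y_i = W_{\text{int}} x_i^{\text{int}}$; your route to $E_i$ via $\nabla r_i = r_i \nabla \log \pi_\theta$ and the log-softmax gradient is the same computation as the paper's $\hat{A}_i/\pi_{\text{old}}(a_i)$ times the softmax derivative $\pi_\theta(a_i)(e_{a_i}-\pi_\theta)$. Your caveats on weight tying and cross-position attention paths go beyond the paper, whose proof simply asserts that $(W_{\text{int}})_{m,p}$ affects $\mathcal{L}_i$ only through $(y_i)_m$, so your same-position, head-only reading is exactly the implicit assumption under which the rank-1 form holds (rely on the head-path restriction rather than an untied setting, since the paper never specifies untied embeddings).
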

Proposition~\ref{prop:grad_decomp} (proof in Appendix~\ref{app:proof_grad_decomp}) shows that the \texttt{lm\_head} absorbs the error signal $E_i$ directly, whereas every intermediate layer is shielded by the Jacobian projection $J_i^\top$. Since $\|uv^\top\|_F^2 = \|u\|_2^2 \cdot \|v\|_2^2$ for any rank-1 matrix, this structure cleanly separates the \emph{error signal} ($E_i$ or $J_i^\top E_i$) from the \emph{representation signal} ($h_{L,i}$ or $x_i^{\text{int}}$), allowing each factor to be bounded independently.

We first bound the magnitude of the representation signals feeding the respective layers.

\begin{lemma}[Bounded Activations under Pre-Norm Architectures]
\label{lem:activation_bound}
For any active token $i$, let $h_{L,i}$ denote the \texttt{lm\_head} input and $x_i^{\text{int}}$ denote the intermediate-layer input. There exist strictly positive constants $\alpha_{\min}$ and $\beta_{\max}$ such that:
\vspace{-2pt}
\begin{equation}
\| h_{L,i} \|_2^2 \;\geq\; \alpha_{\min} \cdot d_{\text{model}}, \qquad
\| x_i^{\text{int}} \|_2^2 \;\leq\; \beta_{\max} \cdot d_{\text{model}}.
\end{equation}
\end{lemma}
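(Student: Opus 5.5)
The plan is to exploit the normalization layers that, in pre-norm architectures, sit immediately in front of both the \texttt{lm\_head} and every intermediate linear layer. We treat the two inequalities separately.

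\textbf{Lower bound on $\|h_{L,i}\|_2^2$.} Write the final normalization as RMSNorm, $h_{L,i} = g \odot \tilde u_i$ with $\tilde u_i = u_i/\mathrm{RMS}(u_i)$ and $\mathrm{RMS}(u)=\sqrt{\tfrac{1}{d_{\text{model}}}\|u\|_2^2+\epsilon}$. The normalized vector has squared norm
\[
\|\tilde u_i\|_2^2 = \frac{d_{\text{model}}\,\|u_i\|_2^2}{\|u_i\|_2^2 + d_{\text{model}}\,\epsilon},
\]
which tends to $d_{\text{model}}$ whenever $\|u_i\|_2^2 \gg d_{\text{model}}\epsilon$. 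Since the residual stream is nondegenerate in practice, we assume $\|u_i\|_2^2 \ge c\, d_{\text{model}}\epsilon$ for some $c>0$. This gives $\|\tilde u_i\|_2^2 \ge \tfrac{c}{1+c}\, d_{\text{model}}$. We then need to transfer this bound through the learned gain $g$, and here is the main obstacle: $\|g\odot\tilde u\|^2 = \sum_k g_k^2\tilde u_k^2$ is bounded below only by $\min_k g_k^2\,\|\tilde u\|^2$, and a single near-zero gain coordinate makes that bound vacuous. I would first try to replace the worst-case bound with a regularity assumption: either $\min_k|g_k|\ge g_{\min}>0$, or, more realistically, that $\tilde u$ is not concentrated on small-gain coordinates, so that $\sum_k g_k^2\tilde u_k^2 \ge \gamma\,\overline{g^2}\,\|\tilde u\|^2$. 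Either version gives $\alpha_{\min} = \gamma\,\overline{g^2}\,c/(1+c)$ (or the corresponding expression with $g_{\min}^2$). I would also note that this constant can be estimated empirically, consistent with the paper's later measurements in Table~\ref{tab:empirical_constants}.

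\textbf{Upper bound on $\|x_i^{\text{int}}\|_2^2$.} This direction is cleaner. Suppose first that the intermediate linear layer reads a normalized input, as $Q/K/V$ and the MLP up/gate projections do. Then $\|x\|^2 \le \max_k g_k^2\,\|\tilde u\|^2 \le \|g\|_\infty^2\, d_{\text{model}}$, because $\|\tilde u\|^2 \le d_{\text{model}}$ always holds, with the $\epsilon$ only helping. This yields $\beta_{\max}=\|g\|_\infty^2$. For layers whose inputs are not directly normalized, such as the output projection or the MLP down-projection, we write $x$ as a bounded map of a normalized vector. For attention, $x$ is a convex combination of value vectors $W_V\tilde x_j$, so $\|x\|\le\|W_V\|_2\|g\|_\infty\sqrt{d_{\text{model}}}$. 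For the MLP, the activation $\sigma(W_1\tilde x)\odot W_3\tilde x$ is bounded using Lipschitz-type operator norms of the weights, scaled appropriately when the intermediate width differs from $d_{\text{model}}$.

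Finally, we take the maxima over the finitely many layers to obtain a single constant $\beta_{\max}$. Both constants depend only on the fixed weights at the current iterate, together with the nondegeneracy assumption.
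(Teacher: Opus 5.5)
Your proposal is correct and follows essentially the same route as the paper. For the lower bound, both use the RMSNorm factorization plus a nondegeneracy assumption on the pre-norm residual; the paper takes your $c=1$, giving $\alpha_{\min}=\tfrac12\min_k (w^{\text{lm}}_k)^2$. For the upper bound, both use the $\max_k$-gain bound for normalized inputs, a convex-combination/operator-norm bound for the attention output projection, and a Lipschitz/operator-norm bound for the FFN down projection, then take the maximum over layers. The only substantive difference is that you make explicit the requirement $\min_k |g_k|>0$ (or your non-concentration substitute), which the paper leaves implicit when it declares $\alpha_{\min}$ strictly positive.
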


The upper bound $\beta_{\max}$ holds uniformly across all intermediate linear layers in standard Pre-Norm Transformers, covering both layers fed directly by RMSNorm (e.g., QKV and FFN up-projections) and those fed by internal sub-network outputs (e.g., attention output and FFN down-projections). See Appendix~\ref{app:activation_bounds} for the layer-by-layer derivation.

We next bound the Jacobian-projected error signal in the backward pass.

\begin{assumption}[Bounded Logit Sensitivity]
\label{ass:jacobian}
There exists a strictly positive constant $C$, such that for any token position $i$:
\begin{enumerate}[label=\textit{(\roman*)}, leftmargin=*, topsep=-2pt, itemsep=2pt, parsep=0pt]
    \item $\mathbb{E}_{j \sim \pi_\theta} \| (J_i)_{j,:} \|_2^2 \leq C$, bounding the expected logit sensitivity to intermediate perturbations;
    \item $\| (J_i)_{a_i,:} \|_2^2 \leq C$, extending the same bound to the sampled token $a_i$.
\end{enumerate}
\end{assumption}

This bounded-Jacobian property is well-studied in deep learning theory, both as a sufficient condition for generalization~\citep{sokolic2017robust} and as an explicit regularization target~\citep{yoshida2017spectral}. We further empirically verify that this assumption holds across diverse LLMs, and that $\mathcal{C}_{\text{struct}}$ defined in Theorem~\ref{thm:asymmetry} as a whole remains stable throughout RL training, including at the onset of training collapse (Appendix~\ref{app:empirical_table}).

Combining the gradient decomposition (Proposition~\ref{prop:grad_decomp}), forward activation bounds (Lemma~\ref{lem:activation_bound}), and Jacobian regularity (Assumption~\ref{ass:jacobian}), we formally establish the gradient asymmetry between the \texttt{lm\_head} and intermediate layers, culminating in our first main result.

\begin{tcolorbox}[colback=gray!3, colframe=black, arc=4pt, left=5pt, right=5pt, top=2pt, bottom=2pt, boxrule=0.6pt]
\begin{theorem}[Structural Gradient Asymmetry]
\label{thm:asymmetry}
Under Lemma~\ref{lem:activation_bound} and Assumption~\ref{ass:jacobian}, for any sampled token $a_i$ with policy probability $\pi_\theta(a_i) < 1$ and non-zero advantage, the ratio of the Frobenius gradient energy between any intermediate layer and the \texttt{lm\_head} is bounded by:\vspace{-2pt}
\begin{equation}
\frac{ \| G_i^{\text{int}} \|_{F}^2 }{ \| G_i^{\text{lm}} \|_{F}^2 } \;\leq\; \mathcal{C}_{\text{struct}} \cdot \frac{1}{(1-\pi_\theta(a_i))^2},
\label{eq:main_bound_F}
\end{equation}
where $\mathcal{C}_{\text{struct}} \triangleq \frac{4 \beta_{\max} C}{\alpha_{\min}}$ is a strictly positive architectural constant.
\end{theorem}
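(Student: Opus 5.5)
Starting from Proposition~\ref{prop:grad_decomp}, both gradients are rank-1. Using $\|uv^\top\|_F^2=\|u\|_2^2\|v\|_2^2$, the ratio factors as
\begin{equation*}
\frac{\|G_i^{\text{int}}\|_F^2}{\|G_i^{\text{lm}}\|_F^2}
=\frac{\|J_i^\top E_i\|_2^2}{\|E_i\|_2^2}\cdot\frac{\|x_i^{\text{int}}\|_2^2}{\|h_{L,i}\|_2^2}.
\end{equation*}
The representation factor is handled immediately by Lemma~\ref{lem:activation_bound}: it is at most $\beta_{\max} d_{\text{model}}/(\alpha_{\min} d_{\text{model}})=\beta_{\max}/\alpha_{\min}$. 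The common scalar $r_i\hat A_i$ appears in both $E_i$ and $J_i^\top E_i$, so it cancels. It is nonzero because $\hat A_i\neq 0$ and $r_i>0$, so the ratio is well defined. The problem therefore reduces to the error-signal ratio with $v\triangleq e_{a_i}-\pi$, where $\pi=\pi_\theta(\cdot\mid h_{L,i})$.

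For the denominator I will use only the $a_i$-coordinate: $\|v\|_2^2\ge (1-\pi(a_i))^2>0$, which is positive since $\pi_\theta(a_i)<1$. This step is the source of the factor $1/(1-\pi_\theta(a_i))^2$.

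For the numerator, write $J_i^\top v=(J_i)_{a_i,:}^\top-\sum_j \pi(j)(J_i)_{j,:}^\top$. By $\|u-w\|^2\le 2\|u\|^2+2\|w\|^2$,
\begin{equation*}
\|J_i^\top v\|_2^2\le 2\|(J_i)_{a_i,:}\|_2^2+2\Bigl\|\mathbb{E}_{j\sim\pi}(J_i)_{j,:}\Bigr\|_2^2 .
\end{equation*}
Jensen's inequality (convexity of $\|\cdot\|^2$) bounds the second term by $2\,\mathbb{E}_{j\sim\pi}\|(J_i)_{j,:}\|_2^2$. Assumption~\ref{ass:jacobian}(ii) and (i) then bound the two terms by $2C$ each, so the numerator is at most $4C$. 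This is exactly why the assumption has two parts and why the constant is $4$.

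Multiplying the three pieces gives $\frac{\beta_{\max}}{\alpha_{\min}}\cdot\frac{4C}{(1-\pi_\theta(a_i))^2}$, which is the claimed $\mathcal{C}_{\text{struct}}/(1-\pi_\theta(a_i))^2$. The main care point is the numerator. A naive operator-norm bound on $J_i$ would not match the stated assumption, so the split into the sampled row and the $\pi$-averaged row, followed by Jensen, is the step to get right. The rest is bookkeeping: checking that the same $\pi_\theta$ appears in $E_i$ and in Assumption~\ref{ass:jacobian}(i), and that Lemma~\ref{lem:activation_bound} applies to whichever intermediate layer is considered.
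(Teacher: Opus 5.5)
Your proposal is correct and follows essentially the same argument as the paper. Both proofs use the rank-1 factorization of the two gradients and drop the off-token coordinates of $\|E_i\|_2^2$ to get the $(1-\pi_\theta(a_i))^2$ lower bound. Both then split $J_i^\top E_i$ into the sampled row minus the $\pi_\theta$-averaged row, and combine $(a-b)^2\le 2a^2+2b^2$, Jensen, and Assumption~\ref{ass:jacobian}(ii) and (i) to obtain $4C$.

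The differences are cosmetic. The paper does the split and Jensen coordinate-wise, bounds $\|G_i^{\text{lm}}\|_F^2$ from below and $\|G_i^{\text{int}}\|_F^2$ from above separately, and only then divides. You work at the vector level and cancel $r_i\hat{A}_i$ directly in the ratio.
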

\end{tcolorbox}

Theorem~\ref{thm:asymmetry} factorizes this asymmetry into two multiplicative terms: the architectural constant $\mathcal{C}_{\text{struct}}$, which captures all architecture-dependent quantities and is empirically small across diverse LLMs (Section~\ref{app:empirical_table}); and the policy-confidence factor $(1-\pi_\theta(a_i))^{-2}$, governed by how concentrated $\pi_\theta$ is on $a_i$. The proof is deferred to Appendix~\ref{app:proof_main_theorem}.

\textbf{Quantitative instantiation of the bound.}
We measure all constants entering $\mathcal{C}_{\text{struct}}$ directly from forward-pass activations and Jacobian estimates across multiple LLMs spanning the Llama, Qwen, and GPT-OSS families (1.5B--235B parameters; full table in Appendix~\ref{app:empirical_table}). The factor is uniformly small across all tested architectures---below $10^{-1}$ at the median and below $0.5$ even at the 95th-percentile worst case. Moreover, $\mathcal{C}_{\text{struct}}$ remains stable throughout RL training, including at the onset of sample-reuse-induced collapse (Appendix~\ref{app:empirical_table}). \textit{LLMs therefore inherently suppress intermediate-layer gradient updates relative to the \texttt{lm\_head}.}

\textbf{Connection to the DWD phenomenon.}
Sample reuse in LLMs risks training collapse when the gradient is dominated by tail tokens, i.e., those with vanishingly small probability under $\pi_{\text{old}}$ but massively inflated importance ratios $r_i$ and large advantages $|\hat{A}_i|$ (Section~\ref{sec:preliminaries}). Theorem~\ref{thm:asymmetry} shows that the bound on intermediate-layer gradients is tightest on exactly these tokens: their low policy confidence drives $(1-\pi_\theta(a_i))^{-2}$ toward its minimum, and the architectural constant $\mathcal{C}_{\text{struct}}$ is empirically small ($<10^{-1}$ at the median). \textit{When tail tokens dominate, the gradient anomaly is therefore concentrated at the \texttt{lm\_head}, matching the asymmetry in Observation~\ref{obs:dwd}.}

\subsubsection{\texttt{lm\_head} Gradient Energy as a Lower Bound on Policy Divergence}
\label{subsubsec:chi2_divergence}
Our analysis proceeds in two steps: (1) relating the \texttt{lm\_head} gradient norm to the empirical second moment of importance ratios, and (2) connecting this moment to an empirical Pearson $\chi^2$ statistic.

Aggregating the per-token gradient $G_i^{\text{lm}}$ over $T$ active tokens, we define the batch-level \texttt{lm\_head} gradient $G^{\text{lm}} \triangleq \frac{1}{T}\sum_{i=1}^{T} G_i^{\text{lm}}$, matching the gradient propagated through the GRPO objective (Eq.~\ref{eqn:raw_grad}).

\begin{lemma}[Gradient Norm via Importance-Ratio Second Moment]
\label{lem:grad_bound}
Let $\overline{r^2} \triangleq \tfrac{1}{T}\sum_{i=1}^{T} r_i^2$ denote the empirical estimator of the Pearson $\chi^2$ divergence. Then
\begin{equation}
\|G^{\text{lm}}\|_F^2 \;\leq\; c_{\max} \, \overline{r^2}, 
\qquad \text{where} \;\; c_{\max} \triangleq \max_{i} 
\hat{A}_i^2 \, \| e_{a_i} - \pi_\theta(\cdot \mid h_{L,i}) \|_2^2 \, 
\| h_{L,i} \|_2^2.
\end{equation}
\end{lemma}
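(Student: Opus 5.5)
We start from the batch-level definition $G^{\text{lm}} = \frac{1}{T}\sum_{i=1}^T G_i^{\text{lm}}$ and the rank-1 form supplied by Proposition~\ref{prop:grad_decomp}, namely $G_i^{\text{lm}} = E_i h_{L,i}^\top$ with $E_i = r_i \hat{A}_i (e_{a_i} - \pi_\theta(\cdot\mid h_{L,i}))$. The idea is to pull the importance ratio $r_i$ out of each summand as a scalar and control the remaining factor uniformly by $c_{\max}$. The bound then reduces to an elementary inequality relating the square of an average to the average of squares.

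\textbf{Step 1 (triangle inequality).} Apply the triangle inequality for the Frobenius norm:
\begin{equation*}
\|G^{\text{lm}}\|_F \le \frac{1}{T}\sum_{i=1}^T \|G_i^{\text{lm}}\|_F .
\end{equation*}
Since each $G_i^{\text{lm}}$ is rank one, the identity $\|uv^\top\|_F = \|u\|_2\|v\|_2$ gives
\begin{equation*}
\|G_i^{\text{lm}}\|_F = |r_i|\,|\hat{A}_i|\,\|e_{a_i}-\pi_\theta(\cdot\mid h_{L,i})\|_2\,\|h_{L,i}\|_2 .
\end{equation*}
Write $b_i \triangleq |\hat{A}_i|\,\|e_{a_i}-\pi_\theta\|_2\,\|h_{L,i}\|_2$. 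By definition $b_i^2 \le c_{\max}$ for every $i$, so $b_i \le \sqrt{c_{\max}}$. Because $r_i\ge 0$, this yields
\begin{equation*}
\|G^{\text{lm}}\|_F \le \sqrt{c_{\max}}\cdot\frac{1}{T}\sum_i r_i .
\end{equation*}

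\textbf{Step 2 (Jensen / Cauchy--Schwarz).} Square both sides and use $\bigl(\frac1T\sum_i r_i\bigr)^2 \le \frac1T\sum_i r_i^2 = \overline{r^2}$, which follows from Cauchy--Schwarz or Jensen applied to the convex map $x\mapsto x^2$. This gives
\begin{equation*}
\|G^{\text{lm}}\|_F^2 \le c_{\max}\,\overline{r^2},
\end{equation*}
as claimed.

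An equivalent one-shot route is to apply Cauchy--Schwarz directly to $\frac1T\sum_i r_i (b_i u_i v_i^\top)$, where the $u_i v_i^\top$ are unit rank-one matrices. Either way, the argument needs no assumptions beyond Proposition~\ref{prop:grad_decomp}.

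\textbf{Main obstacle.} The only delicate point is bookkeeping. The bound must apply to the unclipped-region gradient of Eq.~\eqref{eqn:raw_grad}, which is where the representation $E_i = r_i\hat{A}_i(e_{a_i}-\pi_\theta)$ is valid. For clipped tokens the gradient vanishes, so including them only loosens the bound and the inequality still holds after adding zero terms. We will state this explicitly.

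We will also remark that the inequality is generally loose whenever the per-token gradient directions cancel; no tightness is claimed. The inequality as stated shows that $\|G^{\text{lm}}\|_F^2/c_{\max}$ is a lower bound on $\overline{r^2}$. The connection to the Pearson $\chi^2$ statistic (via $\overline{r^2}-1$) is deferred to the subsequent result.
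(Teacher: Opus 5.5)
Your proof is correct and uses the same ingredients as the paper's: the rank-1 factorization $\|E_i h_{L,i}^\top\|_F = \|E_i\|_2\|h_{L,i}\|_2$, the uniform bound by $c_{\max}$, and one convexity step. The only difference is the order of operations: the paper applies Jensen directly to the convex map $\|\cdot\|_F^2$ to get $\|G^{\text{lm}}\|_F^2 \le \frac{1}{T}\sum_i \|G_i^{\text{lm}}\|_F^2$ and bounds each term by $c_{\max} r_i^2$, whereas you use the triangle inequality and apply Jensen only to the scalars $r_i$, which gives the slightly stronger intermediate bound $\|G^{\text{lm}}\|_F^2 \le c_{\max}\bigl(\frac{1}{T}\sum_i r_i\bigr)^2$ before relaxing to $c_{\max}\,\overline{r^2}$; your remark that clipped tokens contribute zero gradient is bookkeeping the paper sidesteps by simply defining $G^{\text{lm}}$ as the unclipped sum.
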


We now relate $\overline{r^2}$ to the Pearson $\chi^2$ statistic, yielding our second main result.
\begin{tcolorbox}[colback=gray!3, colframe=black, arc=4pt, left=5pt, right=5pt, top=2pt, bottom=2pt, boxrule=0.6pt]
\begin{theorem}[\texttt{lm\_head} Gradient Lower-Bounds Policy Divergence]
\label{thm:chi2_bound}
Let $\hat{\chi}^2 \triangleq \frac{1}{T}\sum_{i=1}^T (r_i^2 - 1)$ denote the empirical estimator of the Pearson $\chi^2$ divergence $\chi^2(\pi_\theta \| \pi_{\text{old}})$. Then
\begin{equation}
\|G^{\text{lm}}\|_F^2 \;\leq\; c_{\max} \bigl( 1 + \hat{\chi}^2 \bigr), \quad \text{or equivalently,} \quad \hat{\chi}^2 \geq \|G^{\text{lm}}\|_F^2 / c_{\max} - 1.
\label{eq:chi2_bound}
\end{equation}
\end{theorem}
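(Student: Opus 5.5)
The plan is to derive Theorem~\ref{thm:chi2_bound} directly from Lemma~\ref{lem:grad_bound}, so the work has two parts. First I establish (or recall) Lemma~\ref{lem:grad_bound}. Then I substitute $\overline{r^2} = 1 + \hat{\chi}^2$ into it.

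For Lemma~\ref{lem:grad_bound}, I start from Proposition~\ref{prop:grad_decomp}. It gives $G^{\text{lm}} = \frac{1}{T}\sum_i G_i^{\text{lm}}$ with $G_i^{\text{lm}} = r_i \hat{A}_i (e_{a_i} - \pi_\theta(\cdot\mid h_{L,i}))\, h_{L,i}^\top$.

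I first use the triangle inequality for the Frobenius norm, followed by Jensen's (equivalently Cauchy--Schwarz) inequality on the average:
\[
\|G^{\text{lm}}\|_F^2 \le \Bigl(\tfrac{1}{T}\textstyle\sum_i \|G_i^{\text{lm}}\|_F\Bigr)^2 \le \tfrac{1}{T}\textstyle\sum_i \|G_i^{\text{lm}}\|_F^2 .
\]

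Next, the rank-1 identity $\|uv^\top\|_F^2 = \|u\|_2^2\|v\|_2^2$ gives
\[
\|G_i^{\text{lm}}\|_F^2 = r_i^2\, \hat{A}_i^2\, \|e_{a_i} - \pi_\theta(\cdot\mid h_{L,i})\|_2^2\, \|h_{L,i}\|_2^2 \le r_i^2\, c_{\max}.
\]
This holds by the definition of $c_{\max}$ as a maximum over $i$. Averaging over $i$ then yields $\|G^{\text{lm}}\|_F^2 \le c_{\max}\,\overline{r^2}$.

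For the theorem itself, I note that $\hat{\chi}^2 = \frac{1}{T}\sum_i (r_i^2 - 1) = \overline{r^2} - 1$. Hence $\overline{r^2} = 1 + \hat{\chi}^2$, and substituting into the lemma gives the first inequality in \eqref{eq:chi2_bound}.

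For the equivalent form, I need $c_{\max} > 0$ so that I can divide by it. I argue this as follows:
\begin{itemize}
\item For any token with $\pi_\theta(a_i) < 1$, we have $\|e_{a_i} - \pi_\theta\|_2^2 \ge (1-\pi_\theta(a_i))^2 > 0$.
\item Lemma~\ref{lem:activation_bound} gives $\|h_{L,i}\|_2^2 \ge \alpha_{\min} d_{\text{model}} > 0$.
\item So $c_{\max} > 0$ as soon as some token has nonzero advantage.
\item In the degenerate case $c_{\max} = 0$, the gradient vanishes and the rewritten bound is vacuous.
\end{itemize}
Dividing by $c_{\max}$ and rearranging then gives $\hat{\chi}^2 \ge \|G^{\text{lm}}\|_F^2/c_{\max} - 1$.

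The step I expect to need the most care is the aggregation inequality, i.e., passing from the norm of the averaged rank-1 matrices to the average of their squared norms. The triangle inequality followed by Jensen handles it cleanly, even though the per-token gradients are not orthogonal, so no cross-term analysis is required.

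One further point should be stated explicitly. The gradient form from Eq.~\eqref{eqn:raw_grad} applies to unclipped tokens. Clipped tokens contribute zero gradient, so the bound only tightens if they are dropped from the numerator while their $r_i^2$ terms are kept in $\overline{r^2}$.
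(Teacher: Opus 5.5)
Your proposal is correct and follows the paper's proof essentially step for step. You establish Lemma~\ref{lem:grad_bound} by bounding the squared norm of the averaged rank-1 terms by the average of their squared norms, factor each term via $\|uv^\top\|_F^2=\|u\|_2^2\|v\|_2^2$, absorb the non-ratio factors into $c_{\max}$, and substitute $\overline{r^2}=1+\hat{\chi}^2$. Your triangle-inequality-plus-power-mean step is just an unpacked form of the paper's direct Jensen on the convex map $\|\cdot\|_F^2$. Your explicit handling of $c_{\max}>0$ and of clipped tokens adds care the paper omits, with one nit: when $c_{\max}=0$ the rearranged form is undefined rather than vacuous.
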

\end{tcolorbox}

The proofs of Lemma~\ref{lem:grad_bound} and 
Theorem~\ref{thm:chi2_bound} are deferred to 
Appendix~\ref{app:proof_chi2_bound}.

\textbf{Connection to the DWD phenomenon.}
Sample reuse in LLMs aggravates the policy shift between $\pi_\theta$ and $\pi_{\text{old}}$, eventually triggering training collapse (Section~\ref{sec:preliminaries}). The DWD phenomenon (Observation~\ref{obs:dwd}) captures this collapse through the synchronized emergence of an abrupt \texttt{lm\_head} weight surge and the onset of performance degradation. Theorem~\ref{thm:chi2_bound} provides the structural reason: since the \texttt{lm\_head} gradient norm lower-bounds $\hat{\chi}^2$, its surge implies a substantial amplification of the policy shift—a known driver of sample-reuse collapse. \emph{The \texttt{lm\_head} gradient norm is therefore not merely correlated with collapse; its surge is a mathematically certified signature of catastrophic policy shift.}

\section{Dynamic Gradient Gating (DGG)}
\label{sec:dgg}
Together, the DWD phenomenon (Observation~\ref{obs:dwd}) and our theoretical analysis establish the \texttt{lm\_head} gradient norm as a structurally faithful indicator of catastrophic policy shift: its surge is structurally localized (Theorem~\ref{thm:asymmetry}) and lower-bounds the policy shift between $\pi_\theta$ and $\pi_{\text{old}}$ (Theorem~\ref{thm:chi2_bound}). This empirical-theoretical convergence motivates a simple yet principled intervention—\textit{Dynamic Gradient Gating} (DGG)—that monitors this localized signal in real time and intercepts harmful updates \textit{before} they corrupt the model weights and optimizer state.

\textbf{Online detection via Z-score test.}
Since the magnitude of $\|G^{\text{lm}}\|_F^2$ varies across models, tasks, and training stages, we adopt the classical Z-score test, which adaptively standardizes against its recent statistics. Let $g_t \triangleq \|G_t^{\text{lm}}\|_F^2$ denote the \texttt{lm\_head} gradient energy at step $t$, and $\Delta g_t \triangleq g_t - g_{t-1}$ its step-wise increment. Since DWD manifests as an abrupt surge rather than sustained high magnitude, we monitor $\Delta g_t$, isolating fast changes from slow drift. Maintaining running estimates of the mean $\mu_t$ and standard deviation $\sigma_t$ of $\Delta g_t$, we compute the instantaneous Z-score:
\begin{equation}
z_t \;\triangleq\; \frac{\Delta g_t - \mu_t}{\sigma_t + \varepsilon}.
\label{eq:zscore}
\end{equation}
 A step is flagged as anomalous whenever $z_t > \tau$, where $\tau$ is a tunable threshold.

\textbf{Pre-optimizer gradient interception.}
Once $z_t > \tau$, DGG triggers a two-stage intervention: \textit{(i) Gradient Discard.} We zero out the gradient \textit{before} Adam, preventing corruption of its moment estimates that would persist for many steps. \textit{(ii) Reuse Termination.} We terminate the reuse loop and draw a fresh batch, breaking the feedback loop where a drifted $\pi_\theta$ keep inflating importance ratios. This pre-optimizer design stops the harm at its source (pseudocode in Appendix~\ref{app:algorithm}; implementation details, including running statistics formulation and hyperparameter settings, in Appendix~\ref{app:our_details}).

\section{Experiments}
\label{sec:experiments}

\subsection{Experimental Setup}
\textbf{Benchmarks.} 
We validate DGG on mathematical reasoning and agentic tasks. For mathematical reasoning, we use MATH500~\cite{lightman2024lets}, AIME25~\cite{aime25}, Minerva Math~\cite{NEURIPS2022_18abbeef}, and Olympiad Bench~\cite{he-etal-2024-olympiadbench}, reporting accuracy at sampling size 16 (mean@16). For agentic tasks---where higher rollout costs make DGG's benefits most pronounced---we consider ALFWorld~\cite{shridhar2021alfworld}, WebShop~\cite{NEURIPS2022_82ad13ec}, and search-augmented QA on HotpotQA~\cite{yang-etal-2018-hotpotqa}, 2Wiki~\cite{ho-etal-2020-constructing}, MuSiQue~\cite{trivedi2022musique}, and Bamboogle~\cite{press-etal-2023-measuring}.

\textbf{Training details.} 
We use Qwen3-4B-Instruct-2507 and Qwen2.5-7B-Instruct~\cite{yang2025qwen3} as our base models. We set the maximum reuse $K = 4$ and select the anomaly threshold $\tau$ from the candidate set $\{0.1, 0.5, 1.0\}$. For RL training, the rollout group size is $16$ for mathematical reasoning tasks and $8$ for agentic tasks.  Full configurations are deferred to Appendix~\ref{app:more_details}.

\begin{figure}[t]
    \centering\scriptsize\renewcommand\arraystretch{0.9}
    \setlength{\tabcolsep}{0pt}
	\begin{tabular}{c}
	\includegraphics[width=0.6\linewidth]{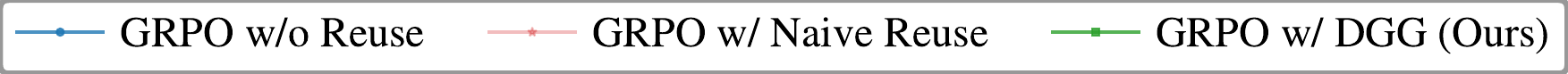}\\
	\end{tabular}
    \begin{tabular}{@{}cccc@{}}
    \includegraphics[width=0.25\textwidth]{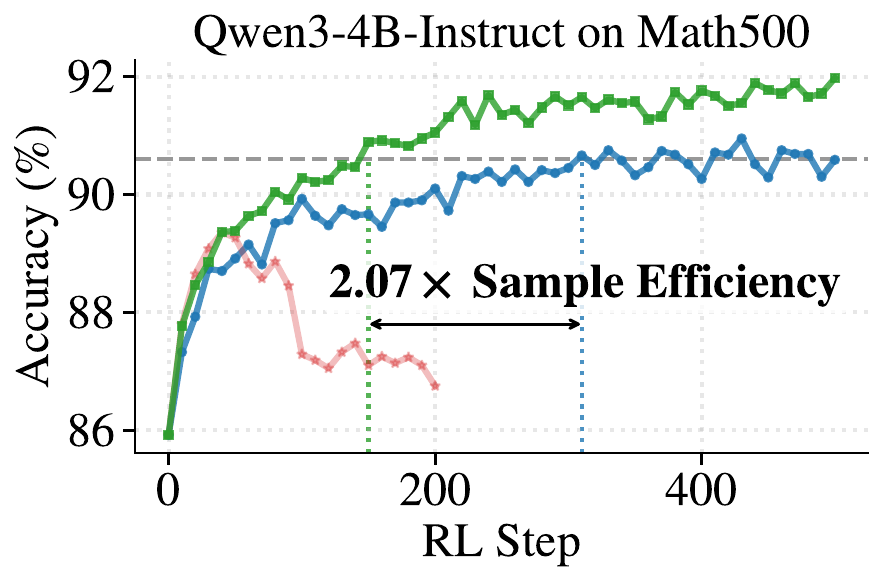}&
    \includegraphics[width=0.25\textwidth]{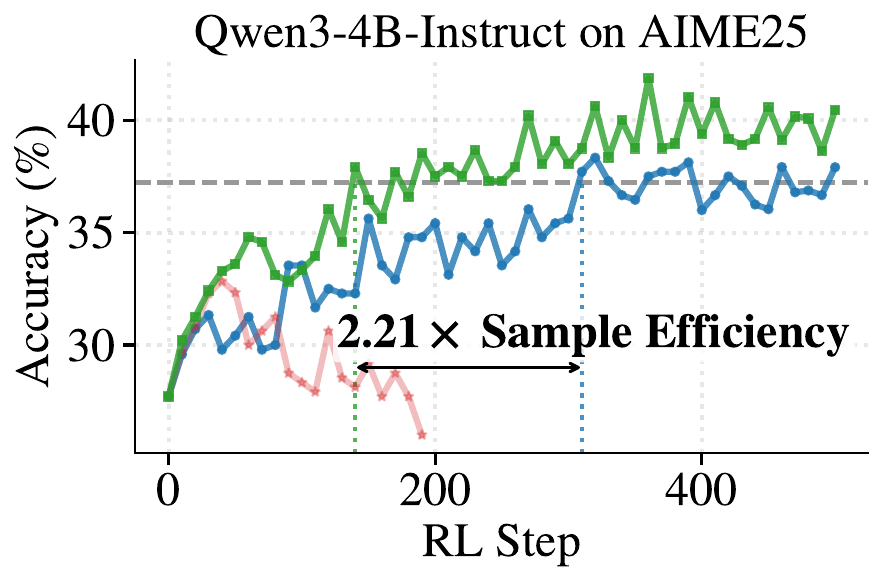}&
    \includegraphics[width=0.25\textwidth]{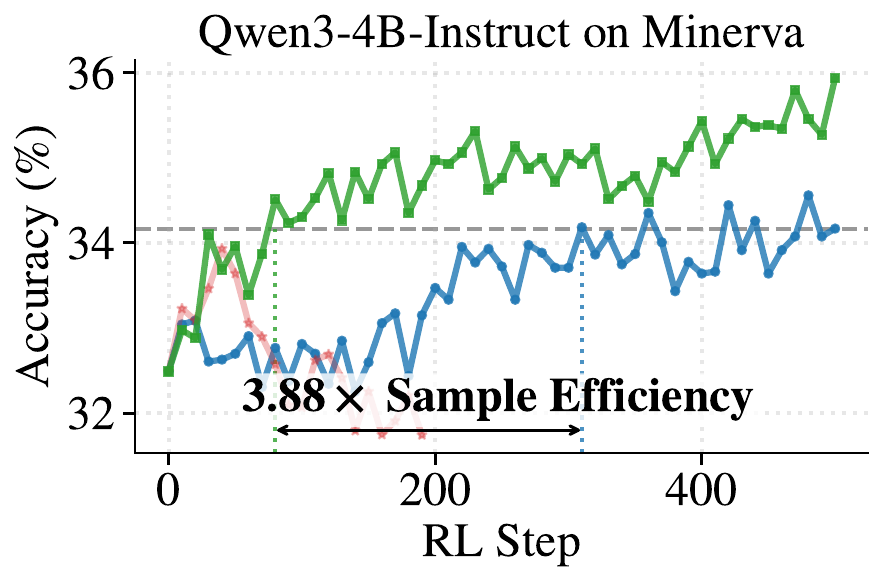}&
    \includegraphics[width=0.25\textwidth]{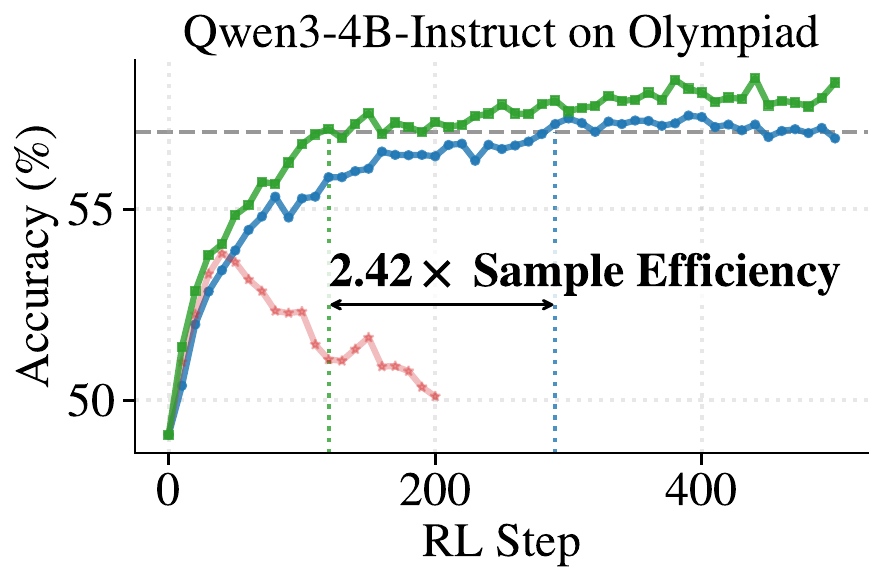}\\[-2pt]
    \includegraphics[width=0.25\textwidth]{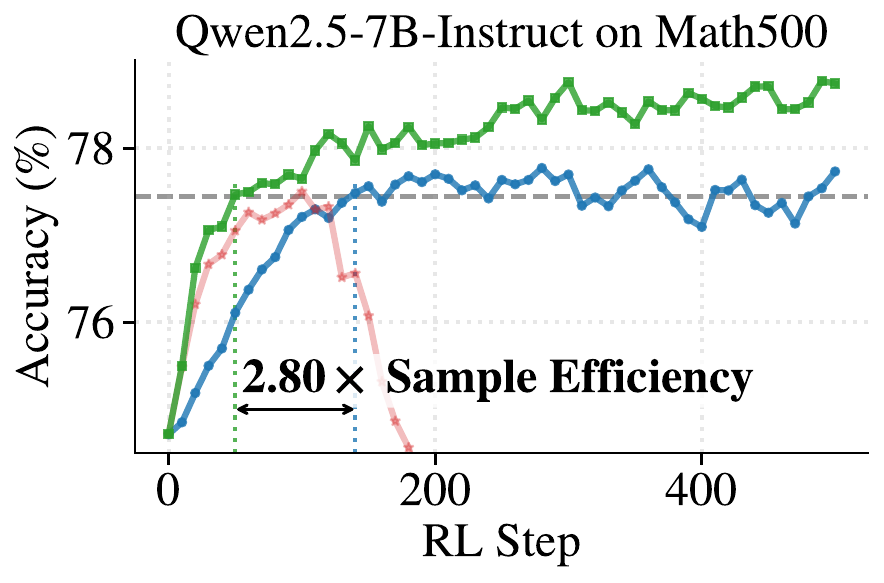}&
    \includegraphics[width=0.25\textwidth]{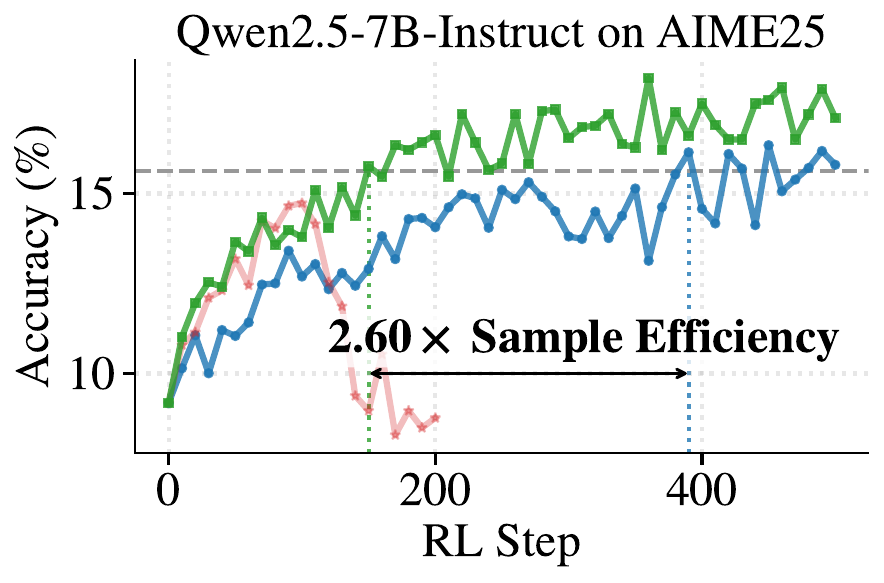}&
    \includegraphics[width=0.25\textwidth]{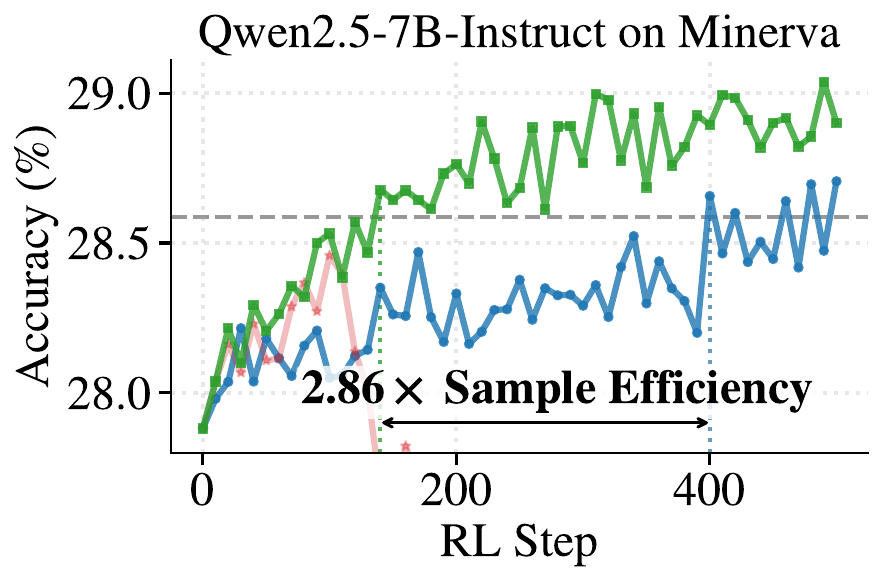}&
    \includegraphics[width=0.25\textwidth]{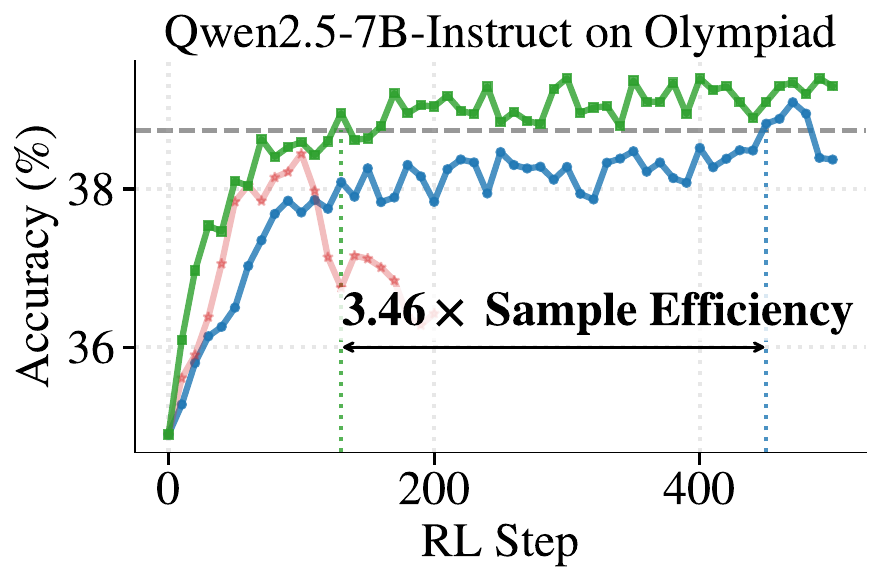}\\[-1pt]
    \multicolumn{4}{c}{\ \ \ \ \ \ \ \textbf{(a) Mathematical Reasoning}}\\[4pt]
    \includegraphics[width=0.25\textwidth]{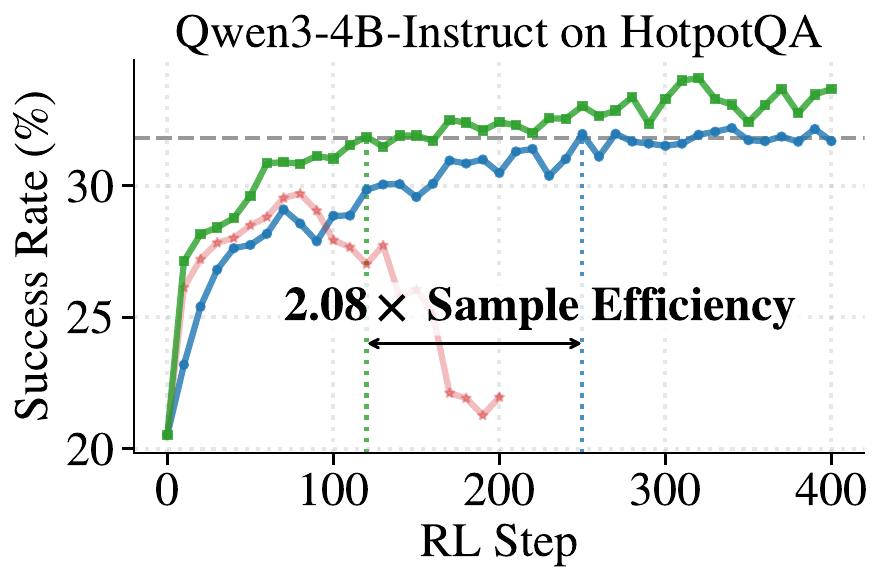}&
    \includegraphics[width=0.25\textwidth]{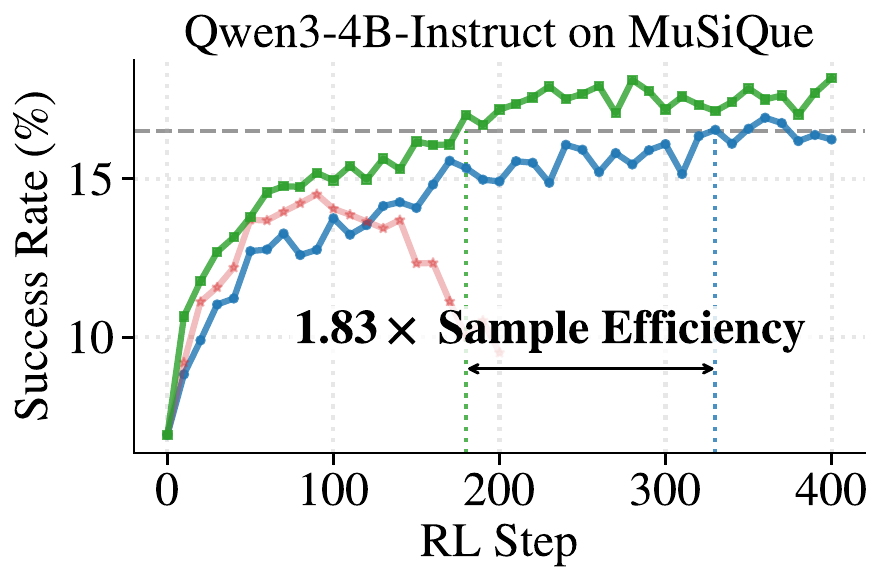}&
    \includegraphics[width=0.25\textwidth]{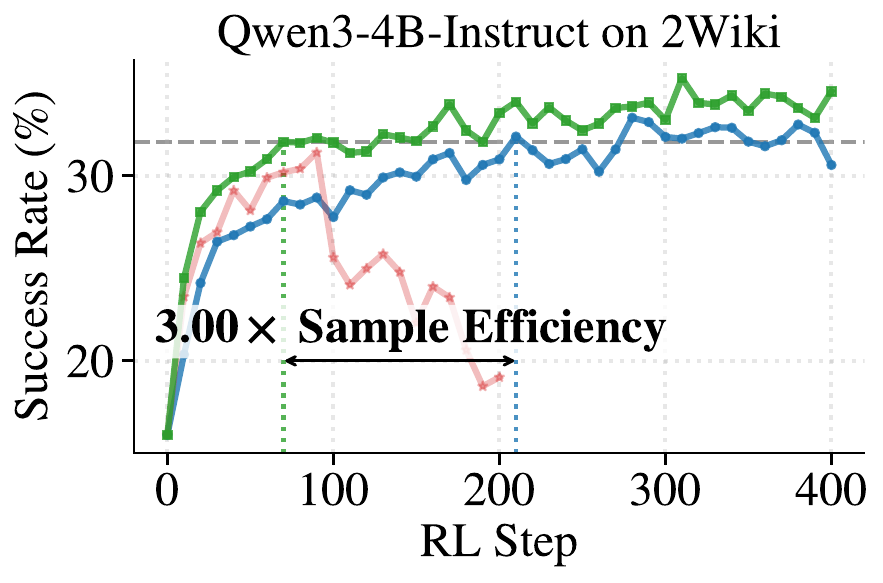}&
    \includegraphics[width=0.25\textwidth]{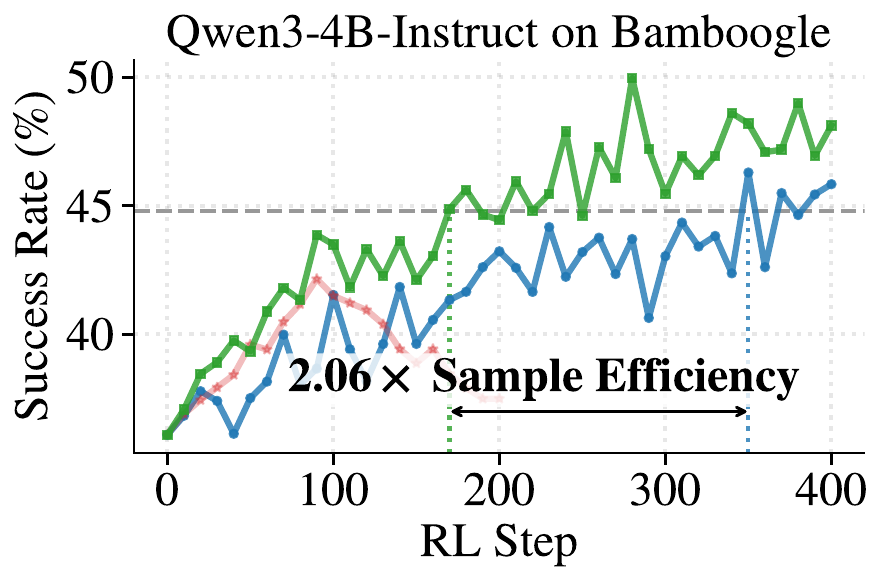}\\[-2pt]
    \includegraphics[width=0.25\textwidth]{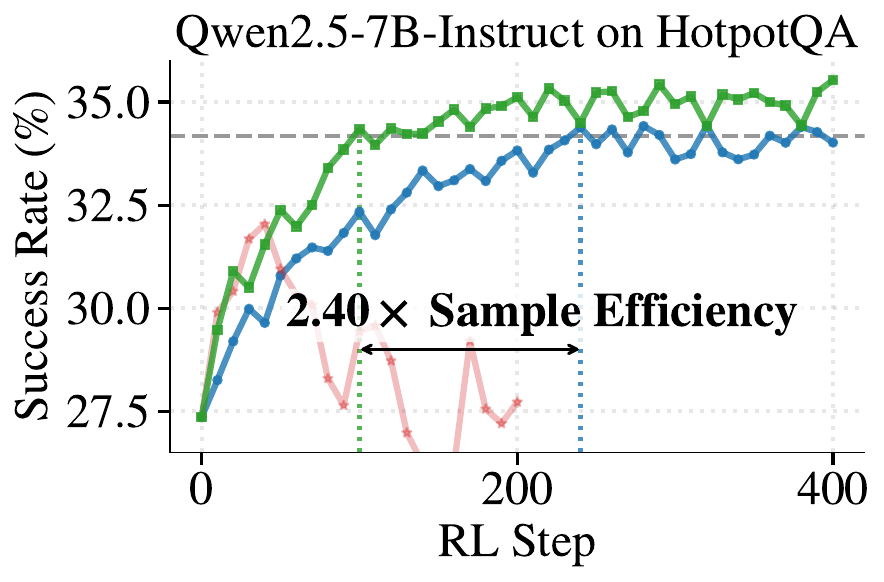}&
    \includegraphics[width=0.25\textwidth]{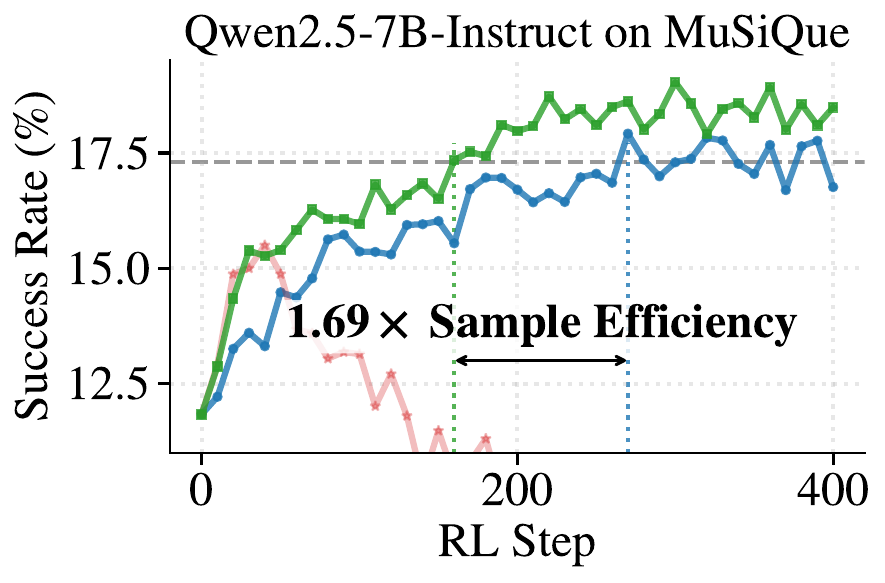}&
    \includegraphics[width=0.25\textwidth]{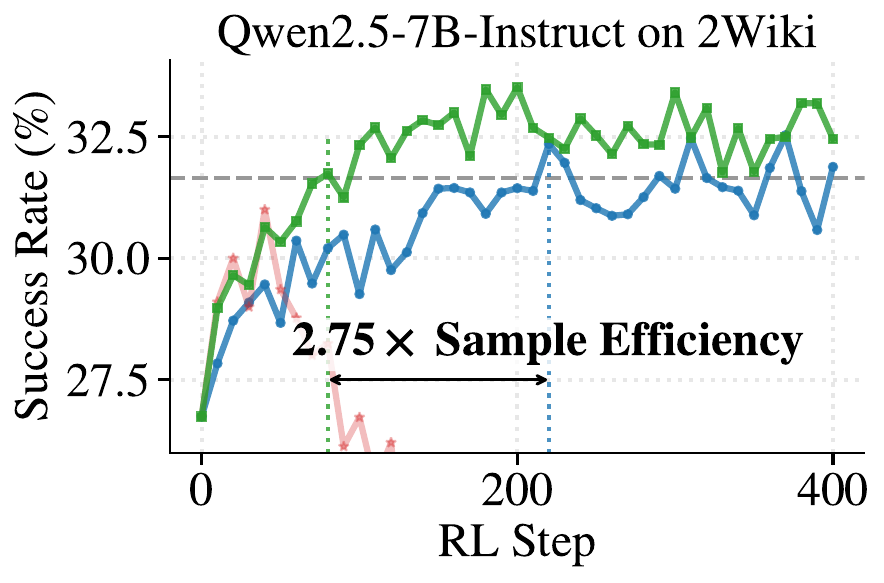}&
    \includegraphics[width=0.25\textwidth]{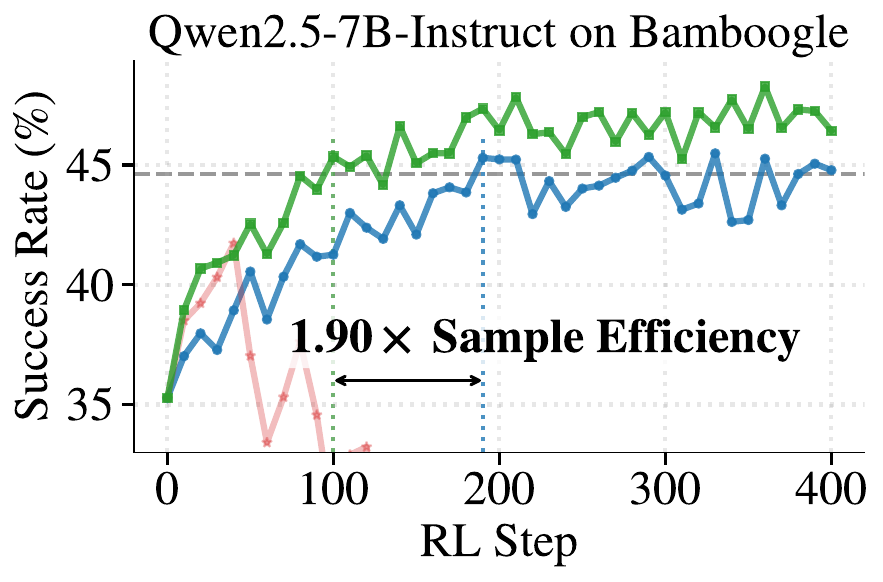}\\[-1pt]
    \multicolumn{4}{c}{\ \ \ \ \ \ \ \textbf{(b)  Search-Augmented QA}}\\[4pt]
    \includegraphics[width=0.25\textwidth]{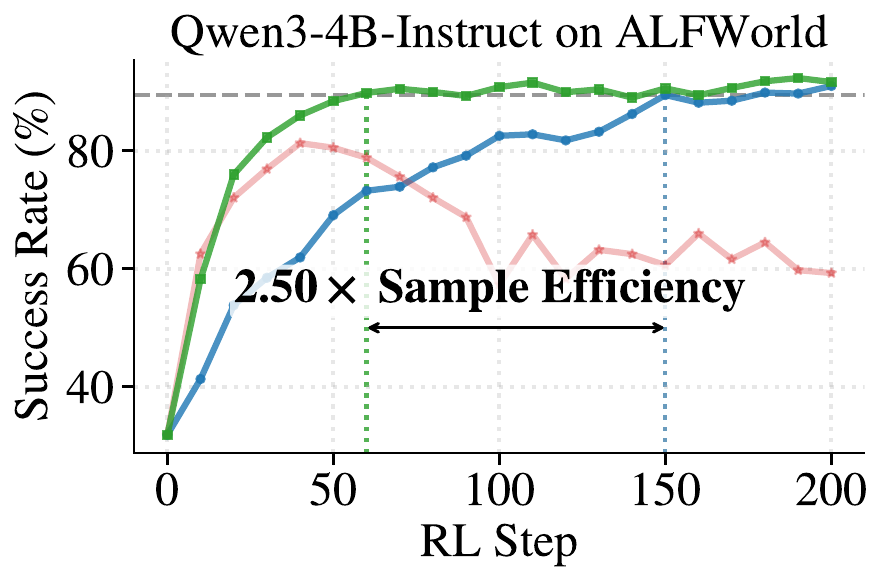}&
    \includegraphics[width=0.25\textwidth]{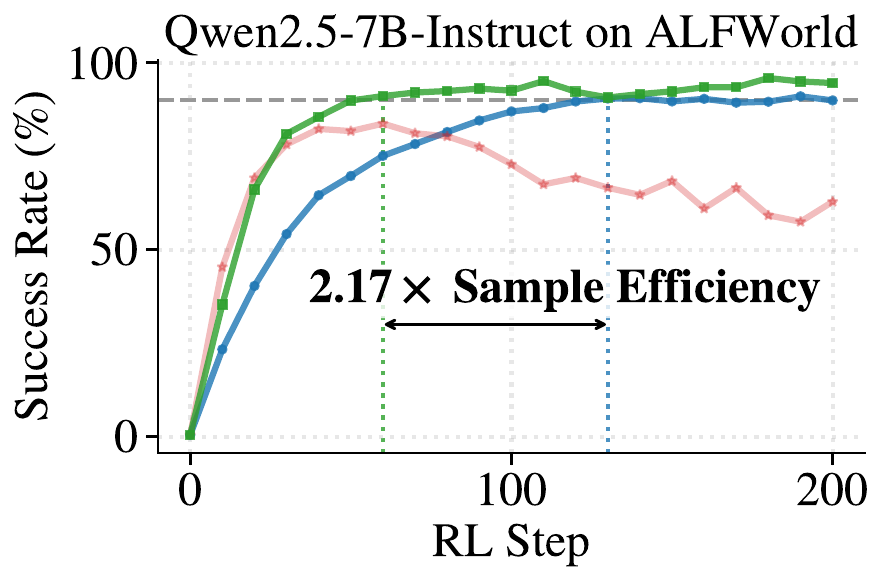}&
    \includegraphics[width=0.25\textwidth]{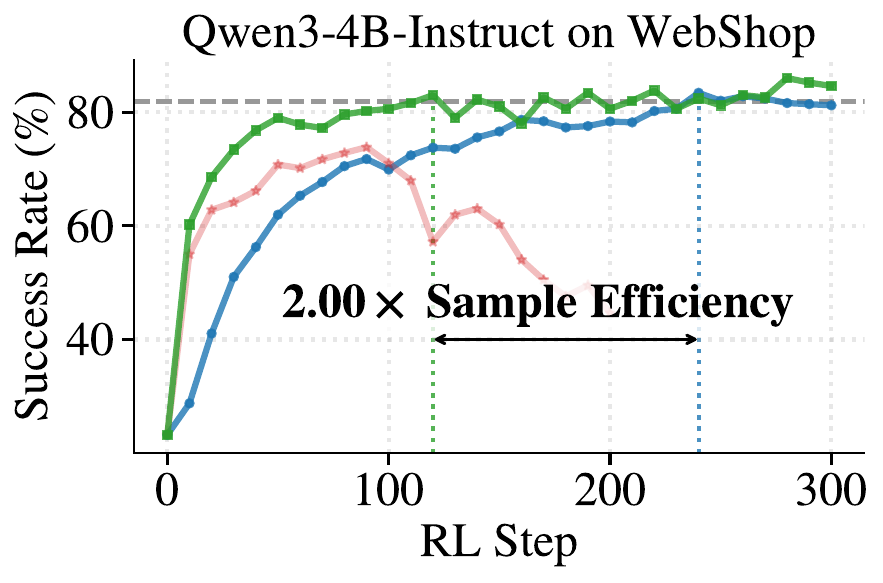}&
    \includegraphics[width=0.25\textwidth]{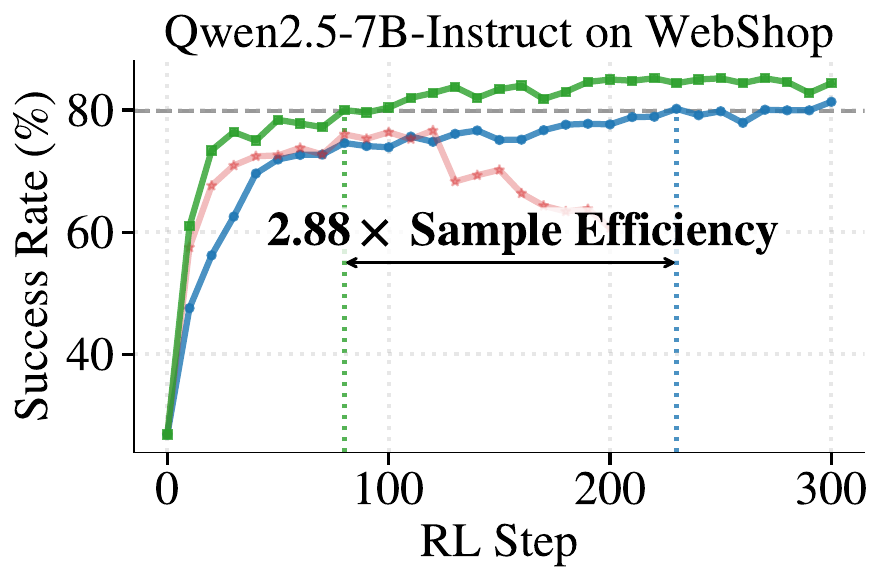}\\[-1pt]
    \multicolumn{2}{c}{\ \ \ \ \ \ \ \textbf{(c) ALFWorld}} & \multicolumn{2}{c}{\ \ \ \ \ \ \ \textbf{(d) WebShop}}\\[-7pt]
    \end{tabular}
    \caption{\textbf{Performance comparison} across two LLMs and four tasks. GRPO w/ Naive Reuse uses a fixed number of updates per batch, equal to DGG's maximum reuse. Gray dashed lines mark the converged performance of GRPO (Single-Use Rollout)—averaged over the last five checkpoints—used as the reference for sample efficiency.  Observation: \textit{DGG eliminates the instability of naive sample reuse, substantially improving sample efficiency with modest gains in final performance.}}
    \label{fig:performance}
\end{figure}

\begin{table}[]
\centering
\scriptsize
\caption{\textbf{Rollout\protect\footnotemark{} and wall-clock speedups of DGG} across two models and four tasks. Speedups are measured against the GRPO (Single-Use Rollout) baseline's converged performance, averaged over its last five checkpoints. Each cell shows the speedup factor with relative reduction in parentheses\protect\footnotemark{}. Math and Search-QA are averaged over four datasets each. Observation: \textit{DGG achieves up to $2.93\times$ rollout and $2.14\times$ wall-clock speedups across all settings.}
}
\label{tab:efficiency}
\renewcommand{\arraystretch}{0.8}
\setlength{\tabcolsep}{1pt}
\begin{tabularx}{\textwidth}{@{} >{\centering\arraybackslash}m{2.4cm} *{8}{>{\centering\arraybackslash}X} @{}}
\toprule
\multirow{2}{*}[-1ex]{\textbf{Model}}
& \multicolumn{2}{c}{\textbf{Math (avg.)}} 
& \multicolumn{2}{c}{\textbf{ALFWorld}} 
& \multicolumn{2}{c}{\textbf{WebShop}} 
& \multicolumn{2}{c}{\textbf{Search-QA (avg.)}} \\
\cmidrule(lr){2-3} \cmidrule(lr){4-5} \cmidrule(lr){6-7} \cmidrule(lr){8-9}
& Rollout & Time
& Rollout & Time
& Rollout & Time
& Rollout & Time \\
\midrule
\multirow{2}{*}[-0.1ex]{Qwen3-4B-Instruct}   & 2.64$\times$ & 1.45$\times$ & 2.50$\times$ & 2.14$\times$ & 2.00$\times$ & 1.31$\times$ & 2.24$\times$ & 1.82$\times$ \\
                                              & (62.2\%)     & (30.9\%)     & (60.0\%)     & (53.4\%)     & (50.0\%)     & (23.6\%)     & (55.4\%)     & (45.1\%)     \\
\midrule
\multirow{2}{*}[-0.1ex]{Qwen2.5-7B-Instruct} & 2.93$\times$ & 1.37$\times$ & 2.17$\times$ & 1.90$\times$ & 2.88$\times$ & 1.81$\times$ & 2.19$\times$ & 1.64$\times$ \\
                                              & (65.9\%)     & (26.9\%)     & (53.9\%)     & (47.5\%)     & (65.2\%)     & (44.7\%)     & (54.3\%)     & (38.9\%)     \\
\bottomrule
\end{tabularx}
\end{table}
\footnotetext{Rollout speedup is equivalent to the sample efficiency reported in Figure~\ref{fig:performance}.}
\footnotetext{Speedup factor is defined as $\text{baseline cost}/\text{DGG cost}$, and relative reduction  as $1 - \text{DGG cost}/\text{baseline cost}$, where cost is the total rollouts (or wall-clock time) needed to reach the baseline's converged performance.}

\subsection{Main Results}
We compare DGG against vanilla GRPO with single-use rollouts (the standard practice in current RLVR systems) and GRPO with naive sample reuse (the most direct alternative). Figure~\ref{fig:performance} reports the training dynamics across all settings, while Table~\ref{tab:efficiency} quantifies the resulting savings in rollouts and wall-clock time; seed stability is 
further verified in Appendix~\ref{app:seed_stability}. We summarize main findings below.

\textbf{DGG substantially mitigates the training instability induced by naive sample reuse.} As shown in Figure~\ref{fig:performance}, although GRPO w/ Naive Reuse converges faster than the single-use baseline in the early stage, it suffers from a consistent failure pattern: performance is severely degraded following the initial improvement, often before reaching the single-use baseline's final performance. In contrast, GRPO w/ DGG retains the early-stage acceleration while avoiding such collapses across all settings. This is consistent with our analysis in Section~\ref{sec:dwd}: by monitoring the \texttt{lm\_head} gradient norm as a signal of policy shift (Theorem~\ref{thm:chi2_bound}), DGG discards harmful updates before they affect the optimizer state, addressing the main failure mode of sample reuse.

\textbf{DGG significantly improves both sample and wall-clock efficiency without sacrificing final performance.} Figure~\ref{fig:performance} shows that GRPO w/ DGG consistently reaches the single-use baseline's converged performance with substantially fewer RL steps across all 16 settings. Table~\ref{tab:efficiency} further quantifies these gains in terms of total training cost: DGG achieves $2.00\times$–$2.93\times$ rollout speedups and $1.31\times$–$2.14\times$ wall-clock speedups over the single-use baseline. Moreover, DGG also yields modest improvements in final performance over the single-use baseline, indicating that safe sample reuse not only accelerates training but also helps the model converge to a slightly better solution.

\textbf{The benefits of DGG generalize across model scales and task domains.}
The improvements above are consistent across two LLMs from different generations of the Qwen family (Qwen3-4B and Qwen2.5-7B) and four representative task domains spanning mathematical reasoning and three agentic settings (search-augmented QA, ALFWorld, and WebShop). This generality is consistent with our observation in Section~\ref{sec:dwd} that the DWD phenomenon is broadly present across diverse LLM architectures and tasks, providing a principled foundation for DGG's wide applicability.

 \begin{figure*}[]
    \centering\scriptsize\renewcommand\arraystretch{0.9}
	\begin{tabular}{c}
	\includegraphics[width=0.5\linewidth]{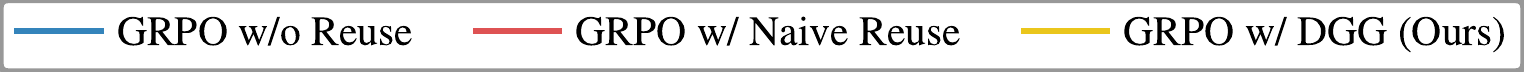}\\
	\end{tabular}
    \setlength{\tabcolsep}{0pt}
    \begin{tabular}{@{}cccc@{}}
    \includegraphics[width=0.25\textwidth]{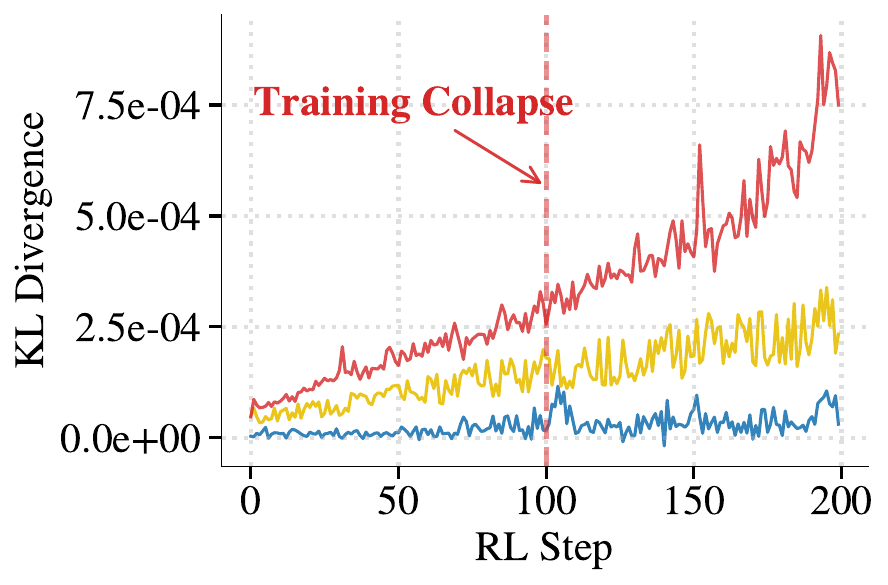}&
    \includegraphics[width=0.25\textwidth]{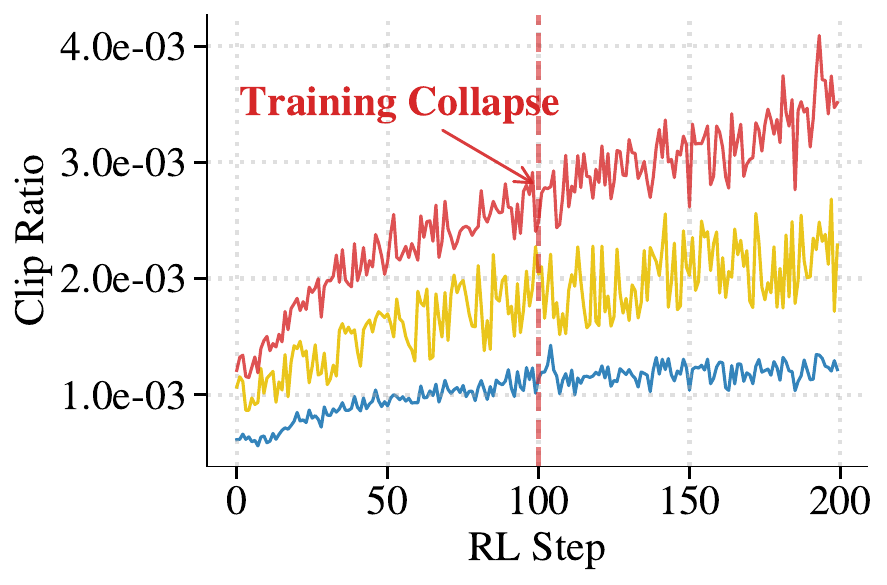}&
    \includegraphics[width=0.25\textwidth]{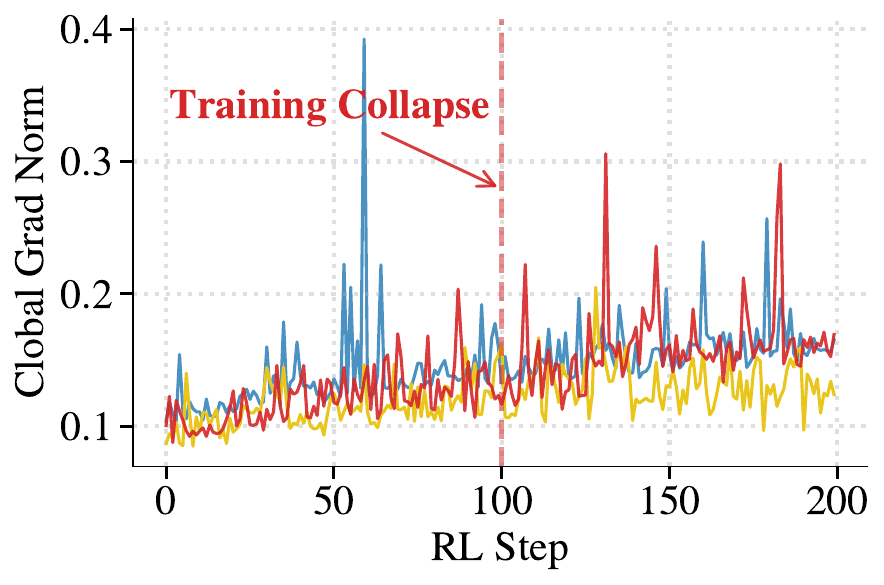}&
    \includegraphics[width=0.25\textwidth]{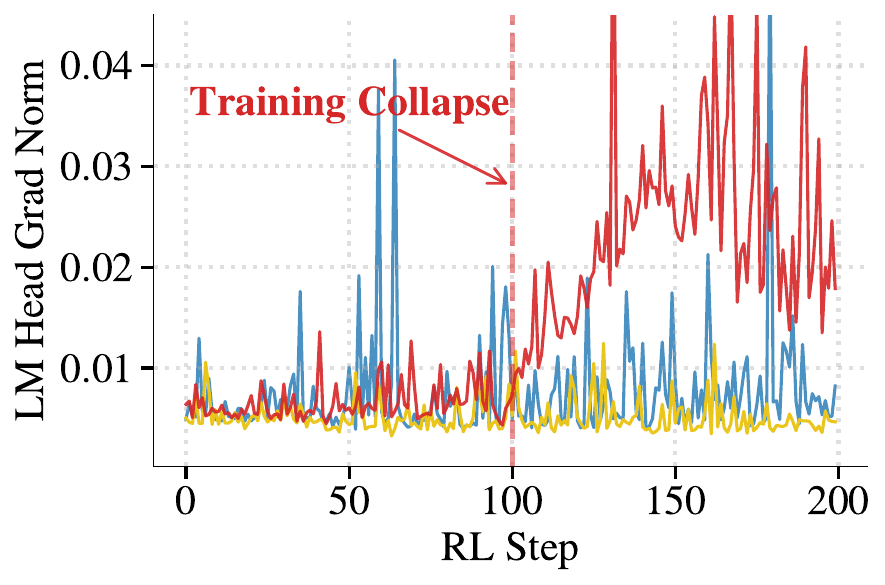}\\
    \end{tabular}
    \vspace{-0.4cm}
    \caption{\textbf{Comparison of different monitoring signals} on Qwen2.5-7B-Instruct (Math500). The red dashed line marks the onset of training collapse for GRPO w/ Naive Reuse. Observation: \textit{While KL divergence, clip ratio, and global gradient norm lack clear collapse signals, the \texttt{lm\_head} gradient norm provides a sharp spike at collapse, serving as a reliable indicator that empirically validates our Structural Gradient Asymmetry (Theorem~\ref{thm:asymmetry}) and Divergence Lower Bound (Theorem~\ref{thm:chi2_bound}).}}    \label{fig:compare_monitor}
\end{figure*}

\begin{figure}[]
    \centering
    \begin{minipage}[b]{0.49\textwidth}
    	\centering\scriptsize\renewcommand\arraystretch{1.0}
		\begin{tabular}{c}
		\hspace*{0.3cm}\includegraphics[width=0.9\linewidth]{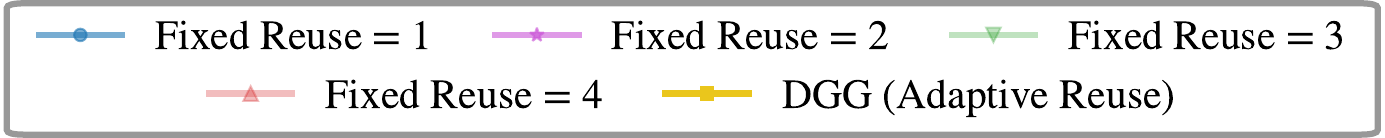}\\[-1pt]
		\end{tabular}
    	\setlength{\tabcolsep}{0pt}
    	\begin{tabular}{cc}
    	\includegraphics[width=0.5\textwidth]{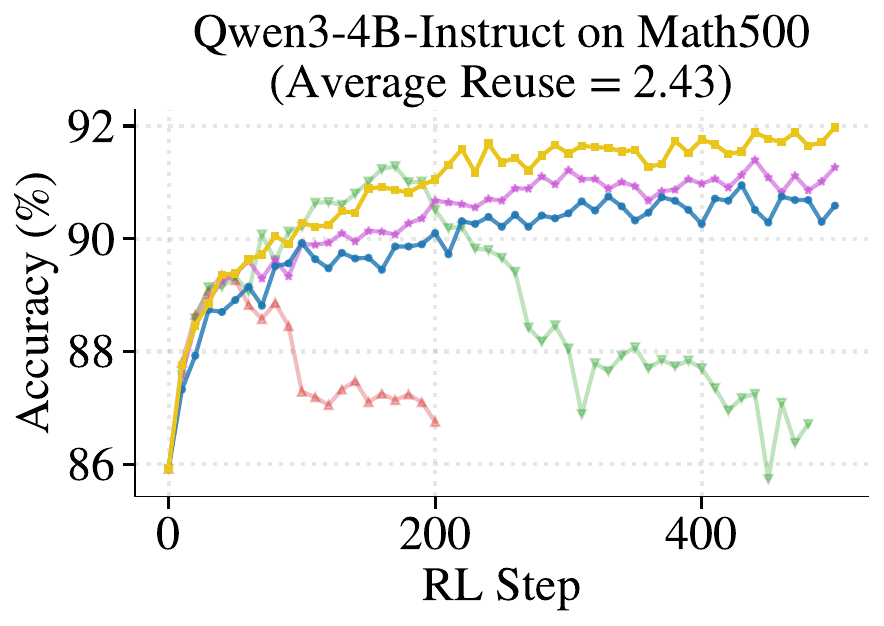}&
    	\includegraphics[width=0.5\textwidth]{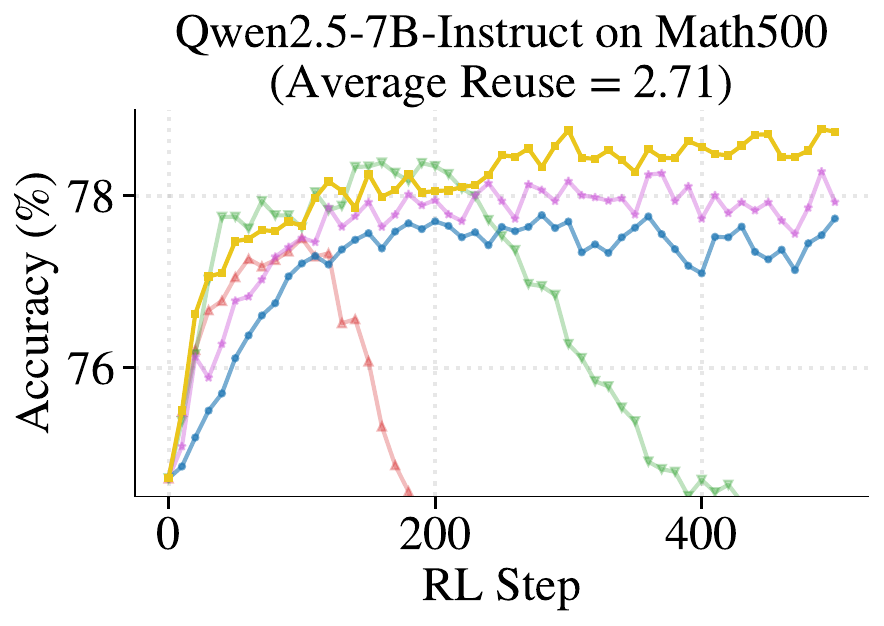}\\
    	\end{tabular}
    	\vspace{-0.4cm}
        \caption{\textbf{Comparison between fixed and adaptive reuse mechanisms.} Observation: \textit{The adaptive mechanism consistently performs better than all fixed counterparts across multiple reuse steps.}}
        \label{fig:diff_staleness}
    \end{minipage}
    \hfill 
    \begin{minipage}[b]{0.49\textwidth}
        \centering\scriptsize\renewcommand\arraystretch{1.0}
    	\setlength{\tabcolsep}{0pt}
    	\begin{tabular}{cc}
    	\includegraphics[width=0.5\textwidth]{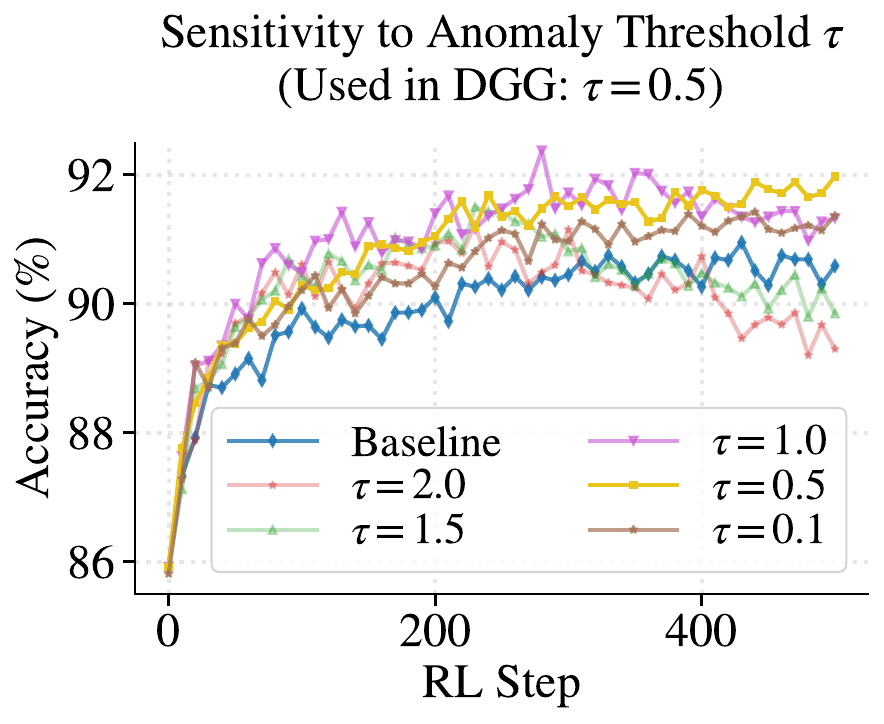}&
    	\includegraphics[width=0.5\textwidth]{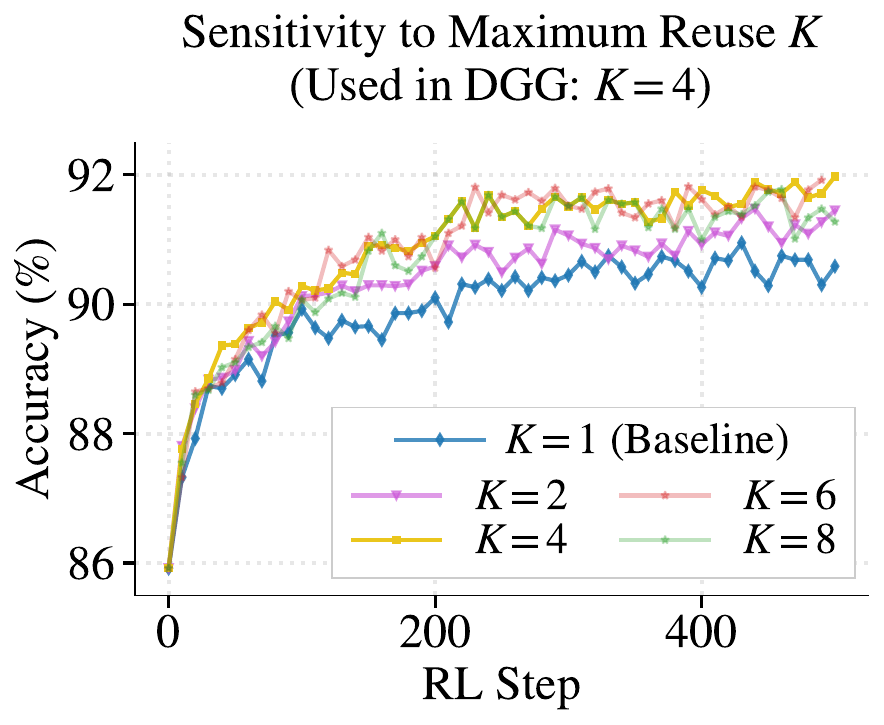}\\
    	\end{tabular}
    	\vspace{-0.4cm}
        \caption{\textbf{Sensitivity analysis on DGG's hyperparameters $\tau$ and $K$}, conducted on Qwen3-4B-Instruct (Math500). Note that $K=1$ reduces to the single-use baseline. Observation: \textit{DGG achieves better performance than the single-use baseline across most  settings.}}
        \label{fig:sensitivity}
    \end{minipage}
\end{figure}

\subsection{Analysis and Discussion}
Beyond demonstrating DGG's effectiveness, we now examine three aspects of its design: the choice of \texttt{lm\_head} gradient norm as the monitoring signal, the advantage of adaptive gating over fixed-reuse alternatives, and practical guidelines for setting its hyperparameters.

\textbf{Comparison with alternative monitoring signals.} Figure~\ref{fig:compare_monitor} evaluates four candidate monitoring signals: the KL divergence between $\pi_\theta$ and $\pi_{\text{old}}$, the clip ratio, the global gradient norm, and the \texttt{lm\_head} gradient norm. While the KL divergence, clip ratio, and global gradient norm fail to exhibit a distinct spike at the onset of collapse, the \texttt{lm\_head} gradient norm displays a sharp, well-scaled spike precisely synchronized with collapse onset. \textit{This observation directly corroborates our theoretical framework in Section~\ref{sec:dwd}: the \texttt{lm\_head} gradient norm is structurally localized (Theorem~\ref{thm:asymmetry}) and provides a lower bound on policy divergence (Theorem~\ref{thm:chi2_bound}), thereby justifying its selection as DGG's monitoring signal.}

\textbf{Comparison between fixed and adaptive reuse mechanisms.} A natural baseline for stability is to manually tune a conservative fixed-reuse mechanism. Figure~\ref{fig:diff_staleness} compares DGG against fixed-reuse variants with $K \in \{1, 2, 3, 4\}$. Fixed reuse exposes a fundamental trade-off: small counts maintain stability but limit efficiency, whereas larger counts accelerate early convergence but trigger collapse. By contrast, DGG dynamically modulates reuse based on the \texttt{lm\_head} gradient, inheriting aggressive reuse's early acceleration while preserving stability via real-time gating. \textit{DGG thus consistently outperforms all fixed counterparts, resolving the stability-efficiency trade-off intrinsic to fixed reuse.}

\textbf{Sensitivity analysis and practical recommendations.} DGG introduces two hyperparameters: the anomaly threshold $\tau$ and the maximum reuse $K$, where $K = 1$ reduces to the single-use baseline. As shown in Figure~\ref{fig:sensitivity}, $\tau \in \{0.1, 0.5, 1.0\}$ achieves better performance compared with the single-use baseline, and \textit{the cost of mis-tuning $\tau$ is asymmetric}: an overly small $\tau$ merely sacrifices a few reuse steps and still outperforms the single-use baseline, whereas an overly large $\tau$ allows harmful gradients to pass and degrades performance. We therefore recommend selecting $\tau$ from the candidate set $\{0.1, 0.5, 1.0\}$, biased toward the smaller end. For $K$, all settings with $K \geq 2$ outperform the single-use baseline, with marginal differences across $K \in \{4, 6, 8\}$. Since larger $K$ incurs higher worst-case computation, we recommend $K = 4$ as a practical default.

\vspace{-0.1cm}
\section{Conclusion and Limitations}
\label{sec:conclusion}
\vspace{-0.1cm}
We identify the DWD phenomenon: sample-reuse collapse in RLVR is driven by harmful gradients structurally concentrating at the \texttt{lm\_head}. We prove this gradient norm lower-bounds policy divergence, enabling real-time detection of catastrophic shifts. Consequently, we propose DGG to intercept anomalous updates pre-optimizer, significantly improving sample efficiency across reasoning and agentic tasks without performance loss. However, our analysis focuses on the GRPO objective. Future work will investigate standard PPO to determine if a similar structural divergence emerges at the \texttt{value\_head} under value network collapse.

\bibliography{example_paper.bib}
\bibliographystyle{plainnat}
\newpage
\appendix

{\setlength{\baselineskip}{10pt}
 \tableofcontents}
 
\clearpage

\section{Detailed Proofs and Empirical Measurements for Section~\ref{subsec:dwd_theory}}

\label{app:dwd_proofs}

This appendix provides detailed proofs for all theoretical results presented in Section~\ref{subsec:dwd_theory}, alongside empirical measurements supporting Theorem~\ref{thm:asymmetry}. To make this section self-contained, we restate each result prior to its proof. Throughout the appendix, we retain the notation introduced in the main text (see ``Setup and notation'' in Section~\ref{subsec:dwd_theory}).

\subsection{Proof of Proposition~\ref{prop:grad_decomp} (Gradient Decomposition)}
\label{app:proof_grad_decomp}

\begin{proposition}[Restatement of Proposition~\ref{prop:grad_decomp}]
Let $\mathcal{L}_i$ denote the token-level objective for token $a_i$. Define the raw error signal at the final logits as $E_i \triangleq r_i \hat{A}_i ( e_{a_i} - \pi_\theta(\ \cdot \ | h_{L,i}) ) \in \mathbb{R}^{d_{\text{vocab}}}$. Then for any token position $i$, the gradients of $\mathcal{L}_i$ with respect to the \texttt{lm\_head} weight $W_{\text{lm}}$ and an arbitrary intermediate linear layer weight $W_{\text{int}}$ admit the following closed-form rank-1 structures:\vspace{-2pt}
\begin{equation}
G_i^{\text{lm}} \triangleq \nabla_{W_{\text{lm}}} \mathcal{L}_i = E_i \, h_{L,i}^\top, \qquad
G_i^{\text{int}} \triangleq \nabla_{W_{\text{int}}} \mathcal{L}_i = \bigl( J_i^\top E_i \bigr) \bigl( x_i^{\text{int}} \bigr)^\top,
\end{equation}
where $J_i = \partial z_i / \partial y_i$ is the Jacobian of the output logits $z_i$ with respect to the output of the intermediate layer $y_i = W_{\text{int}} x_i^{\text{int}}$.
\end{proposition}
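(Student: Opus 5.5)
The plan is a direct chain-rule computation in two stages: first the gradient of $\mathcal{L}_i$ with respect to the logits $z_i = W_{\text{lm}} h_{L,i}$, then backpropagation of that logit gradient to each weight matrix. The one modelling convention to fix at the outset is the standard surrogate-gradient one used in Eq.~\eqref{eqn:raw_grad}. In the unclipped region, $\mathcal{L}_i = r_i(\theta)\hat{A}_i$, so $\nabla \mathcal{L}_i = r_i \hat{A}_i \nabla \log \pi_\theta(a_i \mid h_{L,i})$. Here $\hat{A}_i$ is a constant, and the factor $r_i$ comes from differentiating the ratio itself, because $\nabla r_i = r_i \nabla \log \pi_\theta(a_i)$ and $\pi_{\text{old}}$ does not depend on $\theta$.

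\textbf{Step 1 (logit gradient).} Write $\log \pi_\theta(a_i \mid h_{L,i}) = z_{i,a_i} - \log \sum_{j} \exp(z_{i,j})$. Differentiating gives
$$\partial \log \pi_\theta(a_i)/\partial z_i = e_{a_i} - \mathrm{softmax}(z_i) = e_{a_i} - \pi_\theta(\cdot \mid h_{L,i}).$$
Hence $\partial \mathcal{L}_i / \partial z_i = r_i \hat{A}_i (e_{a_i} - \pi_\theta(\cdot\mid h_{L,i})) = E_i$. The main care point in the whole argument is this bookkeeping: making sure $r_i$ appears exactly once, via $\nabla r_i = r_i \nabla\log\pi_\theta$. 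Getting it wrong would produce a spurious extra factor of $r_i$, or none at all.

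\textbf{Step 2 (\texttt{lm\_head}).} Since $z_i = W_{\text{lm}} h_{L,i}$ is linear in $W_{\text{lm}}$, the $(k,l)$ entry of the gradient is
$$\partial \mathcal{L}_i/\partial (W_{\text{lm}})_{kl} = (E_i)_k (h_{L,i})_l.$$
In matrix form this reads $G_i^{\text{lm}} = E_i h_{L,i}^\top$, which is rank one.

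\textbf{Step 3 (intermediate layer).} Treat $x_i^{\text{int}}$ as fixed with respect to $W_{\text{int}}$, i.e.\ consider only the contribution of this token position's forward path through $W_{\text{int}}$. Cross-position effects through attention are absorbed into the definition of $J_i$ as the total derivative of $z_i$ with respect to $y_i$.

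The chain rule then gives $\partial \mathcal{L}_i/\partial y_i = J_i^\top (\partial \mathcal{L}_i/\partial z_i) = J_i^\top E_i$. Since $y_i = W_{\text{int}} x_i^{\text{int}}$ is linear in $W_{\text{int}}$, the same entrywise computation as in Step 2 yields
$$G_i^{\text{int}} = (J_i^\top E_i)(x_i^{\text{int}})^\top.$$
This is again rank one, which completes the proposition.
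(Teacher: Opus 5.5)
Your proposal is correct and follows essentially the same route as the paper. You first obtain the logit gradient $E_i$, via $\nabla r_i = r_i\nabla\log\pi_\theta$ and the log-softmax derivative, where the paper instead multiplies $\hat{A}_i/\pi_{\text{old}}(a_i)$ by the softmax derivative $\pi_\theta(a_i)(e_{a_i}-\pi_\theta)$; you then read off both rank-1 gradients entrywise from the linear maps $z_i = W_{\text{lm}} h_{L,i}$ and $y_i = W_{\text{int}} x_i^{\text{int}}$, exactly as the paper does. Your restriction to the position-$i$ path through $W_{\text{int}}$ is the same simplification the paper makes implicitly when it asserts that $(W_{\text{int}})_{m,p}$ only affects $(y_i)_m$; note, though, that the dependence of $z_i$ on the other positions' outputs $y_j$ ($j<i$) is dropped by this convention rather than absorbed into $J_i$, since those terms carry $x_j^{\text{int}}$ instead of $x_i^{\text{int}}$ and would break the rank-1 form.
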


\begin{proof}
The proof relies on the standard multivariate chain rule applied to the computational graph of the transformer architecture. We decouple the derivation into two steps, tracking the gradient flow from the final loss down to the respective layer weights.

\textbf{Step 1: Gradient with respect to the \texttt{lm\_head} weight.}
Recall that the policy distribution $\pi_\theta(\cdot \mid h_{L,i})$ is parameterized by the softmax function applied to the final pre-activation logits $z_i \in \mathbb{R}^{d_{\text{vocab}}}$. In the unclipped region of the GRPO surrogate objective, the objective for a single token $a_i$ is given by $\mathcal{L}_i = r_i \hat{A}_i = \frac{\pi_\theta(a_i \mid h_{L,i})}{\pi_{\text{old}}(a_i \mid h_{L,i})} \hat{A}_i$.

To compute the gradient of $\mathcal{L}_i$ with respect to the logits $z_i$, we apply the chain rule:
\begin{equation}
\nabla_{z_i} \mathcal{L}_i = \frac{\partial \mathcal{L}_i}{\partial \pi_\theta(a_i \mid h_{L,i})} \nabla_{z_i} \pi_\theta(a_i \mid h_{L,i}).
\end{equation}

The derivative of the loss with respect to the scalar probability of the sampled token is simply:
\begin{equation}
\frac{\partial \mathcal{L}_i}{\partial \pi_\theta(a_i \mid h_{L,i})} = \frac{\hat{A}_i}{\pi_{\text{old}}(a_i \mid h_{L,i})}.
\end{equation}

Furthermore, the gradient of the softmax probability $\pi_\theta(a_i \mid h_{L,i})$ with respect to the full logit vector $z_i$ evaluates to:
\begin{equation}
\nabla_{z_i} \pi_\theta(a_i \mid h_{L,i}) = \pi_\theta(a_i \mid h_{L,i}) \bigl( e_{a_i} - \pi_\theta(\cdot \mid h_{L,i}) \bigr),
\end{equation}
where $e_{a_i} \in \mathbb{R}^{d_{\text{vocab}}}$ is the one-hot indicator vector for the sampled token $a_i$. Multiplying these two components together, the behavior policy probability $\pi_{\text{old}}$ and the current policy probability $\pi_\theta$ merge back to recover the importance ratio $r_i$:
\begin{equation}
\nabla_{z_i} \mathcal{L}_i = \frac{\pi_\theta(a_i \mid h_{L,i})}{\pi_{\text{old}}(a_i \mid h_{L,i})} \hat{A}_i \bigl( e_{a_i} - \pi_\theta(\cdot \mid h_{L,i}) \bigr) = r_i \hat{A}_i \bigl( e_{a_i} - \pi_\theta(\cdot \mid h_{L,i}) \bigr) \triangleq E_i.
\label{eq:logit_grad}
\end{equation}

The forward pass of the \texttt{lm\_head} is a linear projection mapping the final hidden state $h_{L,i} \in \mathbb{R}^{d_{\text{model}}}$ to the logits $z_i \in \mathbb{R}^{d_{\text{vocab}}}$:
\begin{equation}
z_i = W_{\text{lm}} h_{L,i}.
\end{equation}
To derive the gradient with respect to the weight matrix $W_{\text{lm}} \in \mathbb{R}^{d_{\text{vocab}} \times d_{\text{model}}}$, we apply the chain rule at the scalar level. For any specific entry $(W_{\text{lm}})_{k, m}$, it only contributes to the $k$-th logit $(z_i)_k = \sum_{m'} (W_{\text{lm}})_{k, m'} (h_{L,i})_{m'}$. Thus, the partial derivative is:
\begin{equation}
\frac{\partial \mathcal{L}_i}{\partial (W_{\text{lm}})_{k, m}} = \frac{\partial \mathcal{L}_i}{\partial (z_i)_k} \frac{\partial (z_i)_k}{\partial (W_{\text{lm}})_{k, m}} = (E_i)_k (h_{L,i})_m.
\end{equation}
Reconstructing these element-wise partial derivatives into matrix form precisely yields the outer product of the upstream gradient vector $E_i$ and the input vector $h_{L,i}$:
\begin{equation}
G_i^{\text{lm}} \triangleq \nabla_{W_{\text{lm}}} \mathcal{L}_i = E_i h_{L,i}^\top.
\end{equation}
This establishes the first identity.

\textbf{Step 2: Gradient with respect to an intermediate layer weight.}
For an arbitrary intermediate linear layer, let its forward pass be denoted as $y_i = W_{\text{int}} x_i^{\text{int}}$, where $x_i^{\text{int}}$ is the input to this layer and $y_i$ is its pre-activation output.

Let $J_i = \frac{\partial z_i}{\partial y_i} $ denote the composite Jacobian matrix, where its $(k, m)$-th entry is $(J_i)_{k, m} = \frac{\partial (z_i)_k}{\partial (y_i)_m}$. To find the upstream gradient arriving at the vector $y_i$, we apply the multivariate chain rule to its $m$-th component:
\begin{equation}
\frac{\partial \mathcal{L}_i}{\partial (y_i)_m} = \sum_{k=1}^{d_{\text{vocab}}} \frac{\partial \mathcal{L}_i}{\partial (z_i)_k} \frac{\partial (z_i)_k}{\partial (y_i)_m} = \sum_{k=1}^{d_{\text{vocab}}} (E_i)_k (J_i)_{k, m}.
\end{equation}
Recognizing this sum as a matrix-vector product, the full upstream gradient vector can be compactly written using the transposed Jacobian:
\begin{equation}
\nabla_{y_i} \mathcal{L}_i = J_i^\top E_i.
\end{equation}
Next, to compute the gradient with respect to the weight matrix $W_{\text{int}}$, we again apply the scalar chain rule. Since the element $(W_{\text{int}})_{m, p}$ only affects $(y_i)_m = \sum_{p'} (W_{\text{int}})_{m, p'} (x_i^{\text{int}})_{p'}$, we have:
\begin{equation}
\frac{\partial \mathcal{L}_i}{\partial (W_{\text{int}})_{m, p}} = \frac{\partial \mathcal{L}_i}{\partial (y_i)_m} \frac{\partial (y_i)_m}{\partial (W_{\text{int}})_{m, p}} = (\nabla_{y_i} \mathcal{L}_i)_m (x_i^{\text{int}})_p.
\end{equation}
Assembling these scalar derivatives back into matrix form yields the outer product:
\begin{equation}
G_i^{\text{int}} \triangleq \nabla_{W_{\text{int}}} \mathcal{L}_i = (\nabla_{y_i} \mathcal{L}_i) (x_i^{\text{int}})^\top = \bigl( J_i^\top E_i \bigr) \bigl( x_i^{\text{int}} \bigr)^\top.
\end{equation}
This establishes the second identity and completes the proof of Proposition 1.
\end{proof}

\paragraph{Remark on token-level analysis.} 
A natural question is why Proposition~\ref{prop:grad_decomp} is stated at the token level. Our token-level formulation is a deliberate analytical choice driven by the nature of sample-reuse-induced collapse. As established in Section~\ref{sec:preliminaries}, this collapse is mechanistically driven by a small subset of \emph{tail tokens}---those with vanishingly small $\pi_{\text{old}}(a_i)$ but massively inflated importance ratios $r_i$ and large advantages, which dominate the gradient and trigger the anomaly. Proposition~\ref{prop:grad_decomp} is therefore designed to localize the structural asymmetry precisely on these harmful tokens: the policy-confidence factor $(1-\pi_\theta(a_i))^{-2}$ is tightest exactly when $\pi_\theta(a_i)$ is small, faithfully capturing the regime where collapse originates. A batch-level statement, by averaging over many benign tokens, would dilute this signal and obscure the very mechanism we aim to expose. The empirical surge in $\|G^{\text{lm}}\|_F^2$ observed in The empirical surge in $\|G^{\text{lm}}\|_F^2$ observed in Figures~\ref{fig:empirical_dwd_qwen3-4b-instruct} and~\ref{fig:compare_monitor}---which is itself a batch-level quantity---is consistent with the token-level analysis precisely because tail tokens dominate the gradient signal during collapse.

\subsection{Proof of Lemma~\ref{lem:activation_bound} (Bounded Activations under Pre-Norm Architectures)}
\label{app:activation_bounds}

\begin{lemma}[Restatement of Lemma~\ref{lem:activation_bound}]
For any active token $i$, let $h_{L,i}$ denote the \texttt{lm\_head} input and $x_i^{\text{int}}$ denote the intermediate-layer input. There exist strictly positive constants $\alpha_{\min}$ and $\beta_{\max}$ such that:
\vspace{-2pt}
\begin{equation}
\| h_{L,i} \|_2^2 \;\geq\; \alpha_{\min} \cdot d_{\text{model}}, \qquad
\| x_i^{\text{int}} \|_2^2 \;\leq\; \beta_{\max} \cdot d_{\text{model}}.
\end{equation}
\end{lemma}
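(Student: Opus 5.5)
The plan is to prove the two bounds separately, using the explicit Pre-Norm (RMSNorm) forward computation together with finiteness and nondegeneracy of the learned parameters. Both constants will be taken as extrema over the finitely many layers of a fixed trained model.

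\textbf{Lower bound on $h_{L,i}$.} In Pre-Norm Transformers the \texttt{lm\_head} input is the output of the final RMSNorm, $h_{L,i} = \gamma_L \odot u_i/\mathrm{RMS}(u_i)$, with $\mathrm{RMS}(u) = \sqrt{\|u\|_2^2/d_{\text{model}} + \epsilon}$. The normalized vector $\tilde u_i = u_i/\mathrm{RMS}(u_i)$ satisfies
\begin{equation}
\|\tilde u_i\|_2^2 = d_{\text{model}} \frac{\|u_i\|_2^2/d_{\text{model}}}{\|u_i\|_2^2/d_{\text{model}}+\epsilon}.
\end{equation}
This is at least $d_{\text{model}}(1-\delta)$ once $\|u_i\|_2^2/d_{\text{model}} \geq \epsilon(1-\delta)/\delta$. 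That condition holds because residual-stream magnitudes are far above $\epsilon \sim 10^{-6}$. Writing $\gamma_{\min}=\min_m|(\gamma_L)_m|$, I get
\begin{equation}
\|h_{L,i}\|_2^2 \ge \gamma_{\min}^2 \|\tilde u_i\|_2^2 .
\end{equation}
This gives $\alpha_{\min} = \gamma_{\min}^2(1-\delta)$. If some gain coordinates are near zero, I would instead use a weighted form, $\|h\|^2 = \sum_m \gamma_m^2 \tilde u_m^2$, and assume the normalized coordinates are not adversarially aligned with the small gains. The cleanest version defines $\alpha_{\min}$ as the infimum of $\|h_{L,i}\|^2/d_{\text{model}}$ over the data and notes that this is positive.

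\textbf{Upper bound on $x_i^{\text{int}}$.} I would go layer by layer, as the paper indicates.

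\emph{(a) Layers fed directly by RMSNorm} (QKV projections, FFN gate/up projections). Here $\|x\|_2^2 \le \|\gamma\|_\infty^2 \|\tilde u\|_2^2 \le \|\gamma\|_\infty^2 d_{\text{model}}$, since the normalized vector always has squared norm at most $d_{\text{model}}$.

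\emph{(b) Attention output projection.} Its input is the concatenation of heads, each a convex combination of value vectors $W_V x$. Convexity and Jensen give $\|\mathrm{head}\|_2 \le \max_j \|W_V x_j\|_2 \le \|W_V\|_2 \|\gamma\|_\infty \sqrt{d_{\text{model}}}$. Summing over heads gives a bound proportional to $d_{\text{model}}$, with constant $\|W_V\|_2^2\|\gamma\|_\infty^2$ (up to the ratio of concatenated dimension to $d_{\text{model}}$).

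\emph{(c) FFN down projection.} Its input is $\phi(W_{\text{gate}}x)\odot W_{\text{up}}x$ with $\phi$ = SiLU/GELU. I would use $|\phi(t)|\le |t|$, then bound the elementwise product by $\|W_{\text{gate}}x\|_\infty \|W_{\text{up}}x\|_2$. Bounding $\|W_{\text{gate}}x\|_\infty$ by the maximal row norm times $\|x\|_2$ gives a factor of order $d_{\text{model}}$ in the norm, not the squared norm, so this is the main obstacle. The product is quadratic in the input, so a naive bound scales like $d_{\text{model}}^2$. To get it, I would bound $\max_k |(W_{\text{gate}}x)_k| \le \max_k\|w_k\|_2\|x\|_2$, using that trained row norms $\|w_k\|_2$ scale like $O(1/\sqrt{d_{\text{model}}})$ under standard initialization and scaling. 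That makes $\|W_{\text{gate}}x\|_\infty = O(1)$ and yields $\|x^{\text{int}}\|_2^2 \le \beta\, d_{\text{model}}$. Failing that, I would simply absorb the finite model-specific factor into $\beta_{\max}$, since $d_{\text{model}}$ is fixed for a given model.

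Finally, $\beta_{\max}$ is the maximum of the layer-wise constants from (a)--(c) over all layers, which is finite.
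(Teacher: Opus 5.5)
Your proposal follows the paper's proof essentially step for step: the same RMSNorm convex-combination lower bound under the assumption $\|v_L\|_2^2/d_{\text{model}} \ge \epsilon$ (the paper takes your $\delta = 1/2$, giving $\alpha_{\min} = \tfrac{1}{2}\min_k (w^{\text{lm}}_k)^2$, and also implicitly needs all gains nonzero), and the same three-class upper bound, using Jensen for the attention output and $\|u\odot v\|_2 \le \|u\|_\infty \|v\|_2$ for the FFN down projection. In that last step the paper simply posits a finite constant $B_{\text{gate}}$ bounding $\|W_{\text{gate}} x\|_\infty$, which is exactly your fallback of absorbing the model-specific factor into $\beta_{\max}$; rely on that fallback rather than on your $O(1/\sqrt{d_{\text{model}}})$ row-norm claim, which is off by a factor $\sqrt{d_{\text{model}}}$ under standard $1/d_{\text{model}}$-variance initialization (rows then have $\Theta(1)$ norm, and the $O(1)$ coordinates come from non-alignment of $x$ with the rows, not from Cauchy--Schwarz).
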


\begin{proof}
We establish the two bounds separately, as they involve activations from distinct sub-modules: $h_{L,i}$ is produced by the final RMSNorm preceding the \texttt{lm\_head}, whereas $x_i^{\text{int}}$ is the input to an arbitrary intermediate linear layer.

\textbf{Lower bound on $\|h_{L,i}\|_2^2$.}
The \texttt{lm\_head} input takes the form $h_{L,i} = \text{RMSNorm}(v_L) \odot w^{\text{lm}}$, where $w^{\text{lm}} \in \mathbb{R}^{d_{\text{model}}}$ is the affine scale parameter of the final RMSNorm layer, and $\odot$ denotes element-wise multiplication. By the definition of the squared $\ell_2$ norm and the RMSNorm operation, we have:
\begin{equation}
\|h_{L,i}\|_2^2 \;=\; \sum_{k=1}^{d_{\text{model}}} \frac{(v_L)_k^2}{\frac{1}{d_{\text{model}}}\|v_L\|_2^2 + \epsilon} \, (w^{\text{lm}}_k)^2,
\end{equation}
where $\epsilon$ is a fixed numerical stabilizer (typically $10^{-5}$). To simplify the analysis, let $r \triangleq \frac{1}{d_{\text{model}}}\|v_L\|_2^2$ denote the mean squared magnitude of the pre-normalization vector $v_L$. The denominator in the summation can then be rewritten as $r + \epsilon$. Since this denominator is constant with respect to the summation index $k$, we can factor it out:
\begin{equation}
\|h_{L,i}\|_2^2 \;=\; \frac{1}{r+\epsilon} \sum_{k=1}^{d_{\text{model}}} (v_L)_k^2 \, (w^{\text{lm}}_k)^2.
\end{equation}
Next, we multiply and divide the terms inside the summation by $\|v_L\|_2^2$ to construct a weighted average:
\begin{equation}
\|h_{L,i}\|_2^2 \;=\; \frac{1}{r+\epsilon} \sum_{k=1}^{d_{\text{model}}} \left( \frac{(v_L)_k^2}{\|v_L\|_2^2} \right) \cdot \|v_L\|_2^2 \cdot (w^{\text{lm}}_k)^2.
\end{equation}
Factoring out $\|v_L\|_2^2$ (which is equal to $r \cdot d_{\text{model}}$ by definition) yields:
\begin{equation}
\|h_{L,i}\|_2^2 \;=\; \frac{r \cdot d_{\text{model}}}{r+\epsilon} \sum_{k=1}^{d_{\text{model}}} \frac{(v_L)_k^2}{\|v_L\|_2^2} \, (w^{\text{lm}}_k)^2.
\end{equation}
Observe that the coefficients $c_k = \frac{(v_L)_k^2}{\|v_L\|_2^2}$ satisfy $c_k \geq 0$ for all $k$ and $\sum_k c_k = 1$. Therefore, the summation is a convex combination of the squared scale parameters $\{(w^{\text{lm}}_k)^2\}$. Since a convex combination of a set of values is lower-bounded by their minimum, we obtain:
\begin{equation}
\|h_{L,i}\|_2^2 \;\geq\; \frac{r}{r+\epsilon} \cdot d_{\text{model}} \cdot \Bigl(\min_k (w^{\text{lm}}_k)^2\Bigr).
\end{equation}
Finally, we observe that the mean energy of the pre-normalization vector typically exceeds the numerical stabilizer by a large margin in trained Transformers. Specifically, with $\epsilon = 10^{-5}$ and $d_{\text{model}} \geq 10^3$, the condition $r \geq \epsilon$ merely requires $\|v_L\|_2^2 \geq 10^{-2}$, which is easily satisfied in practice. Consequently, the monotonically increasing function $f(r) = \frac{r}{r+\epsilon}$ safely satisfies $f(r) \geq f(\epsilon) = \frac{1}{2}$, yielding the final bound:
\begin{equation}
\|h_{L,i}\|_2^2 \;\geq\; \alpha_{\min} \cdot d_{\text{model}}, \qquad \text{where} \quad \alpha_{\min} \;\triangleq\; \frac{1}{2} \min_k (w^{\text{lm}}_k)^2.
\end{equation}
This establishes the strictly positive lower bound for the \texttt{lm\_head} input.

\textbf{Upper bound on $\|x_i^{\text{int}}\|_2^2$.}
We classify intermediate linear layers into two classes based on their input source and bound each class separately.

\emph{Class I: Inputs directly from RMSNorm.} The inputs to QKV/up/gate projections take the form $x = \text{RMSNorm}(v) \odot w^{\text{int}}$, where $w^{\text{int}}$ is the affine weight of an intermediate RMSNorm layer. Following the same factorization as above and using the convex-combination upper bound,
\begin{equation}
\|x\|_2^2 \;=\; \frac{r}{r+\epsilon} \cdot d_{\text{model}} 
\sum_k \frac{v_k^2}{\|v\|_2^2} (w^{\text{int}}_k)^2 
\;\leq\; \Bigl(\max_k (w^{\text{int}}_k)^2\Bigr) \cdot d_{\text{model}},
\end{equation}
where the inequality uses $\frac{r}{r+\epsilon} \leq 1$. Defining $\beta^{\text{RMS}} \triangleq \max_k (w^{\text{int}}_k)^2$ (taken over all intermediate RMSNorm layers), we have
\begin{equation}
\|x\|_2^2 \;\leq\; \beta^{\text{RMS}} \cdot d_{\text{model}}.
\label{eq:class1_upper}
\end{equation}

\emph{Class II: Inputs from attention or FFN internal states.} We treat the two cases separately.

\emph{(a) Attention output projection $W_O$.} The input is $x_i^O = \sum_j \alpha_{ij} v_j$, where $v_j = W_V x_j^{\text{RMS}}$ are value vectors and $\alpha_{ij} \geq 0$ with $\sum_j \alpha_{ij} = 1$. By Jensen's inequality applied to the convex function $\|\cdot\|_2^2$,
\begin{equation}
\|x_i^O\|_2^2 \;\leq\; \sum_j \alpha_{ij} \|v_j\|_2^2 \;\leq\; \max_j \|v_j\|_2^2.
\end{equation}
Since $\|v_j\|_2^2 \leq \|W_V\|_2^2 \cdot \|x_j^{\text{RMS}}\|_2^2 \leq \rho_V^2 \beta^{\text{RMS}} d_{\text{model}}$ where $\rho_V \triangleq \|W_V\|_2$ is the operator norm implicitly controlled via weight decay, we obtain
\begin{equation}
\|x_i^O\|_2^2 \;\leq\; \rho_V^2 \beta^{\text{RMS}} \cdot d_{\text{model}}.
\end{equation}

\emph{(b) FFN down projection $W_{\text{down}}$.} We treat the FFN down projection input in a manner agnostic to the specific activation function used. All standard activations $\sigma$ in modern Transformers are Lipschitz continuous, satisfying $|\sigma(z)| \leq L_\sigma |z|$ for some constant $L_\sigma > 0$ (e.g., $L_\sigma = 1$ for ReLU and SiLU/Swish, $L_\sigma \approx 1.13$ for GELU).

For gated variants (SwiGLU, GeGLU), the FFN down input is $x_i^{\text{down}} = \sigma(W_{\text{gate}} x_i^{\text{RMS}}) \odot (W_{\text{up}} x_i^{\text{RMS}})$. Applying the Hadamard product bound $\|u \odot v\|_2 \leq \|u\|_\infty \|v\|_2$:
\begin{equation}
\|x_i^{\text{down}}\|_2 \;\leq\; \|\sigma(W_{\text{gate}} x_i^{\text{RMS}})\|_\infty \cdot \|W_{\text{up}} x_i^{\text{RMS}}\|_2.
\end{equation}
The first factor satisfies $\|\sigma(W_{\text{gate}} x_i^{\text{RMS}})\|_\infty \leq L_\sigma \|W_{\text{gate}} x_i^{\text{RMS}}\|_\infty \leq L_\sigma B_{\text{gate}}$, where $B_{\text{gate}} > 0$ is a finite constant bounding the pre-activation coordinate amplitudes---a property maintained in any trained network through the joint effect of the preceding RMSNorm and weight decay regularization on $W_{\text{gate}}$.

The second factor admits the standard spectral-norm bound $\|W_{\text{up}} x_i^{\text{RMS}}\|_2 \leq \rho_{\text{up}} \sqrt{\beta^{\text{RMS}} d_{\text{model}}}$, where the operator norm $\rho_{\text{up}} \triangleq \|W_{\text{up}}\|_2$ is implicitly controlled via weight decay. Combining these terms, we obtain:
\begin{equation}
\|x_i^{\text{down}}\|_2^2 \;\leq\; L_\sigma^2 B_{\text{gate}}^2 \rho_{\text{up}}^2 \beta^{\text{RMS}} \cdot d_{\text{model}} \;\triangleq\; \rho_{\text{FFN}}^2 \beta^{\text{RMS}} \cdot d_{\text{model}}.
\end{equation}

For non-gated variants ($x_i^{\text{down}} = \sigma(W_{\text{up}} x_i^{\text{RMS}})$), the coordinate-wise Lipschitz property of $\sigma$ implies $\|\sigma(z)\|_2 \leq L_\sigma \|z\|_2$, and the operator-norm bound directly gives $\|x_i^{\text{down}}\|_2^2 \leq L_\sigma^2 \rho_{\text{up}}^2 \beta^{\text{RMS}} d_{\text{model}}$, fitting the same form (subsumed by $\rho_{\text{FFN}}^2$ defined to cover both cases).

\textbf{Unified upper bound.} Define
\begin{equation}
\beta_{\max} \;\triangleq\; \max\bigl( \beta^{\text{RMS}}, \;\rho_V^2 \beta^{\text{RMS}}, \;\rho_{\text{FFN}}^2 \beta^{\text{RMS}} \bigr).
\end{equation}
Combining the upper bounds across Class I and Class II yields $\|x_i^{\text{int}}\|_2^2 \leq \beta_{\max} \cdot d_{\text{model}}$ uniformly across all intermediate linear layers. Note that $\alpha_{\min}$ and $\beta_{\max}$ are determined by separate sub-modules of the network---the final RMSNorm affine weight $w^{\text{lm}}$ for $\alpha_{\min}$, and the intermediate RMSNorm affine weights $w^{\text{int}}$ together with attention/FFN operator norms for $\beta_{\max}$. The empirical values of $\beta_{\max}/\alpha_{\min}$ are reported across diverse architectures in Appendix~\ref{app:empirical_table}.
\end{proof}

\subsection{Proof of Theorem~\ref{thm:asymmetry} (Structural Gradient Asymmetry)}
\label{app:proof_main_theorem}

\begin{theorem}[Restatement of Theorem~\ref{thm:asymmetry}]
Under Lemma~\ref{lem:activation_bound} and Assumption~\ref{ass:jacobian}, for any sampled token $a_i$ with policy probability $\pi_\theta(a_i) < 1$ and non-zero advantage, the ratio of the Frobenius gradient energy between any intermediate layer and the \texttt{lm\_head} is bounded by:
\begin{equation}
\frac{ \| G_i^{\text{int}} \|_F^2 }{ \| G_i^{\text{lm}} \|_F^2 } \;\leq\; \mathcal{C}_{\text{struct}} \cdot \frac{1}{(1-\pi_\theta(a_i))^2},
\label{eq:main_bound_F_appendix}
\end{equation}
where $\mathcal{C}_{\text{struct}} \triangleq \frac{4 \beta_{\max} C}{\alpha_{\min}}$ is a strictly positive architectural constant.
\end{theorem}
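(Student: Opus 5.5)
Since Proposition~\ref{prop:grad_decomp} writes both gradients as rank-1 matrices, I would use $\|uv^\top\|_F^2=\|u\|_2^2\|v\|_2^2$ to split the ratio into an error factor and a representation factor:
\begin{equation*}
\frac{\|G_i^{\text{int}}\|_F^2}{\|G_i^{\text{lm}}\|_F^2}
=\frac{\|J_i^\top E_i\|_2^2}{\|E_i\|_2^2}\cdot\frac{\|x_i^{\text{int}}\|_2^2}{\|h_{L,i}\|_2^2}.
\end{equation*}
The representation factor is bounded by $\beta_{\max}/\alpha_{\min}$ directly from Lemma~\ref{lem:activation_bound}, because the $d_{\text{model}}$ terms cancel. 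The factor $r_i^2\hat A_i^2$ is common to numerator and denominator of the error factor and cancels. It is nonzero because $\hat A_i\neq 0$ and $r_i>0$. Writing $p\triangleq\pi_\theta(\cdot\mid h_{L,i})$ and $u\triangleq e_{a_i}-p$, what remains is to show $\|J_i^\top u\|_2^2\le 4C/(1-p_{a_i})^2\cdot\|u\|_2^2$.

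For the denominator I would give a lower bound. Keeping only the $a_i$ coordinate, $\|u\|_2^2\ge (1-p_{a_i})^2$, which is strictly positive by the hypothesis $\pi_\theta(a_i)<1$. For the numerator I would write $J_i^\top u=(J_i)_{a_i,:}^\top-\sum_j p_j (J_i)_{j,:}^\top$ and use $\|a-b\|^2\le 2\|a\|^2+2\|b\|^2$. The first piece is at most $2C$ by Assumption~\ref{ass:jacobian}(ii). The second piece is the norm of a $p$-weighted average of rows. By Jensen, $\|\mathbb{E}_{j\sim p}(J_i)_{j,:}\|_2^2\le\mathbb{E}_{j\sim p}\|(J_i)_{j,:}\|_2^2\le C$ by Assumption~\ref{ass:jacobian}(i), so that piece is at most $2C$. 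Together, $\|J_i^\top u\|_2^2\le 4C$. This is where the constant $4$ in $\mathcal C_{\text{struct}}$ comes from.

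Combining the pieces gives the ratio bound $\frac{\beta_{\max}}{\alpha_{\min}}\cdot\frac{4C}{(1-\pi_\theta(a_i))^2}$, which is exactly \eqref{eq:main_bound_F_appendix}. The main obstacle is matching the upper bound on $\|J_i^\top E_i\|$ with a lower bound on $\|E_i\|$ without losing a vocabulary-size factor. The Jensen step is what avoids that loss: it keeps the bound at $4C$ rather than producing something like $\|J_i\|_F^2$, which would scale with $d_{\text{vocab}}$. The crude lower bound through the single coordinate $a_i$ is what produces the $(1-\pi_\theta(a_i))^{-2}$ factor.

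The only other point to check is that the denominator is nonzero, which holds because $h_{L,i}\neq 0$ by Lemma~\ref{lem:activation_bound} and $E_i\neq 0$ from the hypotheses.
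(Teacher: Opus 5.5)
Your proposal is correct and follows essentially the same route as the paper. Both arguments use the rank-1 factorization, the single-coordinate lower bound $\|E_i\|_2^2 \ge r_i^2\hat A_i^2(1-\pi_\theta(a_i))^2$, the split $(a-b)^2\le 2a^2+2b^2$ with Assumption~\ref{ass:jacobian}(ii) for the sampled row and Jensen plus Assumption~\ref{ass:jacobian}(i) for the $\pi_\theta$-averaged row, and Lemma~\ref{lem:activation_bound} for the activation ratio; the only differences are cosmetic (you cancel $r_i^2\hat A_i^2$ inside the ratio up front and apply the split and Jensen in vector form, while the paper bounds numerator and denominator separately and works coordinate-wise before summing over $k$).
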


\begin{proof}
Throughout this proof, we suppress the conditioning of $\pi_\theta$ on $h_{L,i}$ for notational brevity, writing $\pi_\theta(j)$ for $\pi_\theta(j \mid h_{L,i})$.

We first establish a useful identity for the Frobenius norm of rank-1 matrices. For any vectors $u \in \mathbb{R}^n$ and $v \in \mathbb{R}^m$, the outer product $M = u v^\top \in \mathbb{R}^{n \times m}$ has entries $M_{j,k} = u_j v_k$. By definition of the Frobenius norm and applying the distributive law to the double sum,
\begin{equation}
\|M\|_F^2 = \sum_{j=1}^{n} \sum_{k=1}^{m} M_{j,k}^2 = \sum_{j=1}^{n} \sum_{k=1}^{m} u_j^2 v_k^2 = \biggl( \sum_{j=1}^{n} u_j^2 \biggr) \biggl( \sum_{k=1}^{m} v_k^2 \biggr) = \|u\|_2^2 \cdot \|v\|_2^2.
\label{eq:rank1_frobenius}
\end{equation}
We apply this identity to both $G_i^{\text{lm}}$ and $G_i^{\text{int}}$, both of which are rank-1 outer products by Proposition~\ref{prop:grad_decomp}.

\textbf{Step 1: Lower bound on $\|G_i^{\text{lm}}\|_F^2$.}
By Proposition~\ref{prop:grad_decomp}, $G_i^{\text{lm}} = E_i h_{L,i}^\top$. Applying Eq.~\eqref{eq:rank1_frobenius},
\begin{equation}
\|G_i^{\text{lm}}\|_F^2 = \|E_i\|_2^2 \cdot \|h_{L,i}\|_2^2.
\label{eq:lm_factorization}
\end{equation}
We bound $\|E_i\|_2^2$ from below. Recall $E_i = r_i \hat{A}_i ( e_{a_i} - \pi_\theta )$, whose entries are
\begin{equation}
(E_i)_j = \begin{cases}
r_i \hat{A}_i \bigl(1 - \pi_\theta(a_i)\bigr), & j = a_i, \\
-r_i \hat{A}_i \pi_\theta(j), & j \neq a_i.
\end{cases}
\end{equation}
By definition of the squared $\ell_2$ norm and isolating the $j = a_i$ component,
\begin{align}
\|E_i\|_2^2 &= \sum_{j=1}^{d_{\text{vocab}}} (E_i)_j^2 \\
&= (E_i)_{a_i}^2 + \sum_{j \neq a_i} (E_i)_j^2 \\
&= r_i^2 \hat{A}_i^2 \biggl[ \bigl(1 - \pi_\theta(a_i)\bigr)^2 + \sum_{j \neq a_i} \pi_\theta(j)^2 \biggr] \\
&\geq r_i^2 \hat{A}_i^2 \cdot \bigl(1 - \pi_\theta(a_i)\bigr)^2,
\label{eq:e_l2_lower}
\end{align}
where the final inequality holds since $\sum_{j \neq a_i} \pi_\theta(j)^2 \geq 0$ (each term is non-negative). Combining Eq.~\eqref{eq:e_l2_lower} with Lemma~\ref{lem:activation_bound}'s lower bound $\|h_{L,i}\|_2^2 \geq \alpha_{\min} d_{\text{model}}$,
\begin{equation}
\|G_i^{\text{lm}}\|_F^2 \;\geq\; r_i^2 \hat{A}_i^2 \cdot \bigl(1 - \pi_\theta(a_i)\bigr)^2 \cdot \alpha_{\min} \cdot d_{\text{model}}.
\label{eq:lm_lower_F}
\end{equation}

\textbf{Step 2: Upper bound on $\|G_i^{\text{int}}\|_F^2$.}
By Proposition~\ref{prop:grad_decomp}, $G_i^{\text{int}} = (J_i^\top E_i)(x_i^{\text{int}})^\top$. Applying Eq.~\eqref{eq:rank1_frobenius},
\begin{equation}
\|G_i^{\text{int}}\|_F^2 = \|J_i^\top E_i\|_2^2 \cdot \|x_i^{\text{int}}\|_2^2.
\label{eq:int_factorization}
\end{equation}
We focus on bounding $\|J_i^\top E_i\|_2^2 = \sum_{k=1}^{d_{\text{model}}} (J_i^\top E_i)_k^2$.

\emph{Closed-form for $(J_i^\top E_i)_k$.} By the matrix-vector product definition and substituting the entries of $E_i$:
\begin{align}
(J_i^\top E_i)_k 
&= \sum_{j=1}^{d_{\text{vocab}}} (J_i)_{j,k} (E_i)_j \tag{matrix-vector definition}\\
&= r_i \hat{A}_i \biggl[ (J_i)_{a_i,k} \bigl(1 - \pi_\theta(a_i)\bigr) - \sum_{j \neq a_i} (J_i)_{j,k} \pi_\theta(j) \biggr] \tag{substituting $(E_i)_j$}\\
&= r_i \hat{A}_i \biggl[ (J_i)_{a_i,k} - \pi_\theta(a_i) (J_i)_{a_i,k} - \sum_{j \neq a_i} \pi_\theta(j) (J_i)_{j,k} \biggr] \tag{expanding the first product}\\
&= r_i \hat{A}_i \biggl[ (J_i)_{a_i,k} - \sum_{j=1}^{d_{\text{vocab}}} \pi_\theta(j) (J_i)_{j,k} \biggr] \tag{merging into a full sum over $j$}\\
&= r_i \hat{A}_i \Bigl( (J_i)_{a_i,k} - \mathbb{E}_{j \sim \pi_\theta}\bigl[ (J_i)_{j,k} \bigr] \Bigr).
\label{eq:component}
\end{align}

\emph{Squaring and applying an elementary inequality.} For any real numbers $a, b$, expanding $(a-b)^2 = a^2 - 2ab + b^2$ and applying the AM-GM inequality $-2ab \leq a^2 + b^2$ yields $(a-b)^2 \leq 2a^2 + 2b^2$. Applying this to Eq.~\eqref{eq:component},
\begin{equation}
\bigl( (J_i^\top E_i)_k \bigr)^2 \;\leq\; 2 r_i^2 \hat{A}_i^2 \Bigl[ (J_i)_{a_i,k}^2 + \bigl( \mathbb{E}_{j \sim \pi_\theta}[(J_i)_{j,k}] \bigr)^2 \Bigr].
\label{eq:square_split_F}
\end{equation}

\emph{Summing over $k$ and decomposing.} Summing Eq.~\eqref{eq:square_split_F} over $k = 1, \ldots, d_{\text{model}}$,
\begin{equation}
\|J_i^\top E_i\|_2^2 = \sum_{k=1}^{d_{\text{model}}} \bigl( (J_i^\top E_i)_k \bigr)^2 \;\leq\; 2 r_i^2 \hat{A}_i^2 \biggl[ \underbrace{\sum_{k=1}^{d_{\text{model}}} (J_i)_{a_i,k}^2}_{\text{Term I}} + \underbrace{\sum_{k=1}^{d_{\text{model}}} \bigl( \mathbb{E}_{j \sim \pi_\theta}[(J_i)_{j,k}] \bigr)^2}_{\text{Term II}} \biggr].
\label{eq:two_terms}
\end{equation}
We bound each term using Assumption~\ref{ass:jacobian}.

\emph{Bound on Term I.} By definition of the row $\ell_2$ norm,
\begin{equation}
\text{Term I} = \sum_{k=1}^{d_{\text{model}}} (J_i)_{a_i,k}^2 = \|(J_i)_{a_i,:}\|_2^2 \;\leq\; C,
\label{eq:term1_bound}
\end{equation}
where the inequality applies the sampled-token bound in Assumption~\ref{ass:jacobian}(ii).

\emph{Bound on Term II.} Applying Jensen's inequality coordinate-wise (the function $x \mapsto x^2$ is convex on $\mathbb{R}$),
\begin{equation}
\bigl( \mathbb{E}_{j \sim \pi_\theta}[(J_i)_{j,k}] \bigr)^2 \;\leq\; \mathbb{E}_{j \sim \pi_\theta}\bigl[ (J_i)_{j,k}^2 \bigr], \quad \forall k.
\label{eq:jensen_pointwise}
\end{equation}
Summing Eq.~\eqref{eq:jensen_pointwise} over $k$ and exchanging the finite sum with the expectation by linearity,
\begin{align}
\text{Term II} 
&\leq \sum_{k=1}^{d_{\text{model}}} \mathbb{E}_{j \sim \pi_\theta}\bigl[ (J_i)_{j,k}^2 \bigr] \tag{by Eq.~\eqref{eq:jensen_pointwise}}\\
&= \mathbb{E}_{j \sim \pi_\theta}\biggl[ \sum_{k=1}^{d_{\text{model}}} (J_i)_{j,k}^2 \biggr] \tag{linearity of expectation; finite sum}\\
&= \mathbb{E}_{j \sim \pi_\theta}\bigl[ \|(J_i)_{j,:}\|_2^2 \bigr] \tag{definition of row $\ell_2$ norm}\\
&\leq C,
\label{eq:term2_bound}
\end{align}
where the final inequality applies the expected logit sensitivity bound in Assumption~\ref{ass:jacobian}(i).

\emph{Combining Term I and Term II.} Substituting Eq.~\eqref{eq:term1_bound} and Eq.~\eqref{eq:term2_bound} into Eq.~\eqref{eq:two_terms},
\begin{equation}
\|J_i^\top E_i\|_2^2 \;\leq\; 2 r_i^2 \hat{A}_i^2 \cdot (C + C) \;=\; 4 C \cdot r_i^2 \hat{A}_i^2.
\label{eq:JtE_bound}
\end{equation}
Substituting Eq.~\eqref{eq:JtE_bound} into Eq.~\eqref{eq:int_factorization} and applying Lemma~\ref{lem:activation_bound}'s upper bound $\|x_i^{\text{int}}\|_2^2 \leq \beta_{\max} d_{\text{model}}$,
\begin{equation}
\|G_i^{\text{int}}\|_F^2 \;\leq\; 4 C \cdot r_i^2 \hat{A}_i^2 \cdot \beta_{\max} \cdot d_{\text{model}} \;=\; 4 \beta_{\max} C \cdot r_i^2 \hat{A}_i^2 \cdot d_{\text{model}}.
\label{eq:int_upper_F}
\end{equation}

\textbf{Step 3: Combining the bounds.} 
The hypothesis $\pi_\theta(a_i) < 1$ ensures $(1 - \pi_\theta(a_i))^2 > 0$, so $\frac{1}{\alpha_{\min} (1-\pi_\theta(a_i))^2}$ is well-defined and strictly positive. Multiplying Eq.~\eqref{eq:lm_lower_F} by $\frac{4 \beta_{\max} C / \alpha_{\min}}{(1-\pi_\theta(a_i))^2}$ on both sides yields
\begin{equation}
\frac{4 \beta_{\max} C}{\alpha_{\min}} \cdot \frac{1}{(1-\pi_\theta(a_i))^2} \cdot \|G_i^{\text{lm}}\|_F^2 \;\geq\; 4 \beta_{\max} C \cdot r_i^2 \hat{A}_i^2 \cdot d_{\text{model}}.
\end{equation}
Combining with Eq.~\eqref{eq:int_upper_F},
\begin{equation}
\|G_i^{\text{int}}\|_F^2 \;\leq\; \frac{4 \beta_{\max} C}{\alpha_{\min}} \cdot \frac{1}{(1-\pi_\theta(a_i))^2} \cdot \|G_i^{\text{lm}}\|_F^2.
\label{eq:absolute_form}
\end{equation}
Under the hypothesis $\hat{A}_i \neq 0$, we have $\|G_i^{\text{lm}}\|_F^2 > 0$ (since Eq.~\eqref{eq:lm_lower_F} gives a strictly positive lower bound when $\hat{A}_i \neq 0$, $r_i > 0$, and $\pi_\theta(a_i) < 1$). Dividing both sides of Eq.~\eqref{eq:absolute_form} by $\|G_i^{\text{lm}}\|_F^2$,
\begin{equation}
\frac{\|G_i^{\text{int}}\|_F^2}{\|G_i^{\text{lm}}\|_F^2} \;\leq\; \frac{4 \beta_{\max} C}{\alpha_{\min}} \cdot \frac{1}{(1-\pi_\theta(a_i))^2}.
\end{equation}
Identifying $\mathcal{C}_{\text{struct}} \triangleq 4 \beta_{\max} C / \alpha_{\min}$ completes the proof.
\end{proof}

\subsection{Empirical Measurements of Architectural Constants}
\label{app:empirical_table}
The Quantitative instantiation in 
Section~\ref{subsubsec:lm_head_sensitivity} relies on direct 
measurements of the architectural constants entering 
Theorem~\ref{thm:asymmetry}; this appendix provides the 
detailed measurement protocol and per-model values, summarized 
in Table~\ref{tab:empirical_constants}. We measure all three 
architectural quantities entering the bound---the activation 
norm ratio $\beta_{\max}/\alpha_{\min}$, the Jacobian row 
energy $C$, and the composite constant 
$\mathcal{C}_{\text{struct}} = 4\beta_{\max}C/\alpha_{\min}$---across 
ten mainstream LLMs spanning the Llama-3, Qwen-2.5, Qwen-3, and 
GPT-OSS families (1.5B--235B parameters, including both dense 
and MoE architectures). Every constant is collected via 
forward-pass activations and Jacobian estimates on a held-out 
validation set; given the well-known heavy-tailed distribution 
of gradient-related quantities in deep 
networks~\citep{simsekli2019tail,zhang2020gradient}, we report 
both the median (Med.) and the robust 95th-percentile worst 
case (95\%) for each quantity.

The empirical measurements reveal a consistent pattern across 
all tested architectures. The architectural constant 
$\mathcal{C}_{\text{struct}}$ takes small values at the median 
in every dense model, ranging from $1.7\times10^{-3}$ 
(Qwen2.5-7B) to $6.1\times10^{-2}$ (Qwen3-32B), and is even 
smaller for the MoE architecture Qwen3-235B-A22B 
($1.9\times10^{-6}$). Consequently, $\mathcal{C}_{\text{struct}}$---which 
serves as the prefactor of the bound on 
$\|G_i^{\text{int}}\|_F^2 / \|G_i^{\text{lm}}\|_F^2$ given by 
Eq.~\eqref{eq:main_bound_F}---remains below $10^{-1}$ at the 
median across all models, and below $1$ even at the 
95th-percentile worst case. These measurements quantitatively 
ground Theorem~\ref{thm:asymmetry}: the ratio 
$\|G_i^{\text{int}}\|_F^2 / \|G_i^{\text{lm}}\|_F^2$ is 
uniformly small across diverse LLM architectures and scales, 
fully consistent with the localized \texttt{lm\_head} 
divergence observed empirically in 
Section~\ref{subsec:dwd_empirical}.

\textbf{Stability of $\mathcal{C}_{\text{struct}}$ throughout RL training.} 
A natural concern is whether $\mathcal{C}_{\text{struct}}$ remains small 
throughout RL training. Figure~\ref{fig:cstruct_training_dynamics} tracks 
its 95th-percentile across three regimes: it stays well below $1$ and 
exhibits no surge even at the Naive Reuse collapse onset, confirming that 
Assumption~\ref{ass:jacobian} holds throughout training and that 
Theorem~\ref{thm:asymmetry}'s structural attenuation persists accordingly.

\begin{table}[htbp]
    \centering
    \caption{\textbf{Empirical measurements of architectural constants} entering Theorem~\ref{thm:asymmetry} across ten mainstream LLM architectures, grouped by model family. Each constant is reported as median (Med.) and 95th-percentile worst case (95\%). The final column---the architectural constant $\mathcal{C}_{\text{struct}} = 4 \beta_{\max} C / \alpha_{\min}$---is the prefactor of the inter-layer Frobenius gradient ratio bound given by Eq.~\eqref{eq:main_bound_F_appendix}, and remains below $10^{-1}$ at the median (and below 0.5 even at the 95th-percentile worst case) across all tested models.}
    \label{tab:empirical_constants}
    \resizebox{\textwidth}{!}{
    \begin{tabular}{l cc cc cc}
        \toprule
        \multirow{2}{*}{Model}
        & \multicolumn{2}{c}{$\beta_{\max}/\alpha_{\min}$} 
        & \multicolumn{2}{c}{$C$} 
        & \multicolumn{2}{c}{$\mathcal{C}_{\text{struct}}$} \\
        \cmidrule(lr){2-3} \cmidrule(lr){4-5} \cmidrule(lr){6-7}
        & Med. & 95\% & Med. & 95\% & Med. & 95\% \\
        \midrule
        Llama-3.2-3B-Inst.    & $3.74{\times}10^{-2}$ & $9.42{\times}10^{-2}$ & $8.39{\times}10^{-2}$ & $9.15{\times}10^{-1}$ & $1.26{\times}10^{-2}$ & $3.45{\times}10^{-1}$ \\
        Llama-3.1-8B-Inst.    & $2.48{\times}10^{-2}$ & $5.35{\times}10^{-2}$ & $1.50{\times}10^{-1}$ & $9.85{\times}10^{-1}$               & $1.49{\times}10^{-2}$ & $2.11{\times}10^{-1}$ \\
        \midrule
        Qwen2.5-1.5B-Inst.    & $1.97{\times}10^{-2}$ & $3.45{\times}10^{-2}$ & $1.64{\times}10^{-1}$ & $6.45{\times}10^{-1}$ & $1.29{\times}10^{-2}$ & $8.90{\times}10^{-2}$ \\
        Qwen2.5-7B-Inst.      & $1.35{\times}10^{-2}$ & $4.96{\times}10^{-2}$ & $3.12{\times}10^{-1}$ & $9.36{\times}10^{-1}$ & $1.68{\times}10^{-2}$ & $1.86{\times}10^{-1}$ \\
        Qwen2.5-72B-Inst.     & $1.23{\times}10^{-2}$ & $2.94{\times}10^{-2}$ & $1.07{\times}10^{-1}$ & $5.03{\times}10^{-1}$ & $5.26{\times}10^{-3}$ & $5.92{\times}10^{-2}$ \\
        \midrule
        Qwen3-4B-Inst-2507    & $1.63{\times}10^{-2}$ & $1.53{\times}10^{-1}$ & $7.56{\times}10^{-2}$ & $3.84{\times}10^{-1}$ & $4.93{\times}10^{-3}$ & $2.35{\times}10^{-1}$ \\
        Qwen3-8B              & $1.81{\times}10^{-2}$ & $2.28{\times}10^{-1}$ & $2.45{\times}10^{-1}$ & $4.02{\times}10^{-1}$ & $1.77{\times}10^{-2}$ & $3.67{\times}10^{-1}$ \\
        Qwen3-32B             & $1.99{\times}10^{-2}$ & $1.53{\times}10^{-1}$ & $7.61{\times}10^{-1}$ & $2.85{\times}10^{-1}$               & $6.06{\times}10^{-2}$ & $1.74{\times}10^{-1}$               \\
        Qwen3-235B-A22B      & $1.63{\times}10^{-2}$ & $5.51{\times}10^{-2}$ & $2.83{\times}10^{-5}$ & $1.58{\times}10^{-3}$ & $1.85{\times}10^{-6}$ & $3.48{\times}10^{-4}$ \\
        \midrule
        GPT-OSS-120b         & $2.76{\times}10^{-1}$ & $5.29{\times}10^{-1}$ & $2.91{\times}10^{-3}$ & $4.86{\times}10^{-3}$ & $3.21{\times}10^{-3}$ & $1.03{\times}10^{-2}$ \\
        \bottomrule
    \end{tabular}
    }
\end{table}

\begin{figure}[h]
    \centering\scriptsize\renewcommand\arraystretch{0.5}
    \setlength{\tabcolsep}{10pt}
    \begin{tabular}{cc}
    \includegraphics[width=0.4\linewidth]{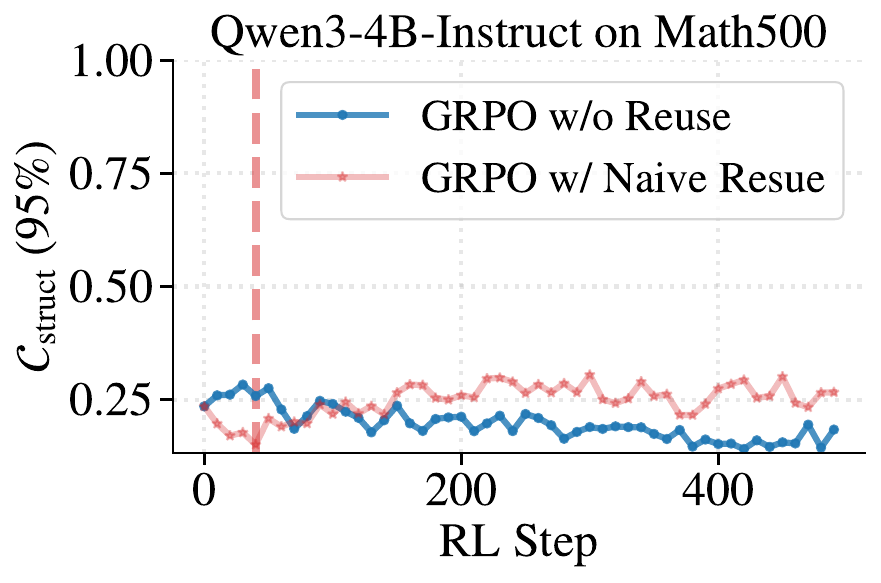}&
    \includegraphics[width=0.4\linewidth]{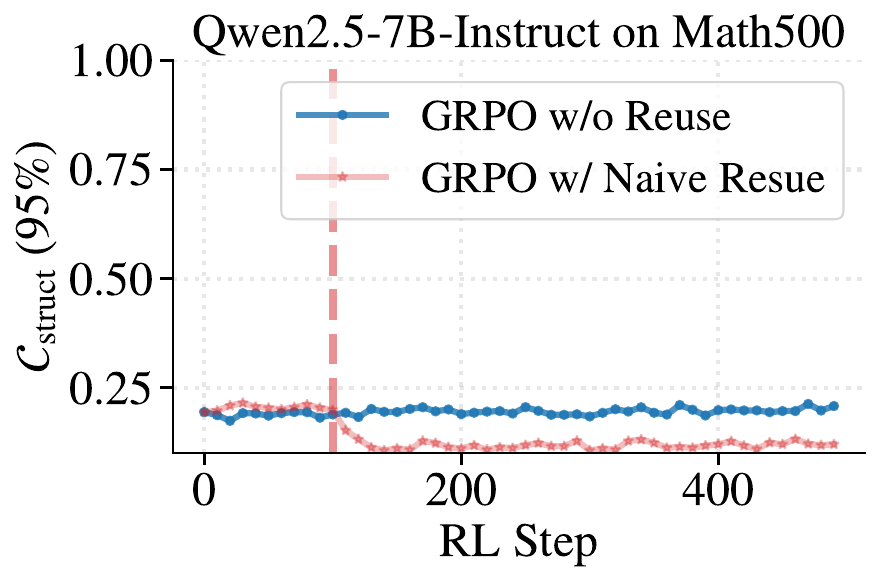}\\[-2pt]
    \end{tabular}
\caption{\textbf{Dynamics of $\mathcal{C}_{\text{struct}}$ (95th-percentile) 
during RL training} on Qwen3-4B-Instruct and Qwen2.5-7B-Instruct (Math500). 
Red dashed lines mark the onset of performance degradation under Naive Reuse. 
Observation: \emph{$\mathcal{C}_{\text{struct}}$ remains stable across all 
regimes, including at collapse onset.}}
\label{fig:cstruct_training_dynamics}
\end{figure}

\section{Detailed Proofs for 
Section~\ref{subsubsec:chi2_divergence}}
\label{app:proof_chi2_bound}

This appendix provides detailed proofs for all theoretical results presented in 
Section~\ref{subsubsec:chi2_divergence}. To make this section self-contained, we restate each result prior to its proof. Throughout the appendix, we retain the notation introduced in the main text
\subsection{Proof of Lemma~\ref{lem:grad_bound} (Gradient 
Norm in Terms of Importance-Ratio Second Moment)}
\label{app:proof_grad_bound}

\begin{lemma}[Restatement of Lemma~\ref{lem:grad_bound}]
Let $\overline{r^2} \triangleq \frac{1}{T}\sum_{i=1}^T r_i^2$ 
denote the empirical second moment of importance ratios. Then
\begin{equation}
\|G^{\text{lm}}\|_F^2 \;\leq\; c_{\max} \, \overline{r^2}, 
\qquad \text{where} \quad c_{\max} \triangleq \max_{1 \leq i 
\leq T} \hat{A}_i^2 \, \| e_{a_i} - \pi_\theta(\cdot \mid 
h_{L,i}) \|_2^2 \, \| h_{L,i} \|_2^2.
\end{equation}
\end{lemma}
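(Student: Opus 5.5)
The approach is to write the batch gradient as an average of the rank-1 per-token gradients from Proposition~\ref{prop:grad_decomp}, and then control its squared Frobenius norm using convexity of $\|\cdot\|_F^2$. Concretely,
\[
G^{\text{lm}} = \frac{1}{T}\sum_{i=1}^T G_i^{\text{lm}} = \frac{1}{T}\sum_{i=1}^T E_i h_{L,i}^\top .
\]
Since $M\mapsto\|M\|_F^2$ is convex, Jensen's inequality for the uniform average over $i$ gives
\[
\|G^{\text{lm}}\|_F^2 \le \frac{1}{T}\sum_{i=1}^T \|G_i^{\text{lm}}\|_F^2 .
\]
Equivalently, one can apply the triangle inequality followed by Cauchy--Schwarz, $\bigl(\frac1T\sum_i a_i\bigr)^2\le \frac1T\sum_i a_i^2$.

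Next, each summand is evaluated exactly with the rank-1 identity~\eqref{eq:rank1_frobenius}. Substituting $E_i = r_i\hat{A}_i(e_{a_i}-\pi_\theta(\cdot\mid h_{L,i}))$ yields
\[
\|G_i^{\text{lm}}\|_F^2 = \|E_i\|_2^2\,\|h_{L,i}\|_2^2 = r_i^2\cdot \hat{A}_i^2\,\|e_{a_i}-\pi_\theta(\cdot\mid h_{L,i})\|_2^2\,\|h_{L,i}\|_2^2 .
\]
The factor multiplying $r_i^2$ is a nonnegative per-token coefficient $c_i$, and by definition $c_i\le c_{\max}$. Hence
\[
\|G^{\text{lm}}\|_F^2 \le \frac1T\sum_i c_i r_i^2 \le c_{\max}\,\frac1T\sum_i r_i^2 = c_{\max}\,\overline{r^2},
\]
which is the claim.

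The only step needing care is the first one, where the norm of the averaged gradient is passed to the average of the norms. Here it matters that the uniform $1/T$ weighting makes Jensen applicable with no extra factor of $T$; bounding $\|\sum_i G_i\|^2$ directly would instead produce a spurious factor of $T$. The proof also assumes, as in the main text, that all $T$ tokens lie in the unclipped region. Clipped tokens contribute zero gradient, so they only decrease the left-hand side while $r_i^2\ge0$ still appears on the right, and the inequality survives.
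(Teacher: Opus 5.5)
Your proof is correct and is the paper's argument: Jensen on the uniform $1/T$ average of the rank-1 gradients $E_i h_{L,i}^\top$, then the identity $\|uv^\top\|_F^2=\|u\|_2^2\|v\|_2^2$, then bounding each per-token coefficient by $c_{\max}$. One caution on your closing remark. Dropping clipped tokens need not decrease $\|G^{\text{lm}}\|_F$, since removing a term can remove a cancellation. The bound still holds, because Jensen with zero matrices at the clipped indices gives a sum over unclipped tokens only, and that sum is at most $c_{\max}\overline{r^2}$.
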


\begin{proof}
The proof proceeds in three steps: (1) decomposing the squared 
gradient norm via Jensen's inequality, (2) factoring per-token 
contributions into rank-1 norms, and (3) absorbing the 
representation-dependent factors into $c_{\max}$.

\textbf{Step 1: Jensen's inequality on the squared norm.}
By Proposition~\ref{prop:grad_decomp}, the batch-level 
\texttt{lm\_head} gradient admits the rank-1 sum decomposition:
\begin{equation}
G^{\text{lm}} = \frac{1}{T}\sum_{i=1}^{T} E_i \, h_{L,i}^\top, 
\qquad 
E_i = r_i \hat{A}_i \bigl(e_{a_i} - \pi_\theta(\cdot \mid 
h_{L,i})\bigr).
\end{equation}
The squared Frobenius norm $\|\cdot\|_F^2$ is convex on matrix 
space. Applying Jensen's inequality with the uniform 
distribution over $\{1, \ldots, T\}$:
\begin{equation}
\|G^{\text{lm}}\|_F^2 = \left\| \frac{1}{T}\sum_{i=1}^T E_i 
h_{L,i}^\top \right\|_F^2 \;\leq\; \frac{1}{T}\sum_{i=1}^T 
\|E_i h_{L,i}^\top\|_F^2.
\label{eq:jensen_step}
\end{equation}

\textbf{Step 2: Rank-1 norm factorization.}
For any vectors $u \in \mathbb{R}^m, v \in \mathbb{R}^n$, the 
outer product satisfies $\|uv^\top\|_F^2 = \|u\|_2^2 \|v\|_2^2$. 
Applying this identity to each rank-1 term in 
Eq.~\eqref{eq:jensen_step}:
\begin{equation}
\|E_i h_{L,i}^\top\|_F^2 = \|E_i\|_2^2 \cdot \|h_{L,i}\|_2^2.
\end{equation}
Substituting the explicit form of $E_i$:
\begin{equation}
\|E_i\|_2^2 = r_i^2 \hat{A}_i^2 \|e_{a_i} - \pi_\theta(\cdot 
\mid h_{L,i})\|_2^2.
\end{equation}
Combining these:
\begin{equation}
\|G^{\text{lm}}\|_F^2 \;\leq\; \frac{1}{T} \sum_{i=1}^T r_i^2 
\hat{A}_i^2 \|e_{a_i} - \pi_\theta(\cdot \mid h_{L,i})\|_2^2 
\|h_{L,i}\|_2^2.
\label{eq:per_token_decomp}
\end{equation}

\textbf{Step 3: Absorbing representation factors into 
$c_{\max}$.}
By the definition of $c_{\max}$, the per-token factor 
involving $\hat{A}_i^2$, $\|e_{a_i}-\pi_\theta\|^2$, and 
$\|h_{L,i}\|^2$ is bounded by $c_{\max}$ uniformly across 
$i \in \{1,\ldots,T\}$:
\begin{equation}
\hat{A}_i^2 \|e_{a_i} - \pi_\theta(\cdot \mid h_{L,i})\|_2^2 
\|h_{L,i}\|_2^2 \;\leq\; c_{\max}, \quad \forall i.
\end{equation}
Substituting into Eq.~\eqref{eq:per_token_decomp}:
\begin{equation}
\|G^{\text{lm}}\|_F^2 \;\leq\; \frac{c_{\max}}{T} \sum_{i=1}^T 
r_i^2 = c_{\max} \cdot \overline{r^2},
\end{equation}
which establishes the claimed bound.
\end{proof}

\subsection{Proof of Theorem~\ref{thm:chi2_bound} 
(\texttt{lm\_head} Gradient Energy Lower-Bounds Policy 
Divergence)}
\label{app:proof_chi2}

\begin{theorem}[Restatement of Theorem~\ref{thm:chi2_bound}]
Let $\hat{\chi}^2 \triangleq \frac{1}{T}\sum_{i=1}^T (r_i^2 - 1)$ denote the empirical estimator of the Pearson $\chi^2$ divergence $\chi^2(\pi_\theta \| \pi_{\text{old}})$. Then
\begin{equation}
\|G^{\text{lm}}\|_F^2 \;\leq\; c_{\max} \bigl( 1 + 
\hat{\chi}^2 \bigr),
\label{eq:chi2_bound_app}
\end{equation}
or equivalently, $\hat{\chi}^2 \geq \|G^{\text{lm}}\|_F^2 / 
c_{\max} - 1$.
\end{theorem}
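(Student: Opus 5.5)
This is essentially a one-line corollary of Lemma~\ref{lem:grad_bound}. The plan is to rewrite the empirical second moment $\overline{r^2}$ in terms of $\hat{\chi}^2$ and substitute it into the lemma's bound.

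First, by linearity of the finite average,
\[
\hat{\chi}^2 = \frac{1}{T}\sum_{i=1}^T (r_i^2 - 1) = \frac{1}{T}\sum_{i=1}^T r_i^2 - 1 = \overline{r^2} - 1,
\]
so $\overline{r^2} = 1 + \hat{\chi}^2$. Plugging this into $\|G^{\text{lm}}\|_F^2 \le c_{\max}\,\overline{r^2}$ from Lemma~\ref{lem:grad_bound} gives $\|G^{\text{lm}}\|_F^2 \le c_{\max}(1+\hat{\chi}^2)$, which is Eq.~\eqref{eq:chi2_bound_app}.

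For the equivalent form, I divide by $c_{\max}$ and rearrange. This needs $c_{\max} > 0$. It holds whenever some token has $\hat{A}_i \neq 0$ and $\pi_\theta(\cdot\mid h_{L,i}) \neq e_{a_i}$, since Lemma~\ref{lem:activation_bound} guarantees $\|h_{L,i}\|_2^2 \ge \alpha_{\min} d_{\text{model}} > 0$. In the degenerate case $c_{\max}=0$, every per-token factor vanishes, so $G^{\text{lm}}=0$. The first inequality then holds trivially, and the rearranged form should be read with the convention that it is vacuous.

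The only step needing care is this nondegeneracy remark. It is also worth a brief note that $\hat{\chi}^2$ is only an estimator: the bound is stated purely in terms of the empirical quantities, and the identity $\overline{r^2}=1+\hat{\chi}^2$ is exact, so no statistical argument is required. The entire analytic content comes from the Jensen step already carried out in Lemma~\ref{lem:grad_bound}.
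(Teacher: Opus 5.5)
Your proposal is correct and matches the paper's proof: rewrite $\hat{\chi}^2 = \overline{r^2} - 1$, substitute into Lemma~\ref{lem:grad_bound}, then divide by $c_{\max}$ to obtain the lower-bound form. Your handling of $c_{\max} > 0$, including the degenerate case $c_{\max} = 0$ where $G^{\text{lm}} = 0$, improves slightly on the paper, which simply asserts $c_{\max} > 0$ without justification.
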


\begin{proof}
The proof reduces to a direct algebraic decomposition of the 
empirical second moment $\overline{r^2}$. By the definition of 
$\hat{\chi}^2$:
\begin{equation}
\hat{\chi}^2 = \frac{1}{T}\sum_{i=1}^T (r_i^2 - 1) = \frac{1}{T} 
\sum_{i=1}^T r_i^2 - 1 = \overline{r^2} - 1,
\end{equation}
which gives the identity $\overline{r^2} = 1 + \hat{\chi}^2$. 
Substituting this into the bound from 
Lemma~\ref{lem:grad_bound}:
\begin{equation}
\|G^{\text{lm}}\|_F^2 \;\leq\; c_{\max} \overline{r^2} = 
c_{\max} (1 + \hat{\chi}^2),
\end{equation}
which establishes Eq.~\eqref{eq:chi2_bound_app}. Rearranging 
this inequality (well-defined since $c_{\max} > 0$):
\begin{equation}
\hat{\chi}^2 \;\geq\; \frac{\|G^{\text{lm}}\|_F^2}
{c_{\max}} - 1,
\end{equation}
which establishes the equivalent lower-bound form. The two 
forms are mathematically equivalent.
\end{proof}

\textbf{Remark on the empirical $\chi^2$ statistic.}
The quantity $\hat{\chi}^2 = \frac{1}{T}\sum_i (r_i^2 - 1)$ is 
the natural batch-level analog of the Pearson $\chi^2$ 
divergence: under the relation $\mathbb{E}_{a \sim 
\pi_{\text{old}}}[r^2] - 1 = \chi^2(\pi_\theta \| 
\pi_{\text{old}})$ for the population divergence, $\hat{\chi}^2$ 
is the empirical estimator obtained by replacing the population 
expectation with the batch sample mean. We work with 
$\hat{\chi}^2$ throughout, since the bound in 
Theorem~\ref{thm:chi2_bound} holds deterministically at the 
batch level without invoking any concentration argument.

\section{Pseudocode of Dynamic Gradient Gating (DGG)}
\label{app:algorithm}
We provide the full pseudocode of GRPO with Dynamic Gradient Gating (DGG) in Algorithm~\ref{alg:dgg}. DGG integrates as a lightweight wrapper around the standard sample-reuse loop, requiring only a single Frobenius-norm computation per step and no modification to the underlying objective or optimizer. The Z-score test (Eq.~\ref{eq:zscore}) is computed at every reuse step, and gating activates only at $k > 1$ to target reuse-induced surges.

\begin{algorithm}[h]
\caption{GRPO with Dynamic Gradient Gating (DGG)}
\label{alg:dgg}
\begin{algorithmic}[1]
\Require Initial policy $\pi_\theta$, reuse factor $K$, threshold $\tau$
\For{each training iteration}
    \State Generate rollout batch $\mathcal{D}$ from $\pi_{\text{old}} \leftarrow \pi_\theta$
    \For{$k = 1, \dots, K$}
        \State Compute GRPO loss $\mathcal{L}_{\text{GRPO}}$ on $\mathcal{D}$ via Eq.~\eqref{eqn:raw_obj} and backpropagate
        \State $g_t \leftarrow \|G^{\text{lm}}\|_F^2$;\quad $\Delta g_t \leftarrow g_t - g_{t-1}$ \hfill {\color{gray}\# current observation}
        \State $z_t \leftarrow (\Delta g_t - \mu_t) / (\sigma_t + \varepsilon)$ \hfill {\color{gray}\# tested against historical stats}
        \If{$k > 1$ \textbf{and} $z_t > \tau$}
            \State Zero all gradients; \textbf{break} \hfill {\color{gray}\# discard \& terminate reuse}
        \EndIf
        \State Step the Adam optimizer
        \State Update running statistics $(\mu_{t+1}, \sigma_{t+1})$ with $\Delta g_t$ \hfill {\color{gray}\# stats ready for step $t{+}1$}
    \EndFor
\EndFor
\end{algorithmic}
\end{algorithm}

\section{Broader Impacts}
\label{app:broader_impacts}

This work proposes Dynamic Gradient Gating (DGG), a lightweight 
training-stability mechanism that improves the sample efficiency of RLVR 
for LLMs. As a foundational methodological contribution, DGG does not 
introduce new models, datasets, or applications; its primary effect is to 
reduce the rollout and wall-clock cost of post-training, which lowers the 
energy consumption and computational barrier of RLVR research. By making 
RL post-training more accessible, DGG may help broaden participation 
beyond well-resourced labs.

Like any general-purpose training improvement, DGG inherits the broader 
societal considerations of the underlying LLM and RLVR pipeline: any 
downstream model trained with our method retains the safety, fairness, 
and misuse concerns of its base model and reward design. We do not see a 
direct path from DGG to specific negative applications beyond those 
already associated with LLM training in general, and we encourage 
practitioners to pair efficiency gains with the standard safeguards used 
in responsible LLM deployment.

\section{Empirical Validation of DWD Phenomenon on Additional LLMs}
\label{app:more_empirical_dwd}

In this section, we extend our profiling of DWD phenomenon  in Section~\ref{subsec:dwd_empirical} to additional LLMs and tasks. 
Figure~\ref{fig:empirical_dwd_qwen2.5-7b} reports the layer-wise weight dynamics of Qwen2.5-7B-Instruct across all four tasks, and Figure~\ref{fig:empirical_dwd_more_llm} reports the dynamics on mathematical reasoning across four additional LLMs from the Qwen-2.5, Qwen-3, and Llama-3 families (1.5B--8B). In all settings, sample reuse accelerates early convergence but is followed by severe performance degradation, synchronized with an abrupt, isolated surge in the \texttt{lm\_head}, while MLP and Attention layers remain stable. These results confirm DWD as a broadly present phenomenon across diverse architectures, scales, and tasks, supporting our theoretical analysis in Section~\ref{subsec:dwd_theory} and the design of DGG.

\vspace{2cm}

\begin{figure}[h]
    \centering\scriptsize\renewcommand\arraystretch{0.5}
    \setlength{\tabcolsep}{1pt}
    \begin{tabular}{@{}c@{\hspace{2.5pt}}ccc}
    &\textbf{Task Performance} & \textbf{Weight Change (w/o Reuse)} & \textbf{Weight Change (w/ Naive Reuse)} \\
    \noalign{\smallskip}
    \raisebox{1.0cm}{\rotatebox{90}{\textbf{Math}}}&
    \includegraphics[width=0.32\linewidth]{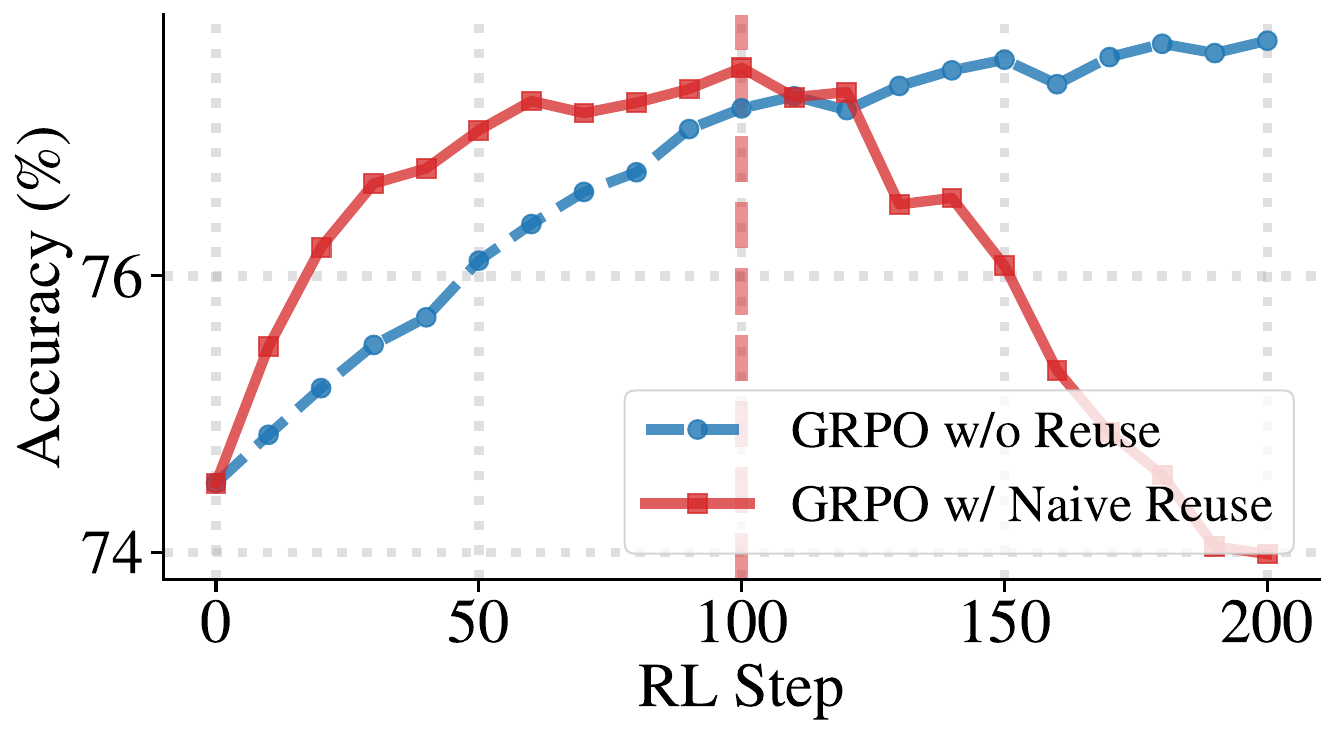}&
    \includegraphics[width=0.32\linewidth]{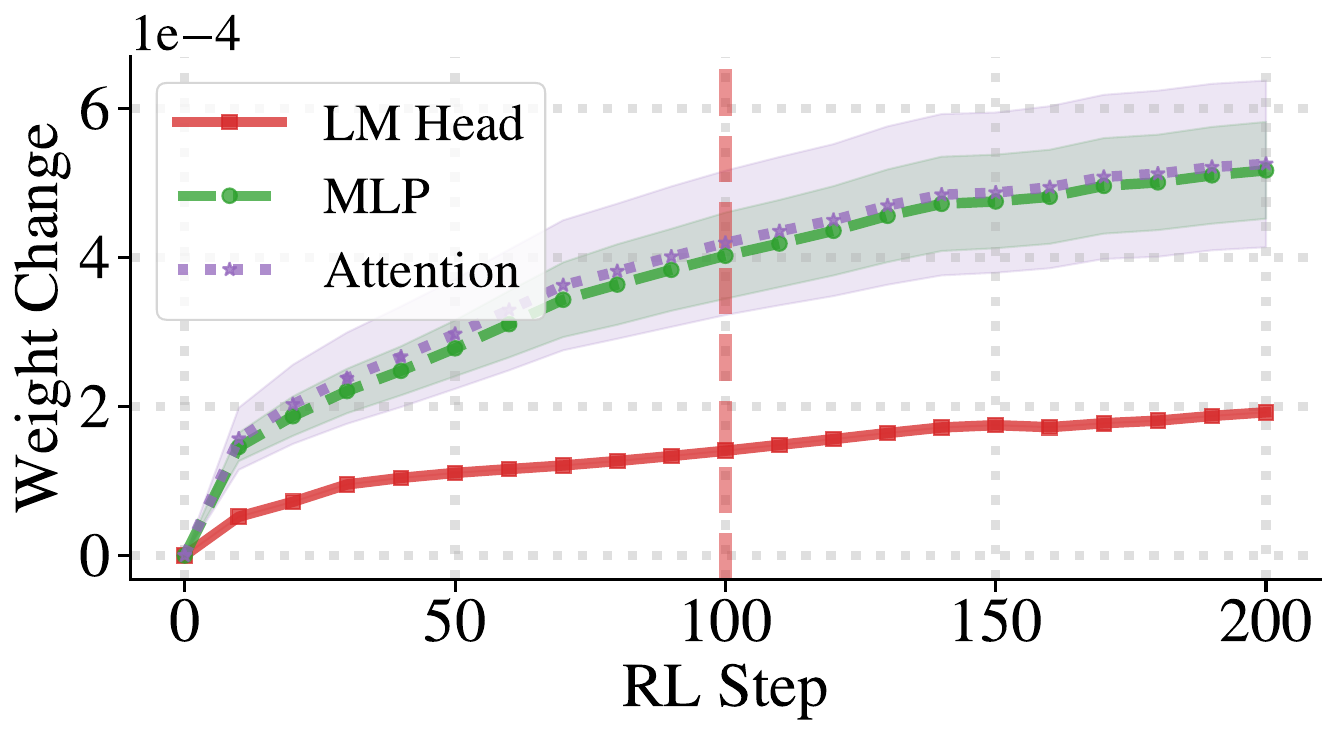}&
    \includegraphics[width=0.32\linewidth]{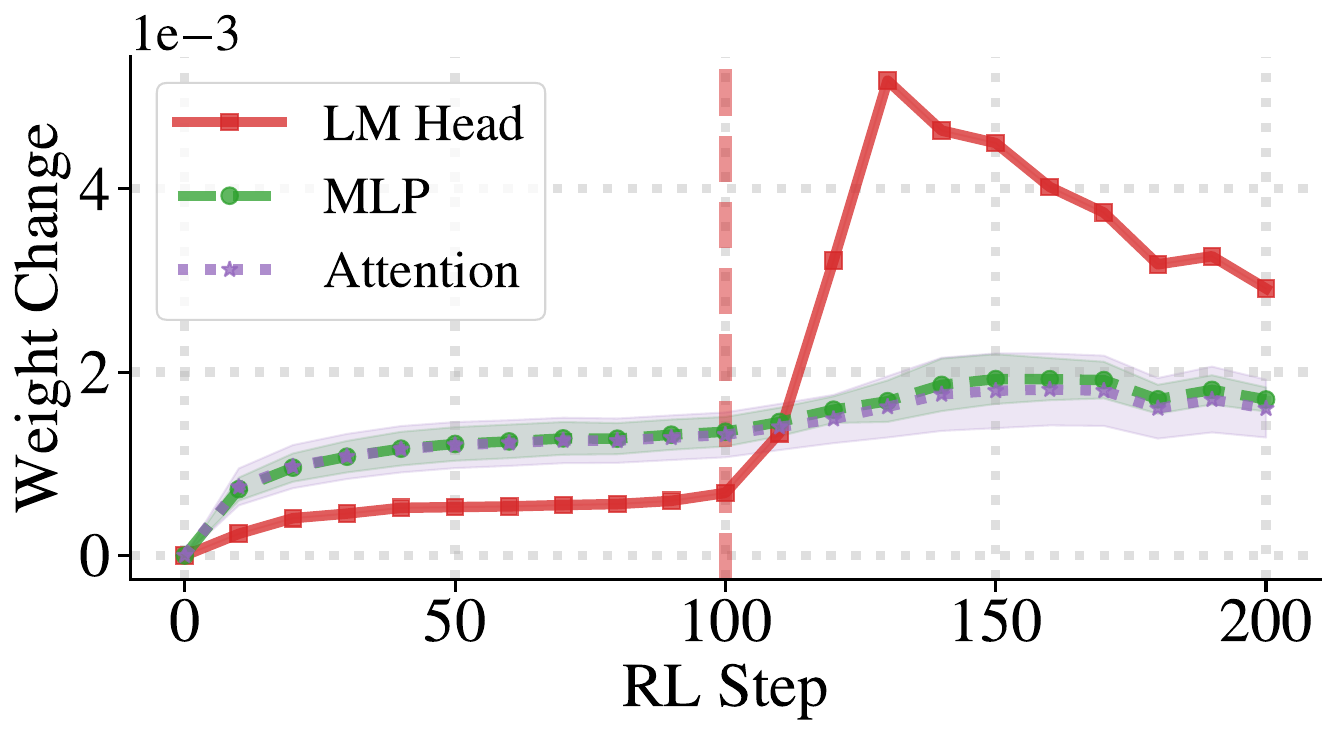}\\[-2pt]
    \raisebox{1.0cm}{\rotatebox{90}{\textbf{ALFWorld}}}&
    \includegraphics[width=0.32\linewidth]{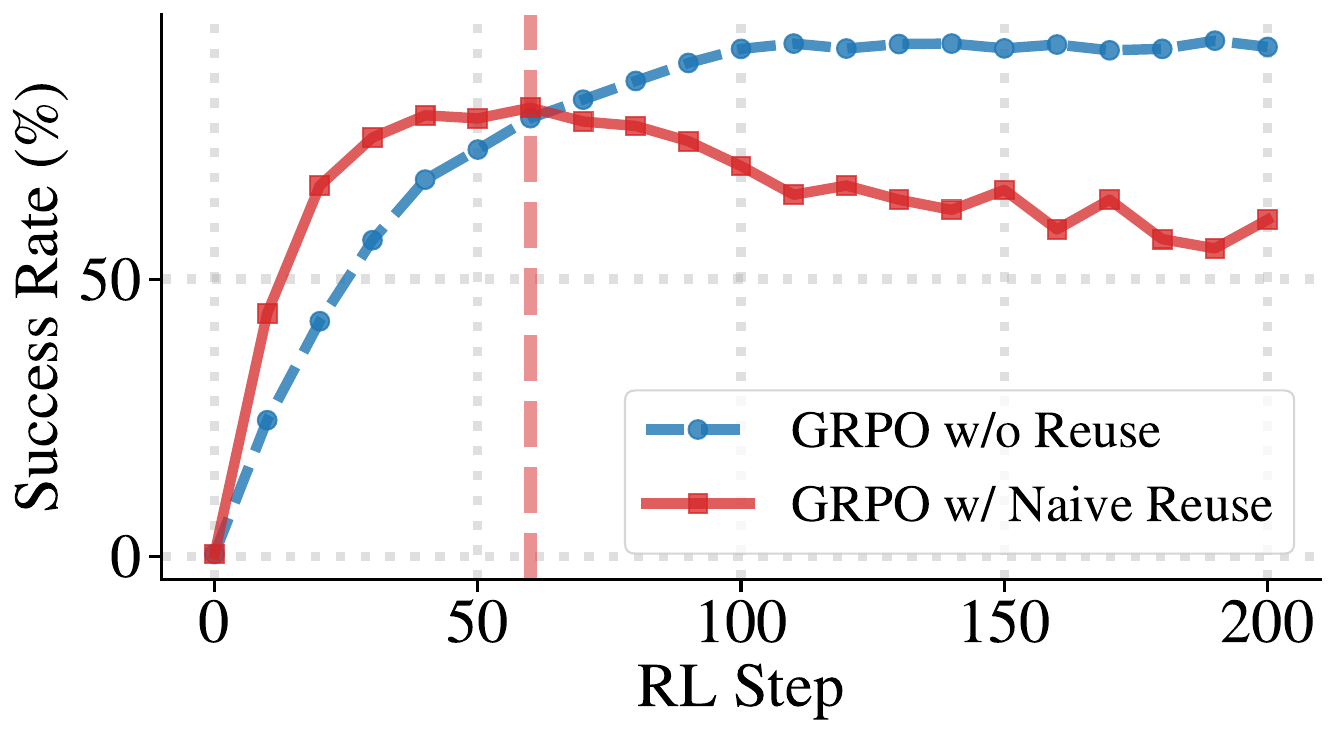}&
    \includegraphics[width=0.32\linewidth]{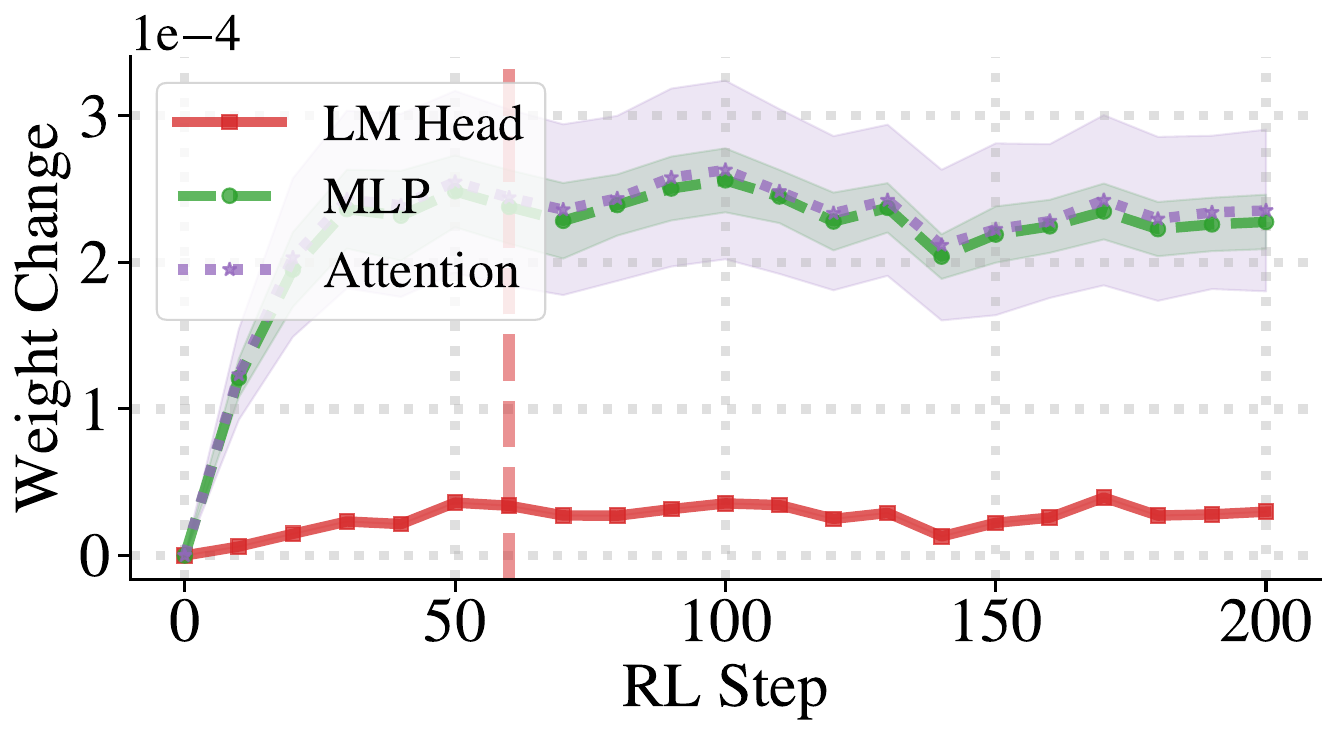}&
    \includegraphics[width=0.32\linewidth]{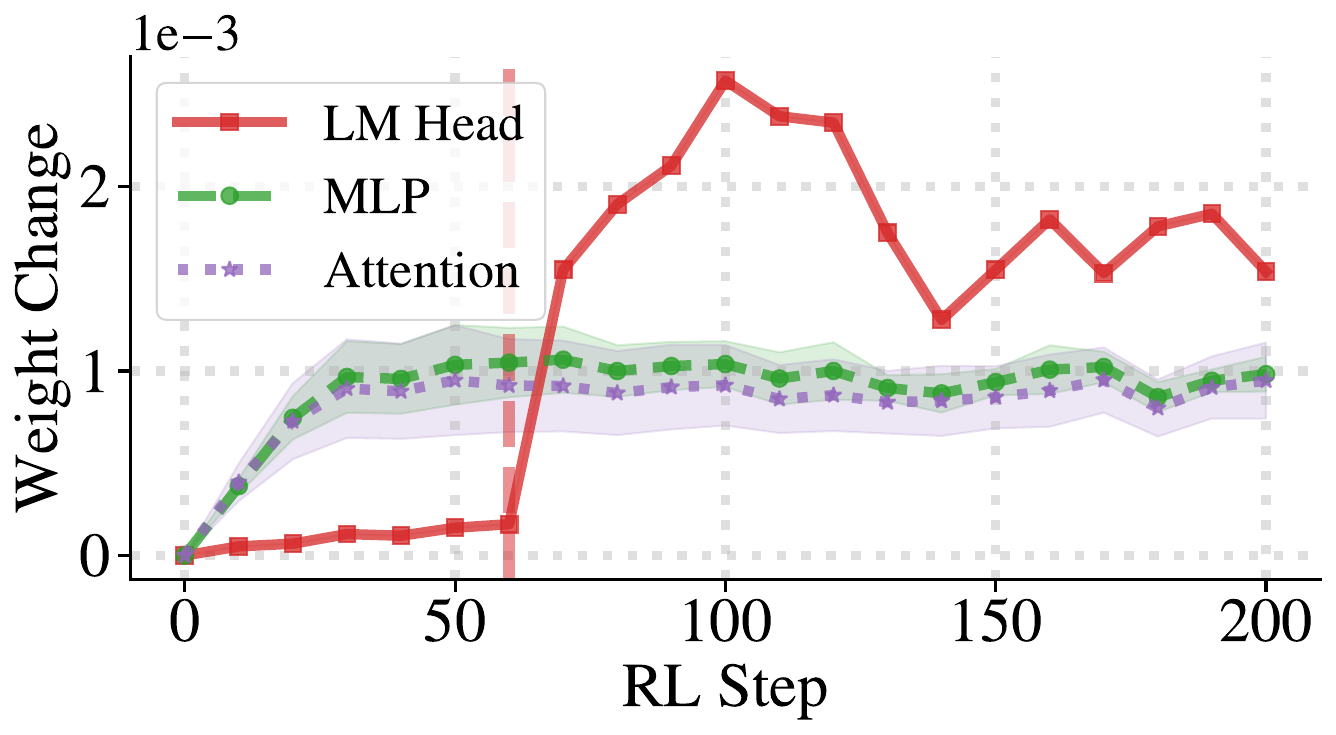}\\[-2pt]
    \raisebox{1.0cm}{\rotatebox{90}{\textbf{WebShop}}}&
    \includegraphics[width=0.32\linewidth]{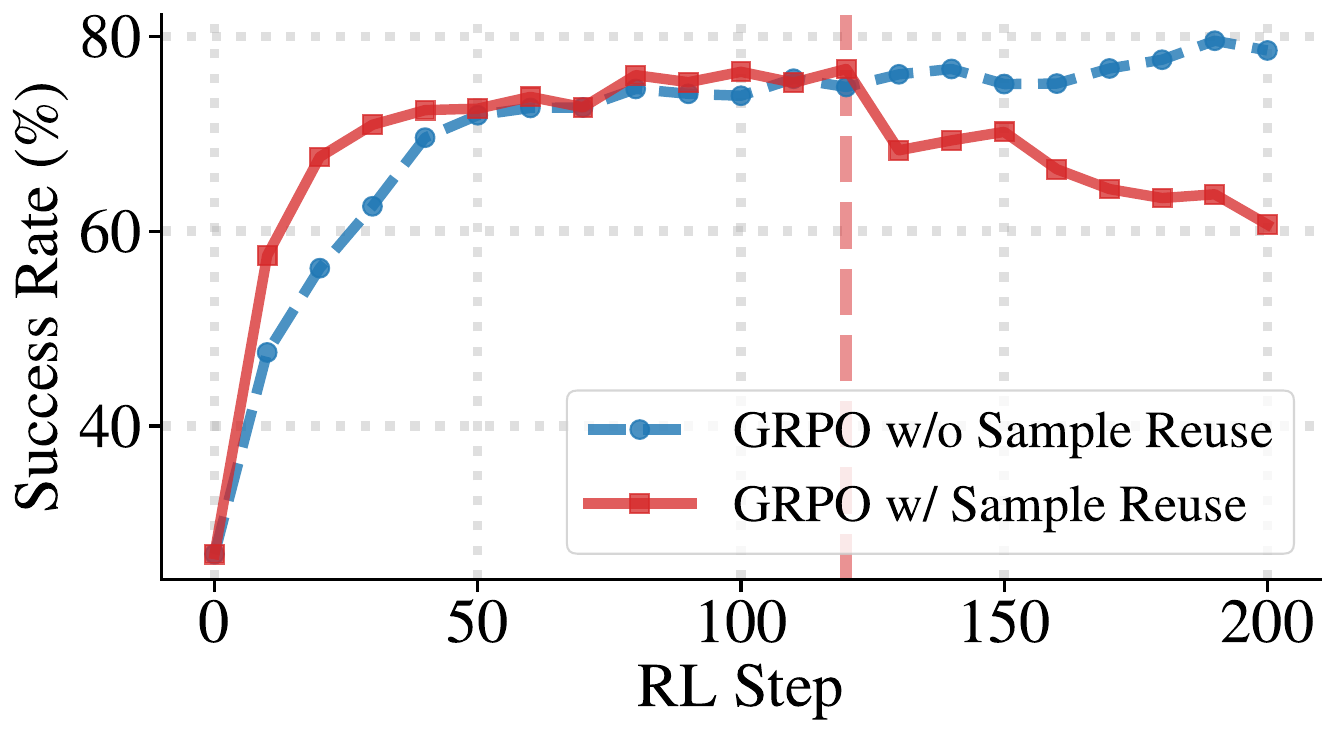}&
    \includegraphics[width=0.32\linewidth]{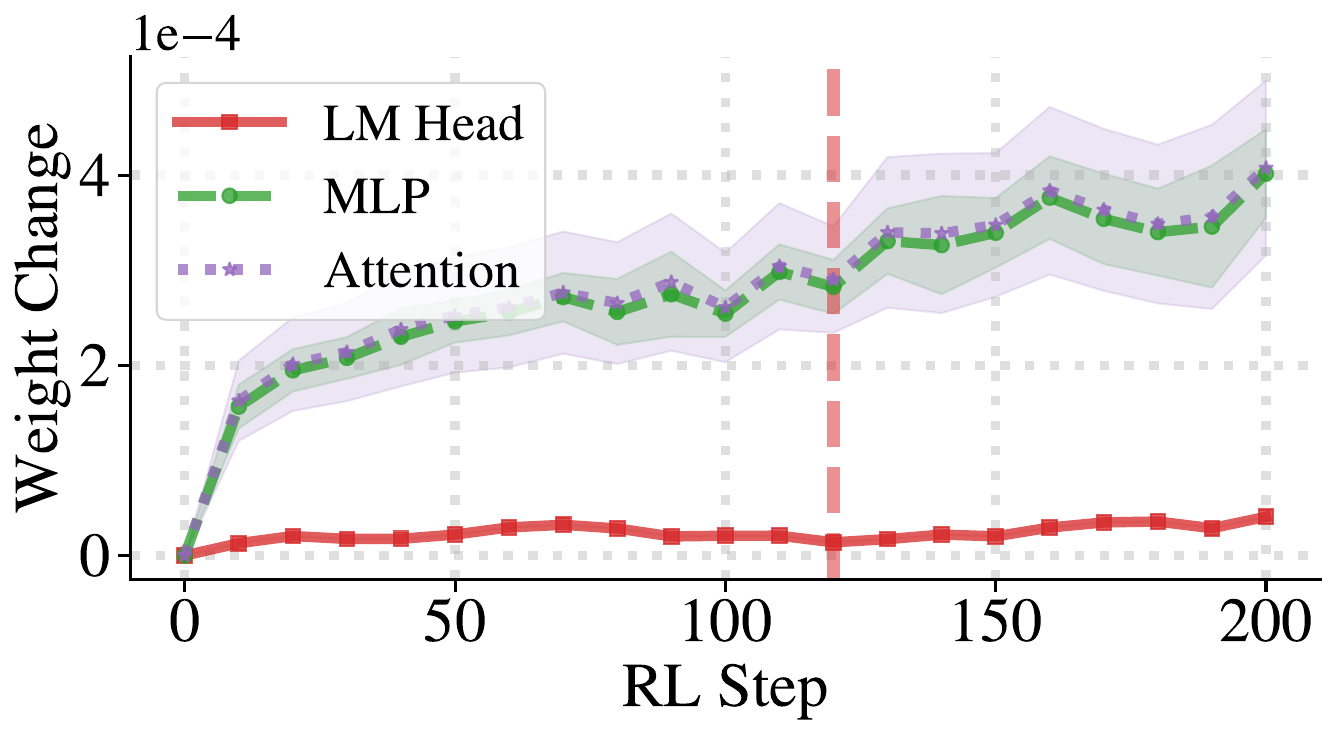}&
    \includegraphics[width=0.32\linewidth]{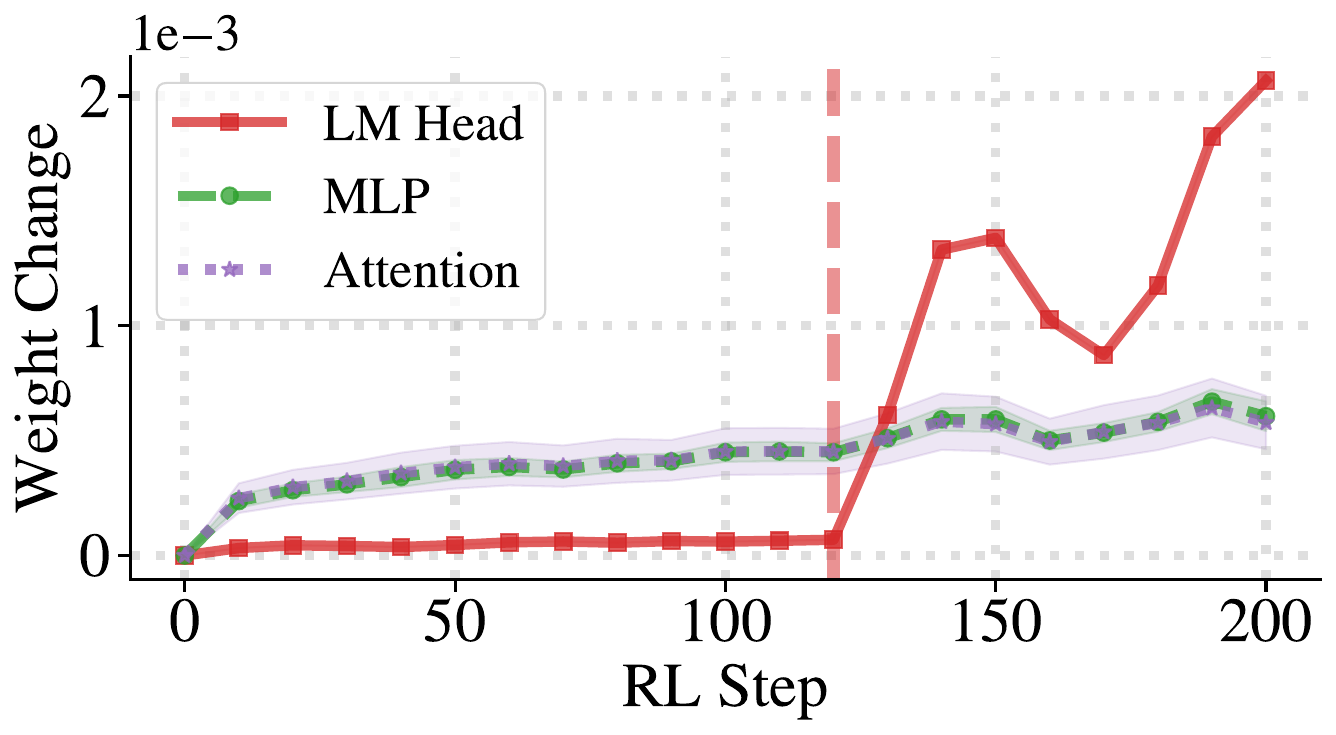}\\[-2pt]
    \raisebox{1.0cm}{\rotatebox{90}{\textbf{Search QA}}}&
    \includegraphics[width=0.32\linewidth]{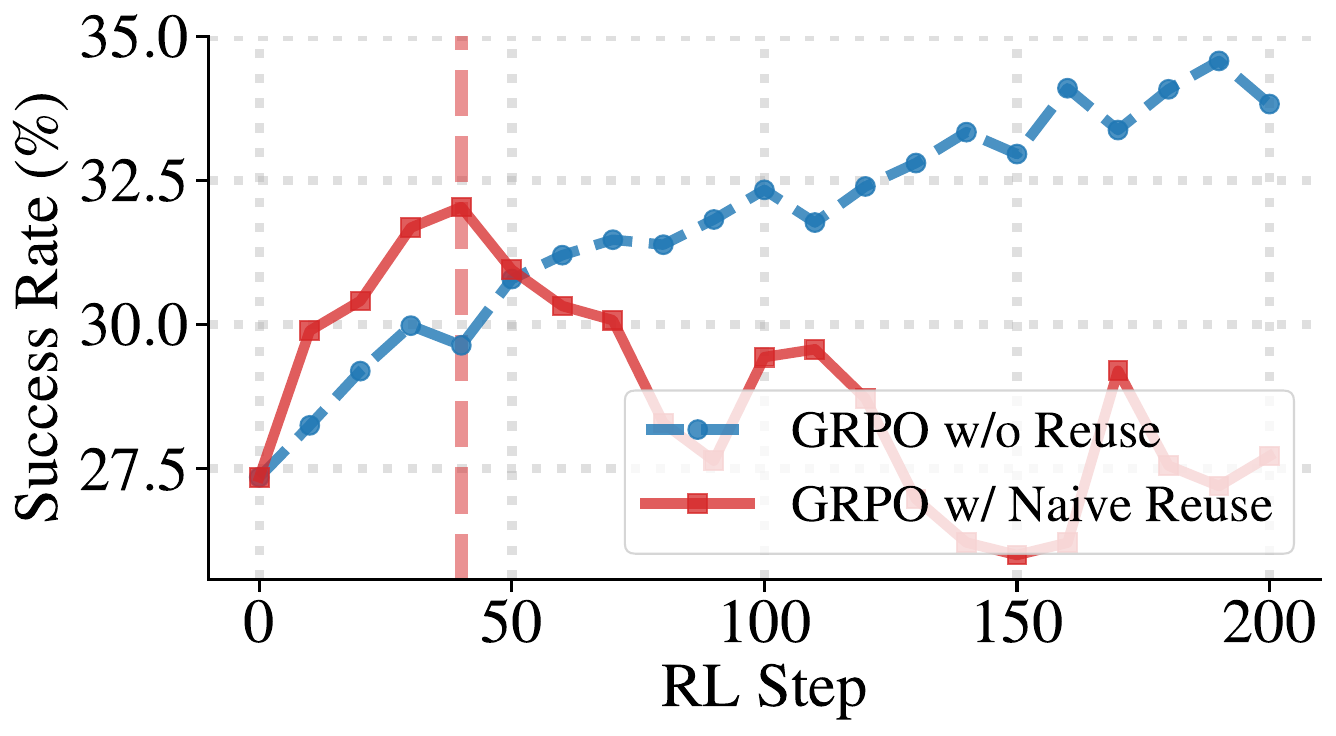}&
    \includegraphics[width=0.32\linewidth]{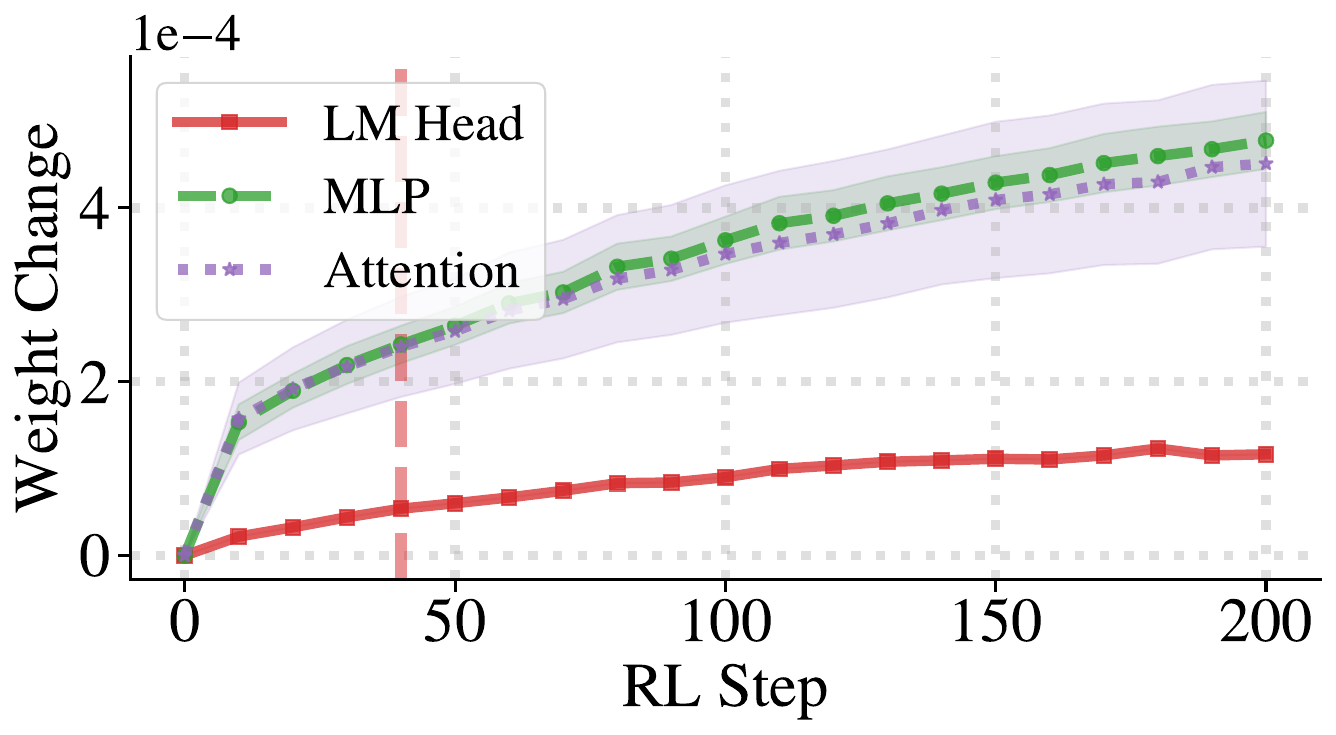}&
    \includegraphics[width=0.32\linewidth]{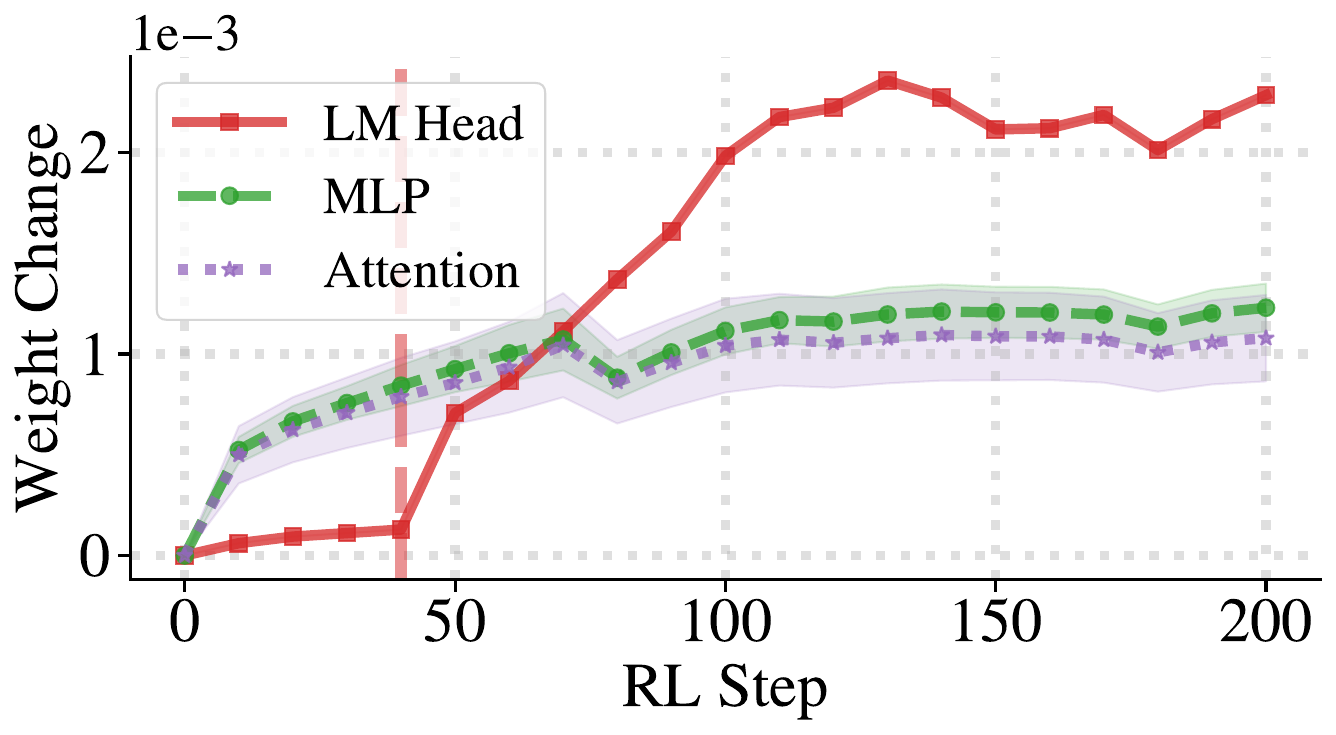}\\[-7pt]
    \end{tabular}
    \caption{\textbf{Illustration of the Disproportionate Weight Divergence (DWD) phenomenon} on Qwen2.5-7B-Instruct across four tasks. Relative weight change is defined as $\|W_t - W_{t-\Delta}\|_F / \|W_{\text{ref}}\|_F$ ($W_{\text{ref}}$: initial pretrained weight; $\Delta$: profiling interval, set to 10 RL steps); red dashed lines mark the onset of performance degradation for GRPO w/ Naive Reuse. Observation: \textit{Sample reuse accelerates early convergence, followed by a severe performance degradation synchronized with an abrupt, isolated surge in the LM Head — while intermediate layers (Attention and MLP) remain stable.}}
    \label{fig:empirical_dwd_qwen2.5-7b}
\end{figure}

\begin{figure}[]
    \centering\scriptsize\renewcommand\arraystretch{0.5}
    \setlength{\tabcolsep}{1pt}
    \begin{tabular}{@{}c@{\hspace{2.5pt}}ccc}
    &\textbf{Task Performance} & \textbf{Weight Change (w/o Reuse)} & \textbf{Weight Change (w/ Naive Reuse)} \\
    \noalign{\smallskip}
    \raisebox{0.5cm}{\rotatebox{90}{\textbf{Qwen2.5-1.5B-Ins.}}}&
    \includegraphics[width=0.32\linewidth]{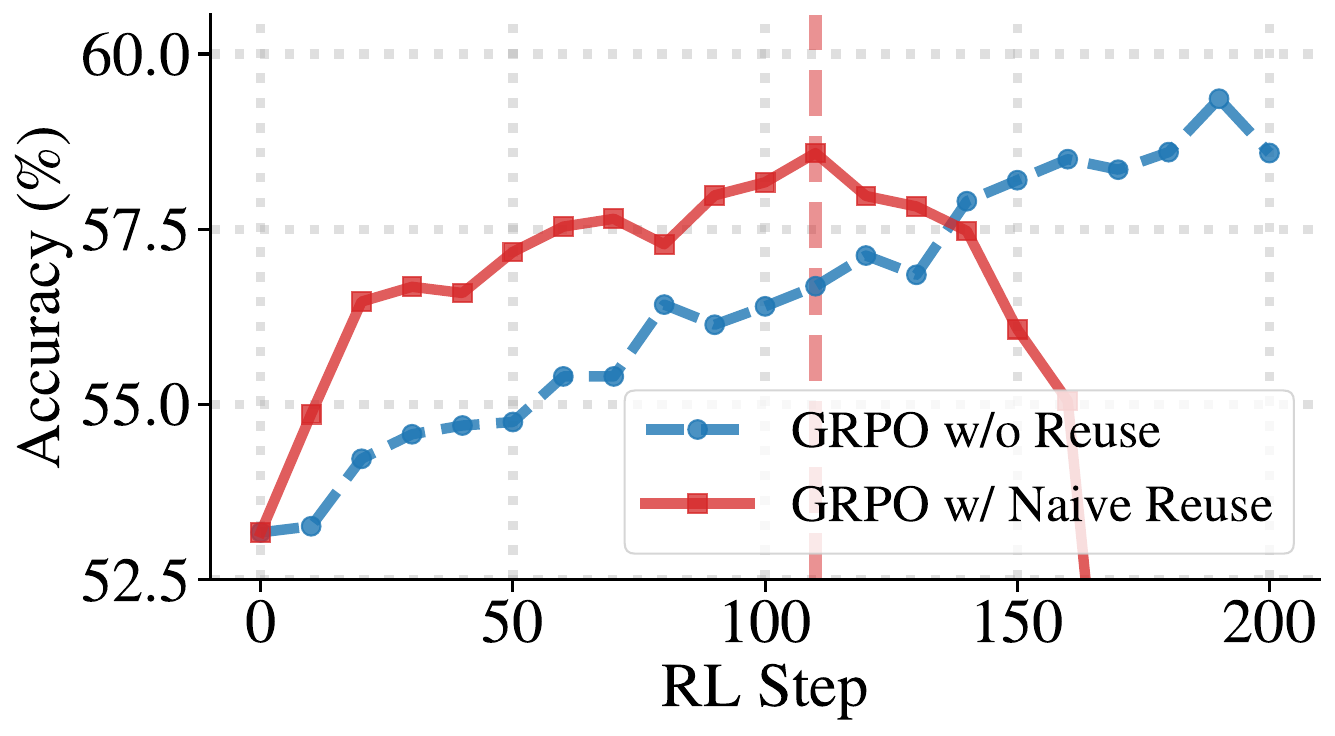}&
    \includegraphics[width=0.32\linewidth]{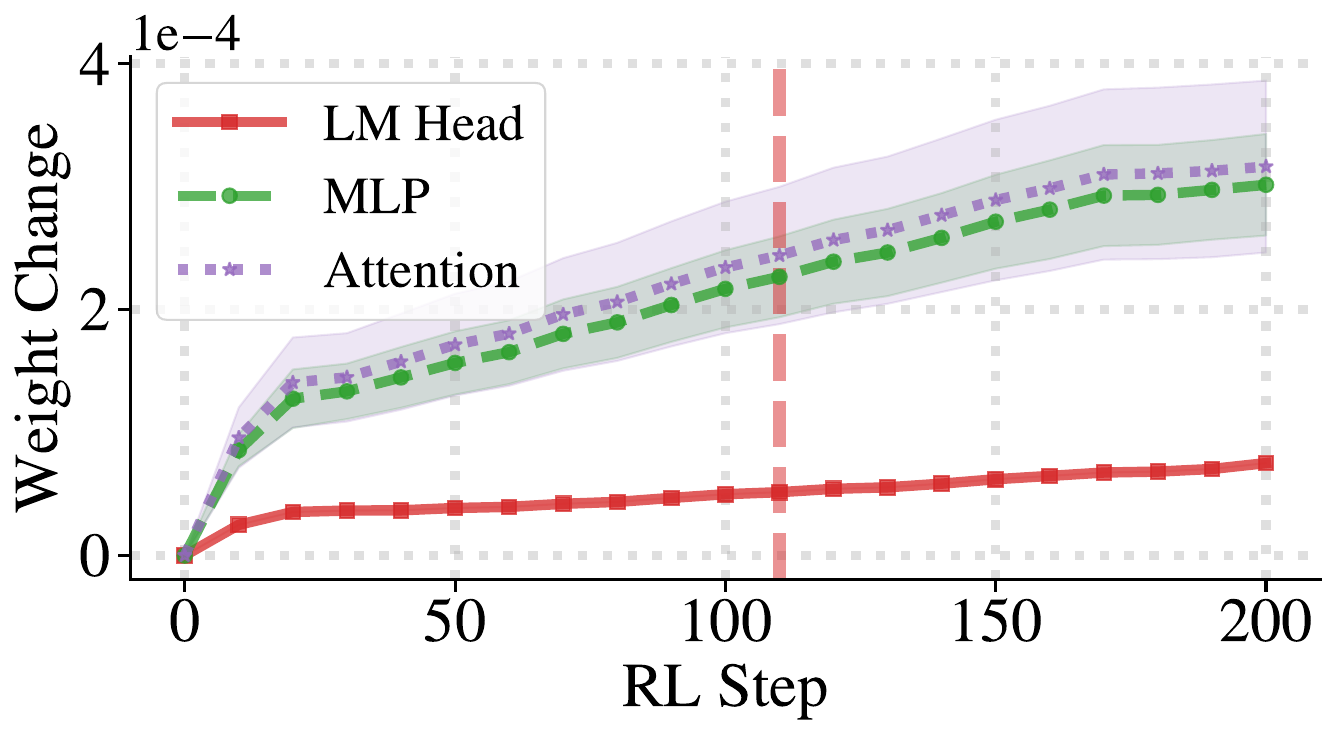}&
    \includegraphics[width=0.32\linewidth]{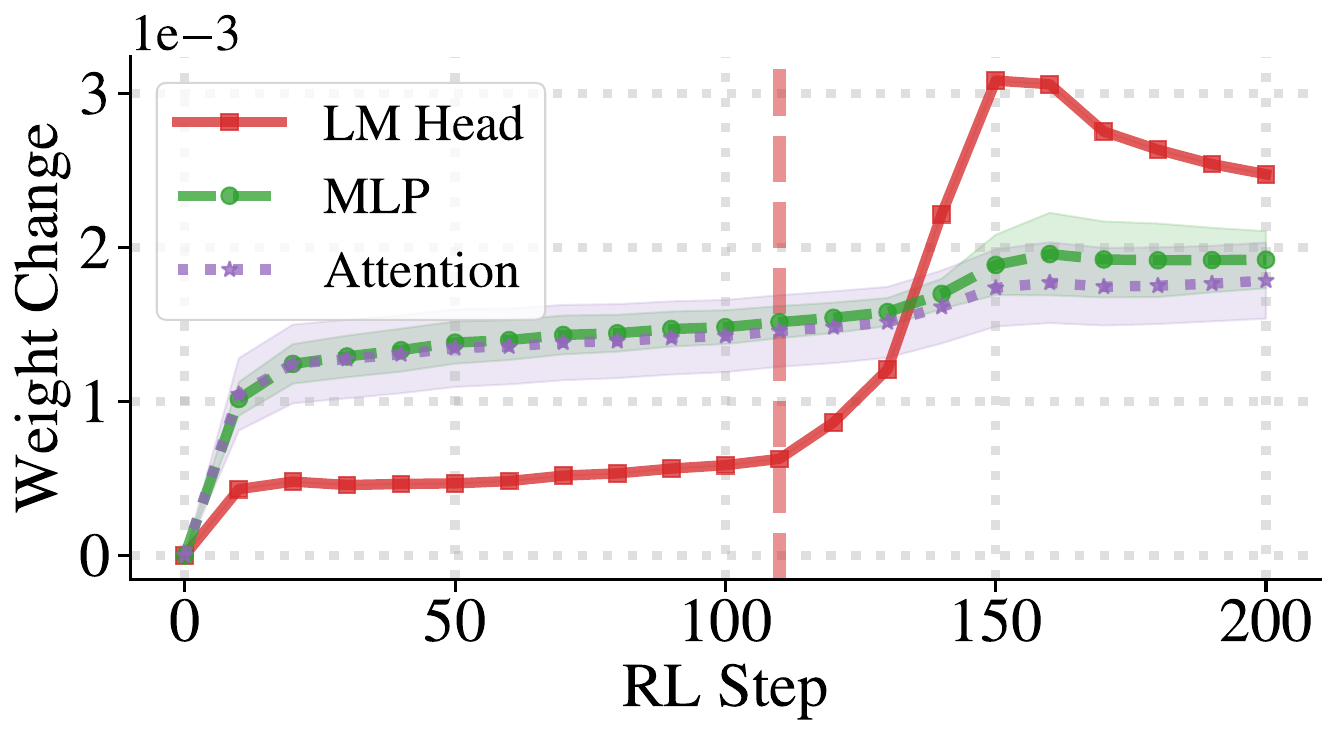}\\[-2pt]
    \raisebox{0.9cm}{\rotatebox{90}{\textbf{Qwen3-4B}}}&
    \includegraphics[width=0.32\linewidth]{figs/motivation/qwen3_4b_math500_performance.pdf}&
    \includegraphics[width=0.32\linewidth]{figs/motivation/qwen3_4b_math500_wc_wosr.pdf}&
    \includegraphics[width=0.32\linewidth]{figs/motivation/qwen3_4b_math500_wc_wsr.pdf}\\[-2pt]
    \raisebox{0.5cm}{\rotatebox{90}{\textbf{Llama3.2-3B-Ins.}}}&
    \includegraphics[width=0.32\linewidth]{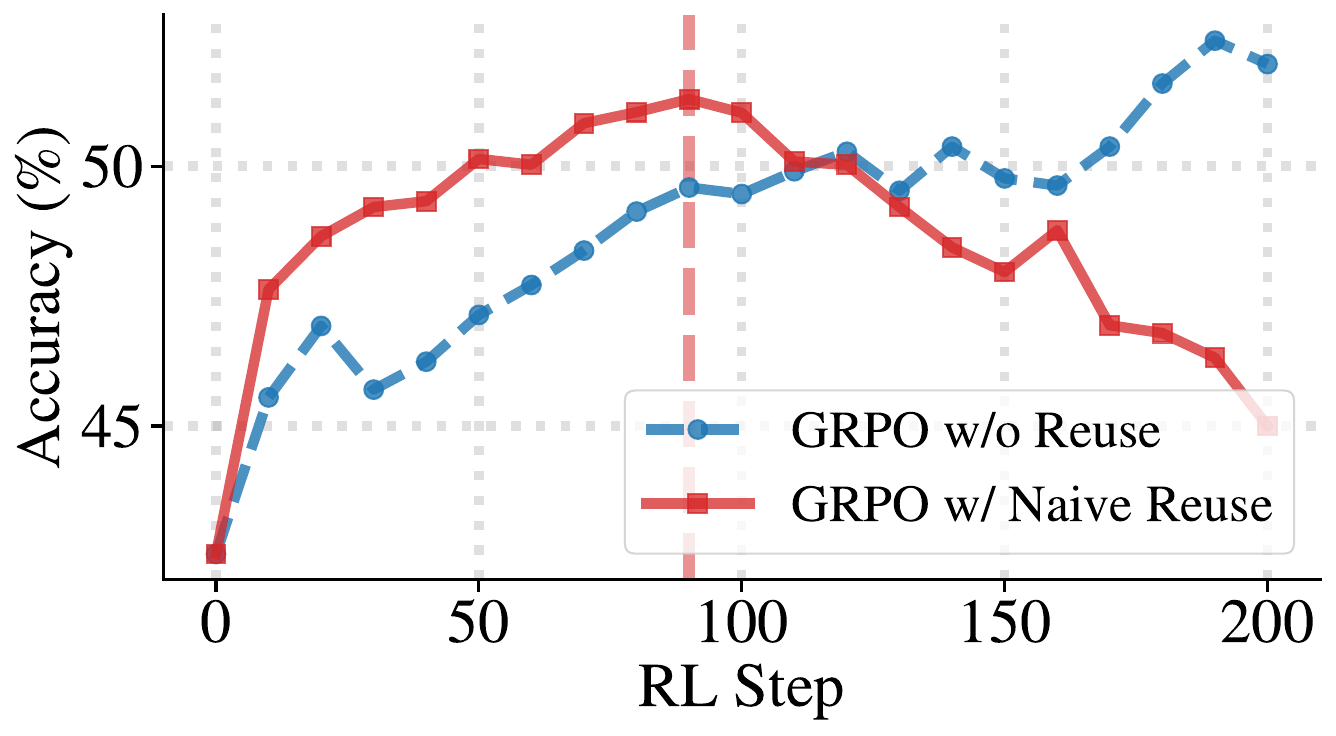}&
    \includegraphics[width=0.32\linewidth]{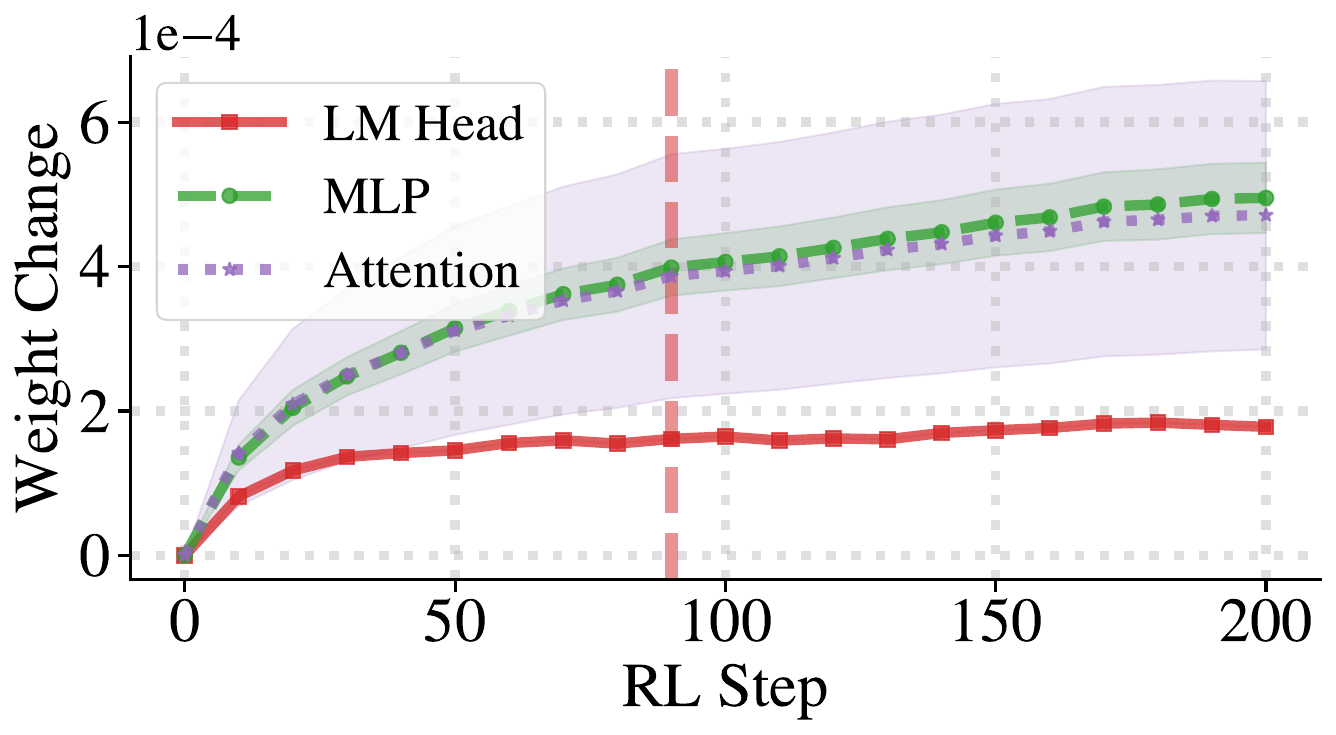}&
    \includegraphics[width=0.32\linewidth]{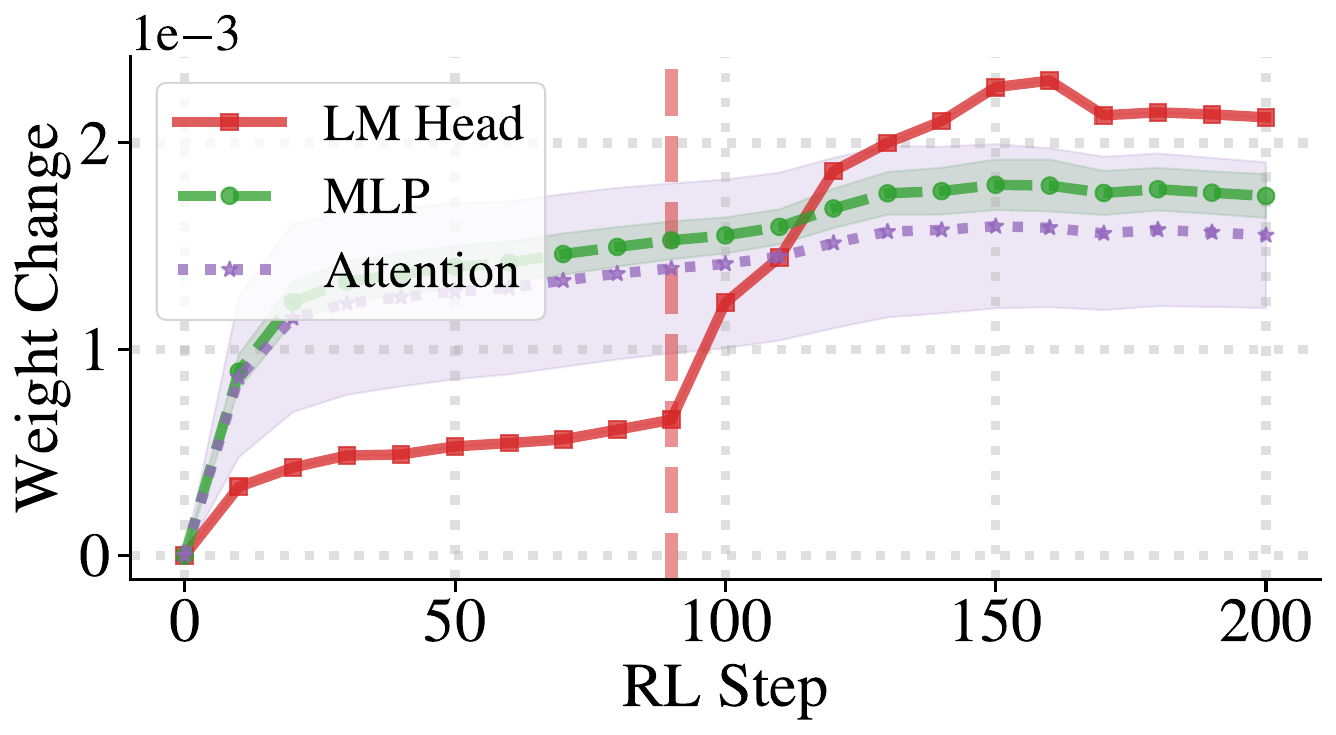}\\[-2pt]
    \raisebox{0.5cm}{\rotatebox{90}{\textbf{Llama3.1-8B-Ins.}}}&
    \includegraphics[width=0.32\linewidth]{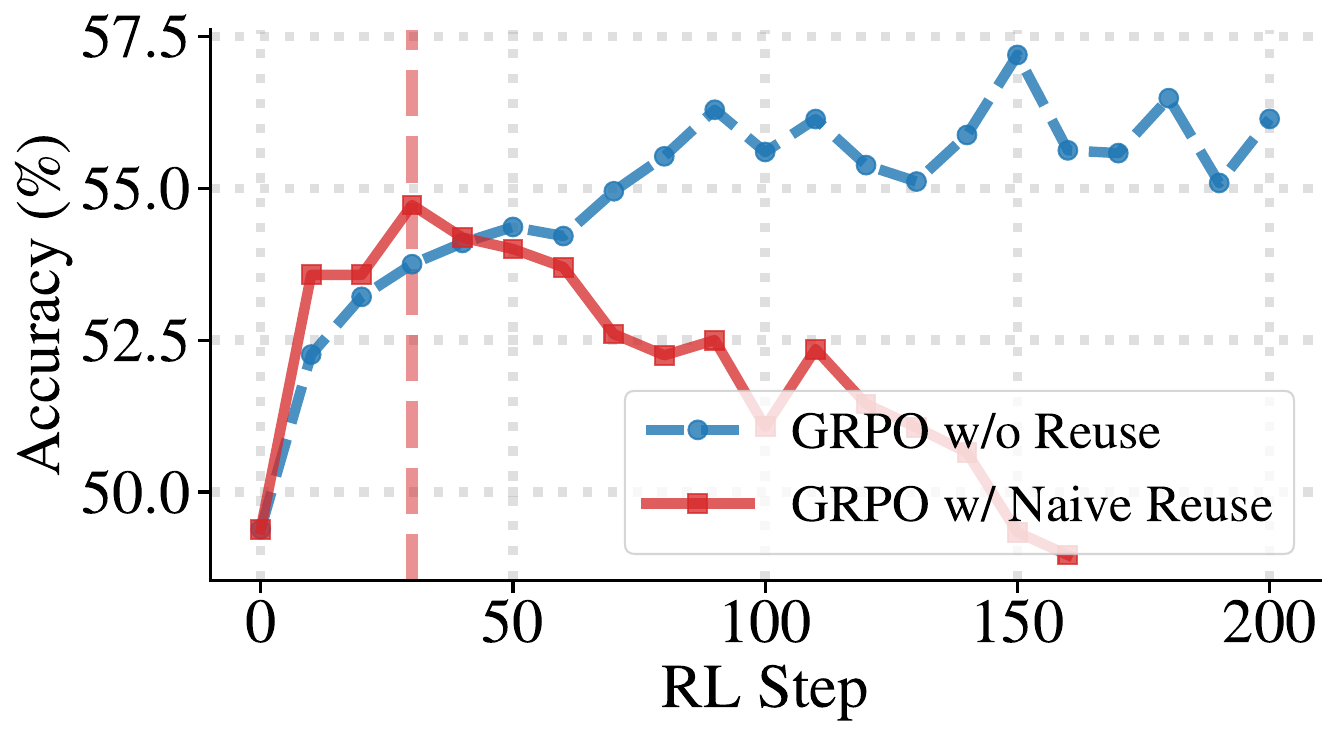}&
    \includegraphics[width=0.32\linewidth]{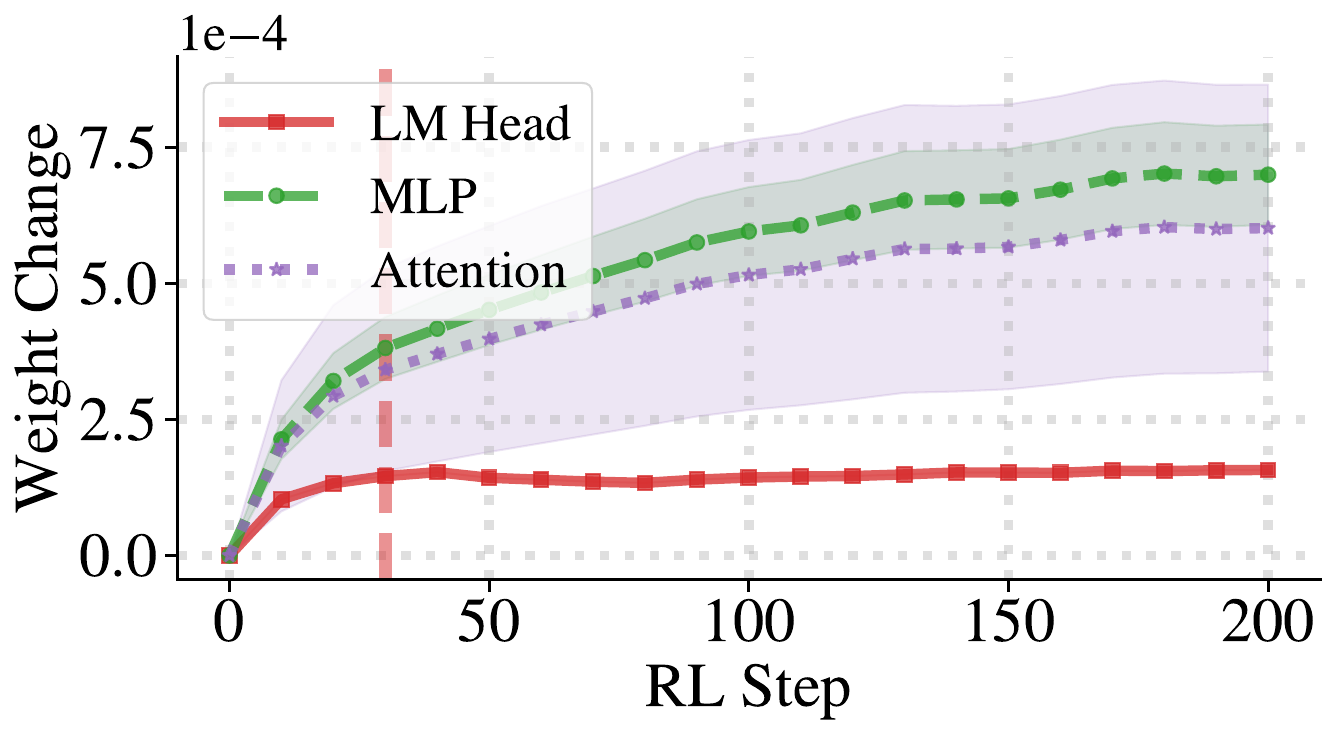}&
    \includegraphics[width=0.32\linewidth]{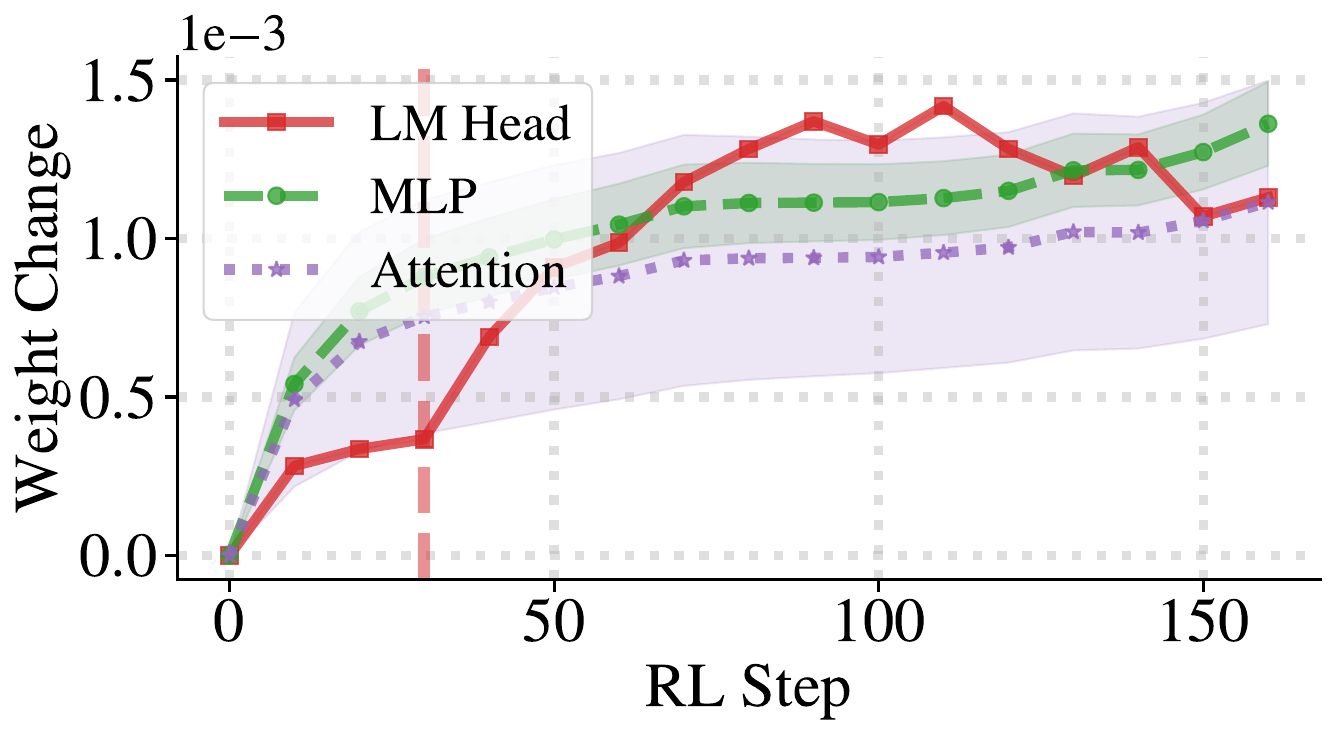}\\[-2pt]
    \end{tabular}
    \caption{\textbf{Illustration of the Disproportionate Weight Divergence (DWD) phenomenon} on math task across different LLMs. Relative weight change is defined as $\|W_t - W_{t-\Delta}\|_F / \|W_{\text{ref}}\|_F$ ($W_{\text{ref}}$: initial pretrained weight; $\Delta$: profiling interval, set to 10 RL steps); red dashed lines mark the onset of performance degradation for GRPO w/ Naive Reuse. Observation: \textit{Sample reuse accelerates early convergence, followed by a severe performance degradation synchronized with an abrupt, isolated surge in the LM Head — while intermediate layers (Attention and MLP) remain stable.}}
    \label{fig:empirical_dwd_more_llm}
\end{figure}

\newpage
\section{Stability of DGG across Random Seeds}
\label{app:seed_stability}

To verify that DGG's improvements are robust to randomness rather than 
artifacts of a specific run, we repeat the Math500 experiments on 
Qwen3-4B-Instruct and Qwen2.5-7B-Instruct with three different random 
seeds. Figure~\ref{fig:seed_stability} reports the mean accuracy with 
shaded standard deviation across seeds. DGG consistently outperforms the 
single-use baseline and avoids the collapse of Naive Reuse on every seed, 
with tight standard-deviation bands throughout training---confirming that 
DGG's gains are stable and reproducible.

\begin{figure}[htbp]
    \centering\scriptsize\renewcommand\arraystretch{0.5}
    \setlength{\tabcolsep}{10pt}
    \begin{tabular}{cc}
    \includegraphics[width=0.4\linewidth]{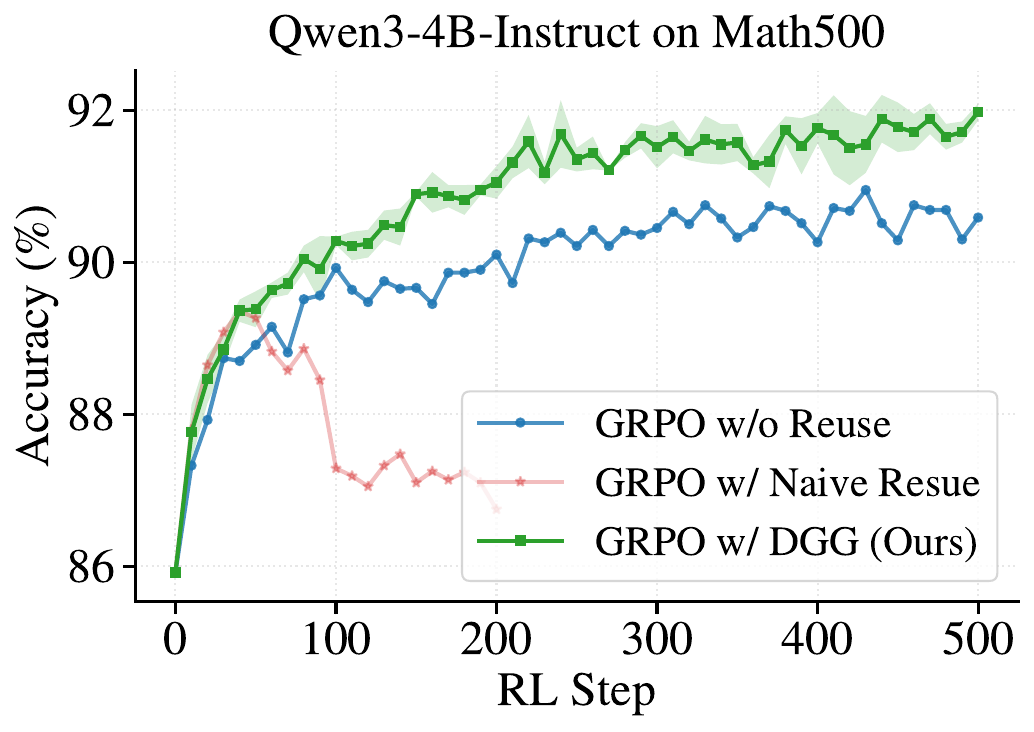}&
    \includegraphics[width=0.4\linewidth]{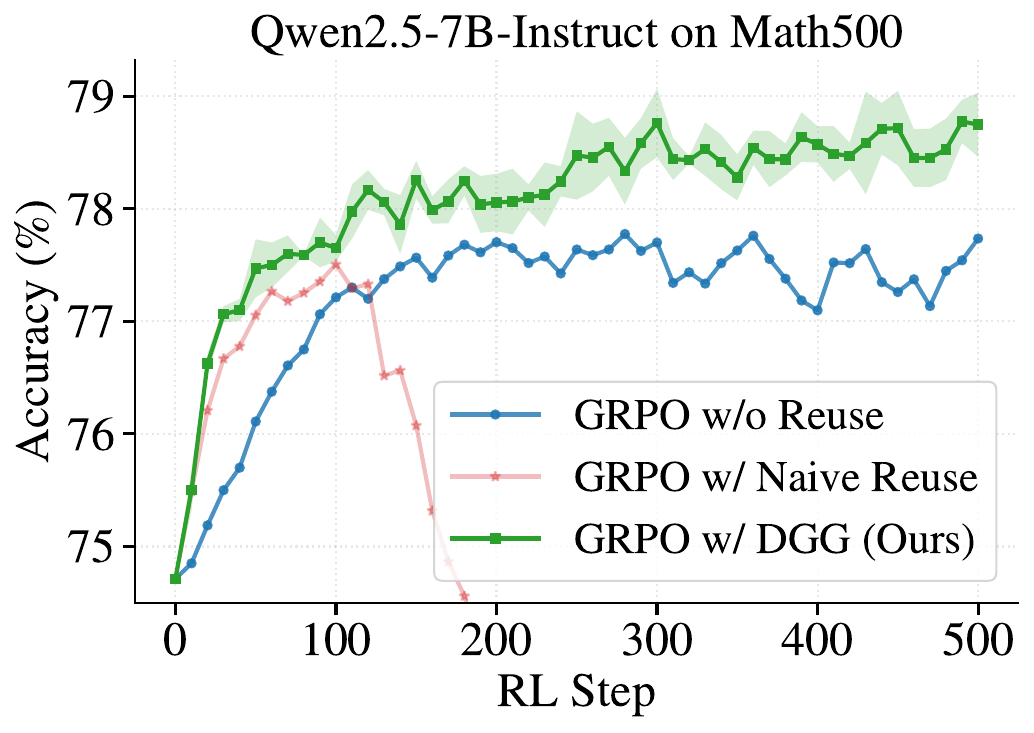}\\[-2pt]
    \end{tabular}
\caption{\textbf{Stability of DGG across three random seeds} on 
    Qwen3-4B-Instruct and Qwen2.5-7B-Instruct (Math500). Solid lines show 
    the mean accuracy and shaded regions denote one standard deviation 
    across seeds. Observation: \emph{DGG consistently outperforms the 
    single-use baseline and avoids Naive Reuse's collapse across all 
    seeds, confirming the reproducibility of its gains.}}
\label{fig:seed_stability}
\end{figure}

\section{More Experimental Details}
\label{app:more_details}
\subsection{Training Details.}
Mathematical reasoning experiments are conducted on 32 NVIDIA H800 80GB 
GPUs, and agentic experiments (ALFWorld, WebShop, and search-augmented 
QA) are conducted on 16 NVIDIA H800 80GB GPUs. All methods are configured with identical hyperparameters within each 
task. The learning rate is $1\mathrm{e}{-6}$ and the sampling temperature 
is $0.7$ throughout. For mathematical reasoning, the maximum prompt 
length is $2048$ tokens, the maximum response length is $4096$ tokens, 
the group size is $16$, the prompt batch size is 512, and the KL coefficient is $0$. For ALFWorld, 
the maximum prompt length is $8192$ tokens, the maximum response length 
is $8192$ tokens, the group size is $8$, the KL coefficient is $0.001$, the prompt batch size is 16, 
and the maximum interaction steps is $50$. For WebShop, the maximum 
prompt length is $4096$ tokens, the maximum response length is $4096$ 
tokens, the group size is $8$, the KL coefficient is $0.001$, the prompt batch size is 32, and the 
maximum interaction steps is $30$. For search-augmented QA, we use E5~\cite{wang2022text} as the retriever, following~\cite{feng2026groupingroup}. The maximum 
prompt length is $2048$ tokens, the maximum response length is $8192$ 
tokens, the group size is $8$, the KL coefficient is $0.001$, the prompt batch size is 64, and the 
maximum interaction steps is $10$.

\subsection{Implementation Details of DGG.} 
\label{app:our_details}
At each gradient step, we compute the \texttt{lm\_head} gradient energy 
$g_t = \|G^{\text{lm}}_t\|_F^2$ and its step-wise increment 
$\Delta g_t = g_t - g_{t-1}$. The running statistics $(\mu_t, \sigma_t)$ 
are maintained as the sample mean and standard deviation of 
$\Delta g_t$ over a sliding window of the most recent $W = 20$ past 
increments (excluding the current step), and the Z-score 
$z_t = (\Delta g_t - \mu_t)/(\sigma_t + \varepsilon)$ with 
$\varepsilon = 10^{-8}$ is evaluated against the threshold 
$\tau$. Gating is activated only when (i) the window is fully 
populated ($t \geq W$) and (ii) the current reuse index satisfies 
$k > 1$; under these conditions, $z_t > \tau$ triggers gradient discard 
and reuse termination as described in Section~\ref{sec:dgg}. All other 
steps follow standard GRPO. The full procedure is summarized in 
Algorithm~\ref{alg:dgg}. We set the maximum reuse $K = 4$ and select the anomaly threshold $\tau$ from the candidate set $\{0.1, 0.5, 1.0\}$.

\section{Relative Weight Change throughout Full RL Training}
\label{app:full_weight_change}

The main paper reports the relative weight change during the first 
$200$ RL steps to clearly visualize the onset of the DWD phenomenon 
under sample reuse. To rule out the possibility that the abrupt 
\texttt{lm\_head} surge is caused by the increased number of gradient updates 
rather than sample reuse itself, we extend the profiling on Math500 to 
the full training trajectory under the GRPO w/o Reuse regime. As shown 
in Figure~\ref{fig:full_weight_change}, the relative weight change of 
all components---including the \texttt{lm\_head}---grows smoothly and remains 
stable throughout the entire training process, with no abrupt surge. 
This serves as a controlled comparison to the sample-reuse setting 
(Figure~\ref{fig:empirical_dwd_qwen3-4b-instruct}): under matched gradient-update budgets 
but without sample reuse, the \texttt{lm\_head} surge does not occur, confirming 
that sample reuse---not the number of updates---is the direct cause of 
the abrupt weight divergence underlying DWD.

\begin{figure}[h]
    \centering\scriptsize\renewcommand\arraystretch{0.5}
    \setlength{\tabcolsep}{10pt}
    \begin{tabular}{cc}
    \includegraphics[width=0.42\linewidth]{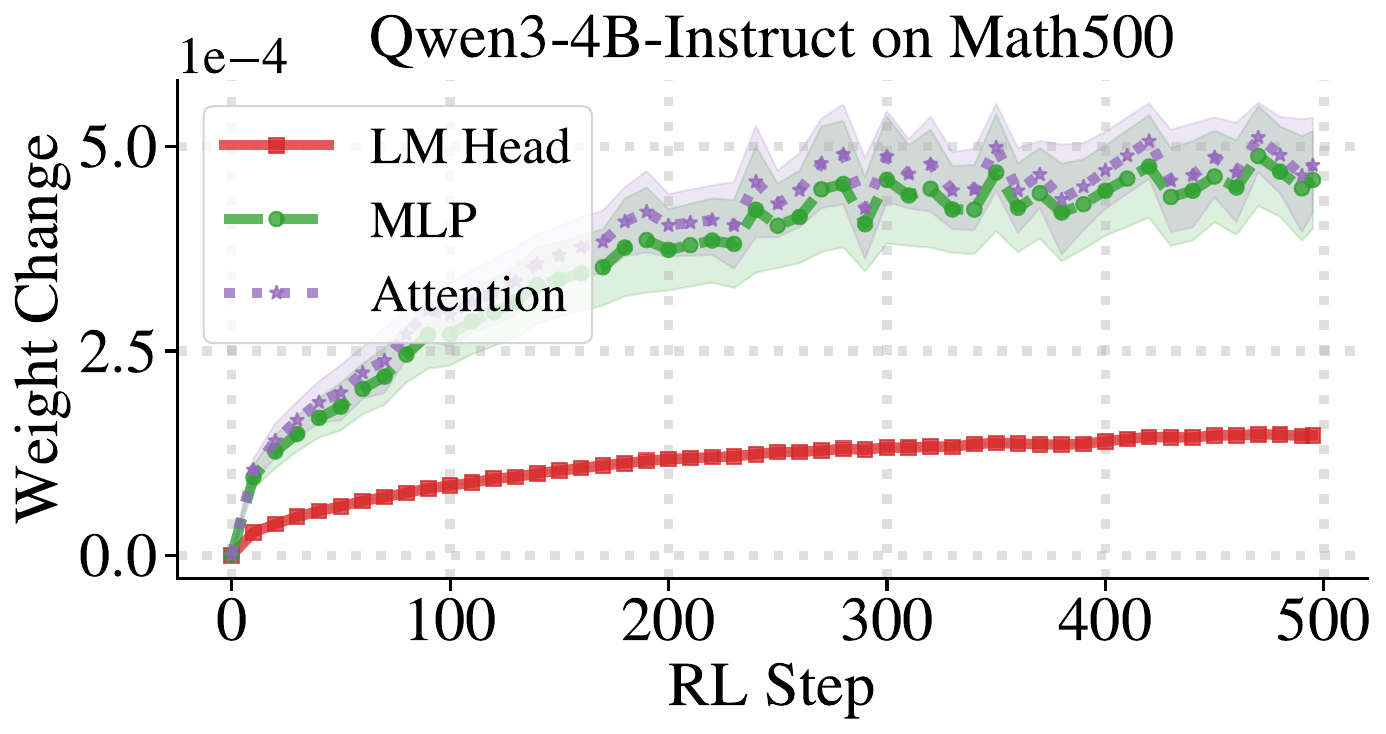}&
    \includegraphics[width=0.38\linewidth]{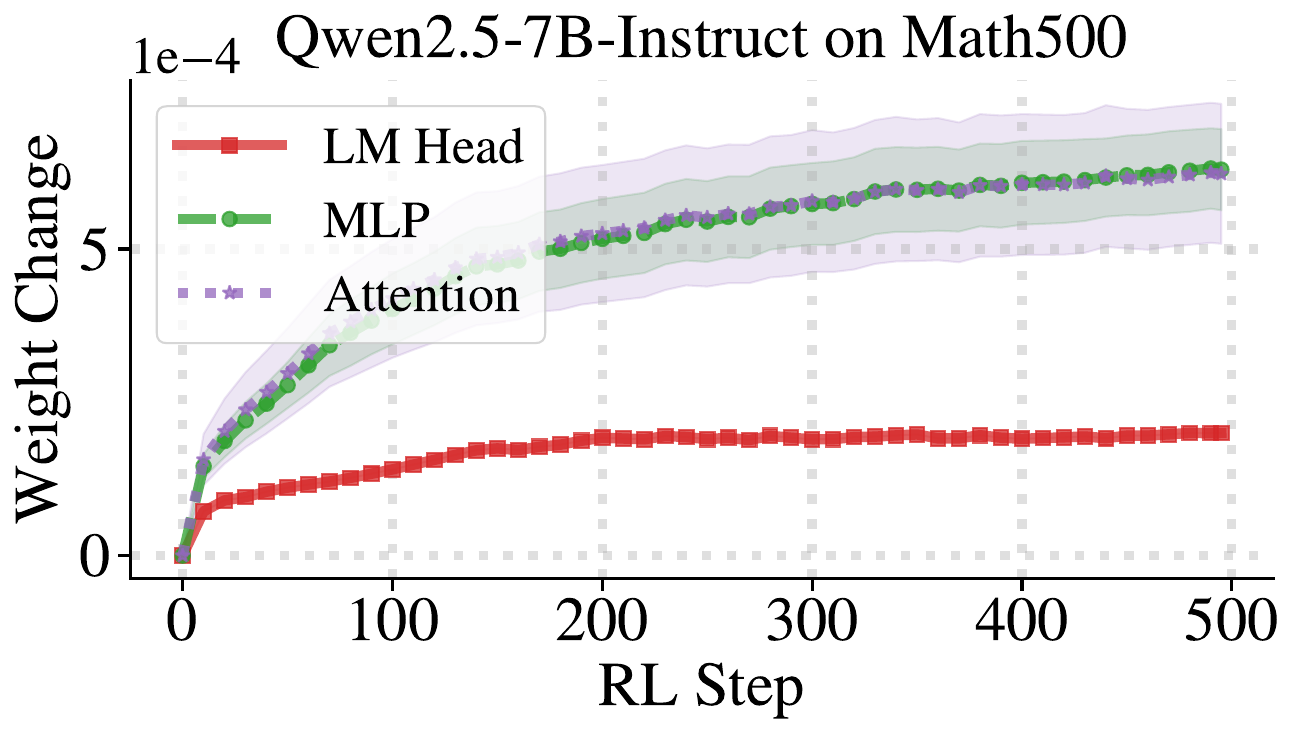}\\[-2pt]
    \end{tabular}
\caption{\textbf{Relative weight change throughout the full RL 
    training process} on Qwen3-4B-Instruct and Qwen2.5-7B-Instruct 
    (Math500), under the GRPO w/o Reuse regime. Observation: 
    \emph{Without sample reuse, the relative weight change of all 
    components---including the \texttt{lm\_head}---grows smoothly throughout 
    training, confirming that the abrupt \texttt{lm\_head} surge observed under 
    sample reuse is caused by reuse itself rather than by the increased 
    number of gradient updates.}}
\label{fig:full_weight_change}
\end{figure}


\end{document}